\newcommand{\commentout}[1]{}
\newcommand{\eat}[1]{}
\newcommand{\topic}[1]{\vspace{0.2cm}\noindent{\bf {#1}:}}
\newcommand{\calA}{{\mathcal A}}
\newcommand{\calH}{{\mathcal H}}
\newcommand{\calF}{{\mathcal F}}
\newcommand{\calP}{{\mathcal P}}
\newcommand{\imag}{{\mathfrak{i}}}
\newcommand{\Exp}{{\mathbb{E}}}
\newcommand{\Var}{\operatorname{Var}}
\DeclareMathOperator{\Mdis}{Mom}
\DeclareMathOperator{\tran}{Tran}
\DeclareMathOperator{\poly}{poly}
\newcommand{\lip}{1\text{-}\mathsf{Lip}}
\newcommand{\mix}{\boldsymbol{\vartheta}}
\newcommand{\pmix}{\boldsymbol{\phi}}
\newcommand{\hpmix}{\widehat{\pmix}}
\newcommand{\tpmix}{\widetilde{\pmix}}
\newcommand{\tmix}{\widetilde{\mix}}
\newcommand{\bmix}{\overline{\mix}}
\newcommand{\hmix}{\widehat{\mix}}
\newcommand{\cmix}{\widecheck{\mix}}
\newcommand{\mixnew}{\mix_{\text{new}}}
\newcommand{\simplex}{\Delta}
\newcommand{\signedsimplex}{\Sigma}
\newcommand{\convex}{\mathcal{C}}
\newcommand{\Spike}{\mathrm{Spike}}
\newcommand{\veca}{\boldsymbol{\alpha}}
\newcommand{\vecb}{\boldsymbol{\beta}}
\newcommand{\vecw}{\boldsymbol{w}}
\newcommand{\vecx}{\boldsymbol{x}}
\newcommand{\vecc}{\boldsymbol{c}}
\newcommand{\vecr}{\boldsymbol{r}}
\newcommand{\vecy}{\boldsymbol{y}}
\newcommand{\vecp}{\boldsymbol{p}}
\newcommand{\vecq}{\boldsymbol{q}}
\newcommand{\vecg}{\boldsymbol{g}}
\newcommand{\proj}[1]{\Pi_{#1}}
\newcommand{\ha}{\widehat{\alpha}}
\newcommand{\hveca}{\widehat{\veca}}
\newcommand{\ba}{\overline{\alpha}}
\newcommand{\bveca}{\overline{\veca}}
\newcommand{\ta}{\widetilde{\alpha}}
\newcommand{\tveca}{\widetilde{\veca}}
\newcommand{\cveca}{\widecheck{\veca}}
\newcommand{\hb}{\widehat{\beta}}
\newcommand{\hvecb}{\widehat{\vecb}}
\newcommand{\tb}{\widetilde{\beta}}
\newcommand{\tvecb}{\widetilde{\vecb}}
\newcommand{\bb}{\widebar{\beta}}
\newcommand{\bvecb}{\widebar{\vecb}}
\newcommand{\hw}{\widehat{w}}
\newcommand{\hvecw}{\widehat{\vecw}}
\newcommand{\bw}{\widebar{w}}
\newcommand{\bvecw}{\widebar{\vecw}}
\newcommand{\tw}{\widetilde{w}}
\newcommand{\tvecw}{\widetilde{\vecw}}
\newcommand{\cvecw}{\widecheck{\vecw}}
\newcommand{\hc}{\widehat{c}}
\newcommand{\hvecc}{\widehat{\vecc}}
\newcommand{\tvecy}{\widetilde{\vecy}}
\newcommand{\hvecy}{\widehat{\vecy}}
\newcommand{\hy}{\widehat{y}}
\newcommand{\ty}{\widetilde{y}}
\newcommand{\tM}{\widetilde{M}}
\newcommand{\gap}{\xi}
\newcommand{\noise}{\xi}
\newcommand{\minsep}{\zeta}
\newcommand{\R}{\mathbb{R}}
\newcommand{\C}{\mathbb{C}}
\newcommand{\bt}{\mathbf{t}}
\renewcommand{\d}{\mathrm{d}}
\newcommand{\eps}{\epsilon}
\DeclareMathOperator{\Span}{Span}
\DeclareMathOperator{\sspan}{Span}
\newcommand{\supp}{\mathrm{Supp}}
\newenvironment{proofof}[1]{\begin{proof}[Proof of #1]}{\end{proof}}
\newtheorem{theorem}{Theorem}[section]
\newtheorem{lemma}[theorem]{Lemma}
\newtheorem{proposition}[theorem]{Proposition}
\theoremstyle{remark} \newtheorem{remark}[theorem]{Remark}}
\theoremstyle{definition} \newtheorem{definition}[theorem]{Definition}}
\definecolor{olive}{rgb}{0.3, 0.4, .1}
\definecolor{fore}{RGB}{249,242,215}
\definecolor{back}{RGB}{51,51,51}
\definecolor{title}{RGB}{255,0,90}
\definecolor{dgreen}{rgb}{0.,0.6,0.}
\definecolor{gold}{rgb}{1.,0.84,0.}
\definecolor{JungleGreen}{cmyk}{0.99,0,0.52,0}
\definecolor{BlueGreen}{cmyk}{0.85,0,0.33,0}
\definecolor{RawSienna}{cmyk}{0,0.72,1,0.45}
\definecolor{Magenta}{cmyk}{0,1,0,0}
\title{Efficient Algorithms for Sparse Moment Problems without Separation}
\author{Zhiyuan Fan\thanks{fan-zy19@mails.tsinghua.edu.cn}\,\,\,\,\, }
\author{\,\,\, Jian Li\thanks{lijian83@mail.tsinghua.edu.cn} }
\affil{IIIS, Tsinghua University}
\begin{document}

\maketitle

\begin{abstract}%
We consider the sparse moment problem of learning a $k$-spike mixture in high-dimensional space from its noisy moment information in any dimension. We measure the accuracy of the learned mixtures using transportation distance. Previous algorithms either assume certain separation assumptions, use more recovery moments, or run in (super) exponential time. Our algorithm for the one-dimensional problem (also called the sparse Hausdorff moment problem) is a robust version of the classic Prony's method, and our contribution mainly lies in the analysis. We adopt a global and much tighter analysis than previous work (which analyzes the perturbation of the intermediate results of Prony's method). A useful technical ingredient is a connection between the linear system defined by the Vandermonde matrix and the Schur polynomial, which allows us to provide tight perturbation bound independent of the separation and may be useful in other contexts. To tackle the high-dimensional problem, we first solve the two-dimensional problem by extending the one-dimensional algorithm and analysis to complex numbers. Our algorithm for the high-dimensional case determines the coordinates of each spike by aligning a 1d projection of the mixture to a random vector and a set of 2d projections of the mixture. Our results have applications to learning topic models and Gaussian mixtures, implying improved sample complexity results or running time over prior work.
\end{abstract}


\section{Background}

We study the moment problem in which
we are given the (noisy) information of the moments of a measure $\mu$, and our goal is to recover $\mu$ up to certain accuracy. The moment problem is a classical problem studied for over a century in mathematics, statistics, computer science, physics, control theory, medical image, etc. Various versions of the problem have found numerous applications in different domains (e.g., \citet{schmudgen2017moment, lasserre2009moments,gorenflo2002moment,pintarelli2016numerical,bessis1976positivity,natterer2001mathematics}). 
We study the sparse version of the problem, in which the underlying measure is a  mixture of $k$ discrete distributions over a common discrete domain $[d] = \{1,2,\cdots,d\}$.
In particular, we consider mixture $\mix$ which is a $k$-spike distribution supported on the $d-1$ dimensional simplex
$\simplex_{d-1}=\{\vecx=(x_1,\cdots, x_d)\in \R^d \mid \sum_{i=1}^d x_i=1, x_i\geq 0 \,\,\forall i\in [d]\}$. Each point in $\simplex_{d-1}$ represents a discrete distribution over $[d]$.
We first introduce the sparse moment problem for $d=2$
(also known as the sparse {\em Hausdorff moment} problem
or the $k$-coin model
\citep{schmudgen2017moment,li2015learning,gordon2020sparse}).

\topic{The sparse Hausdorff moment problem}
For $d=2$, the mixture is a $k$-spike distribution supported on $\simplex_1\cong[0,1]$.
We call the model {\em the $k$-coin model} (i.e., a mixture of $k$ Bernoulli distributions).
Denote the underlying mixture as $\mix=(\veca,\vecw)$.
Here, $\veca=\{\alpha_1,\alpha_2,\cdots,\alpha_k\}$
and $\vecw=\{w_1,w_2,\cdots,w_k\}\in \simplex_{k-1}$,
where $\alpha_i\in [0,1]$
specifies the $i$th Bernoulli distribution
and $w_i\in [0,1]$ is the probability of mixture component $\alpha_i$.
For mixture $\mix=(\veca,\vecw)$, 
the $t$th moment is defined as 
$
M_{t}(\mix)=\int_{[0,1]} \alpha^t \mix(\d \alpha) =\sum_{i=1}^k w_i \alpha_i^{t}.
$
Our goal is to recover the unknown parameters $(\veca,\vecw)$ of the mixture $\mix$ given the first $K$ noisy moment values $M'_{t}$ with $|M'_{t}-M_{t}(\mix)|_\infty\leq \gap$ for $1\leq t\leq K$.

We call $K$ the {\em moment number} and $\gap$ the moment precision/accuracy, both of which are essential parameters of the problem. 
We measure the quality of our estimation $\tmix$ 
in terms of the transportation distance between probability
distributions, i.e., $\tran(\tmix, \mix)\leq O(\epsilon)$.
Using transportation distance as the metric is advantageous for several reasons:
(1) if the desired accuracy $\eps$ is much smaller than the minimum separation $\minsep=\min\|\alpha_i-\alpha_j\|$, the estimation $\tmix$ must be a per-spike recovery since it must contain a spike that is sufficiently close to an original spike in $\mix$ (given the weight of the spike is lower bounded);
(2) if the desired accuracy $\eps$ is larger than $\minsep$ or the minimum weight $w_{\min}$, we are allowed to be
confused by two very close spikes or miss a spike with a very small weight, thus potentially avoiding the inverse dependency on $\minsep$ and $w_{\min}$, which are otherwise unavoidable if we must recover every spike.
Generally, for this problem, we are interested in understanding
the relations and trade-offs among the moment number $K$, the moment accuracy/precision $\gap$,
and the accuracy $\epsilon$ of our estimation
$\tmix$ (i.e., transportation distance at most $\epsilon$), and designing efficient algorithms for recovering $\mix$ to the desired accuracy.

\topic{The sparse moment problem in higher dimensions}
For the higher dimensional case, 
suppose the underlying mixture is 
$\mix=(\veca,\vecw)$ where $\veca=\{\veca_1,\ldots, \veca_k\}$ ($\veca_i\in \simplex_{d-1}$ which are $d$-dimensional points in a bounded domain) and 
$\vecw=\{w_1,\ldots, w_k\}\in \simplex_{k-1}$.
For vector $\veca_i=(\alpha_{i,1},\ldots, \alpha_{i,d})$ and multi-index $\bt=(t_1,\cdots,t_d)\in \mathbb{Z}_+^d$, we denote monomial $\veca_i^{\bt}=\alpha_{i,1}^{t_1}\alpha_{i,2}^{t_2}\cdots \alpha_{i,d
}^{t_d}$, and
the $\bt$-moment of $\mix$ as 
$
M_{\bt}(\mix)=\int_{\simplex_{d-1}} \veca^\bt \mix(\d \veca) 
=\sum_{i=1}^k w_i \veca_i^{\bt}.
$ 
In the sparse moment problem, we are given noisy access of $\bt$-moment
$M_{\bt}(\mix)$ for $\|\bt\|_1\leq K$. Here we also call $K$ the {\em moment number}.
Since there are $K^{O(d)}$ different moments, when $d$ is large, we also consider the case
where we have noisy access to the moments of the projections of $\mix$ onto  lower-dimensional subspaces (typically 1d or 2d projections).
An affine transformation of the samples in concrete applications such as topic models can often obtain such noisy moments. 

\subsection{Applications and connections to prior work}
The Hausdorff moment problem and its higher dimensional version have many applications in different areas, such as mathematics, statistics, computer science, physics, etc.
In recent years, the noisy sparse moment problem has found applications
to a variety of unsupervised learning scenarios, including learning topic models \citep{Hof99,BNJ03,PRTV97}, learning Gaussian mixtures \citep{wu2020optimal}, collaborative filtering \citep{HP99,kleinberg2008using}, learning a mixture of product distributions \citep{FeldmanOS05,gordon2021source} and causal inference \citep{gordon2021identifying}.
Now, we first discuss the connection between the sparse moment problem and learning mixture models. Then we list some applications in other areas.

\topic{Applications to learning mixture models.}
Indeed, several prior results for the sparse moment problem we study in this paper were presented explicitly or implicitly in the context of learning mixture models
(e.g., \citet{rabani2014learning,li2015learning,Kim2019HowMS,gordon2020sparse,wu2020optimal,doss2020optimal}).
Here we use the problem of learning topic model as an example: 
We are given a corpus of documents. We adopt the popular 
“bag of words” model and take each document as an unordered multiset of words. 
The assumption is that a small number of $k$ “pure” topics are distributed over the underlying vocabulary of $d$ words. A $K$-word document (i.e., a string in $[d]^K$) is generated by first selecting a
topic $p\in\simplex_{d-1}$ from the mixture $\mix$, and then sampling
$K$ words from this topic.
$K$ is called {\em the snapshot number} and we call such a sample a {\em $K$-snapshot} of $p$. 
For all $\|\bt\|_1\leq K$, we can obtain noisy estimates of $M_{\bt}(\mix)$ or $M_{\bt}(\proj{R}(\mix))$
(where $\proj{R}(\mix)$ is the projection of $\mix$ onto a lower dimensional subspace $R$) using 
the $K$-snapshots (i.e., the documents in the corpus).
We aim to recover the mixture $\mix$ using as few samples as possible.
Once we have computed the noisy moments using samples, the problem reduces to the sparse moment problem.
We will discuss the applications
to learning topic models in Section~\ref{sec:topic} and to learning Gaussian mixtures in Section~\ref{sec:Gaussian}.

\topic{Other Applications.}
We mention a few other problems where our results are directly or potentially useful:

\begin{enumerate}
    \item 
    The moment problem can be used for recovering the initial condition for a class of heat diffusion equations \citep{gorenflo2002moment}: Suppose the initial condition $f(x,t)$ at $t=0$ is a $k$-spike distribution (e.g., $k$ heat sources), and we can measure $f(x,t)$ at $t=1$ (with measurement noise), and the task is to recover the initial $k$-spike distribution \citep{gorenflo2002moment}. 
    See Remark~\ref{remark:heatequation} for more details.
    In some other PDE problems, the input moment information is computed using a particular numerical algorithm (now the noise comes from the error/precision of the numerical algorithms), and the moment problem is used as a sub-procedure for solving the PDE. For example, the moment problem can be used for approximately solving a class of Poisson equations \citep{pintarelli2016numerical}, in which the moment information can be computed using numerical integration.
    \item
    The Hausdorff moment problem is used as a sub-procedure for bounding/locating the zeros of the partition function of the lattice Ising model (Lee-Yang theory) \citep{bessis1976positivity}. Here the moments can be computed exactly for some special lattices and approximated for  more general lattices. 
    \item 
    Our algorithm for higher dimensions is also closely related to computerized tomography, in which we would like to recover the 3d (or higher dimensional) object from its 1d or 2d projections \citep{natterer2001mathematics}. Existing techniques used the information from a continuous set (or a net) of 1d or 2d projections for recovery (e.g., the classical Radon transform). Our technique (see Theorem~\ref{thm:highdim-kspike} and Algorithm~\ref{alg:dim3}) only use only $O(d)$ 1d and 2d projections. 
\end{enumerate}



\section{Our Contributions}

\noindent
{\bf The $k$-Coin model.}
We first discuss prior work on the $k$-coin model ($\mix$ is supported on $[0,1]$),
and our results.
\citet{rabani2014learning}
showed that recovering a $k$-spike mixture
requires at least the first $K=2k-1$ moments in the worst case
(even without any noise).
Moreover, they showed that for the topic learning problem,
if one use $K=c(2k-1)$ snapshot samples  (for any constant $c\geq 1)$ 
and wish to achieve an accuracy 
$O(1/k)$ in terms of  transportation distance,
$e^{\Omega(K)}$ samples are required
(or equivalently, the moment accuracy should be at most $e^{-\Omega(K)}$).
On the positive side, 
they solved the problem using sample complexity 
$ \max\{(1/\minsep)^{O(k)}, (k/\eps)^{O(k^2)}\}$
(or moment accuracy $\min\{\minsep^{O(k)}, (\eps/k)^{O(k^2)}\}$),
where $\minsep$ is a lower bound of the minimum separation). In fact, their algorithm is a variant of the classic Prony's method
and the (post-sampling) running time is $\poly(k)$ where their bottleneck is solving a special convex quadratic programming instance. 
\citet{li2015learning} provided a linear programming-based algorithm 
that requires the moment accuracy $(\eps/k)^{O(k)}$ (or sample complexity
$(k/\eps)^{O(k)}$), matching the lower bound in \citet{rabani2014learning}. 
However, after computing the noisy moments, the running time of their algorithm  
is super-exponential $k^{O(k^2)}$.
Motivated by a problem in
population genetics that could be reduced to the $k$-coin mixture problem,
\citet{Kim2019HowMS} analyzed the Matrix Pencil Method, 
which requires moment accuracy $\minsep^{O(k)} \cdot w_{\min}^{2} \cdot \epsilon$.
The matrix pencil method requires solving a generalized eigenvalue problem, 
which needs $O(k^3)$ time.
\citet{wu2020optimal} studied the same problem in the context of learning Gaussian mixtures. In fact, they showed that learning 1d Gaussian mixture models with the same variance can be reduced to learning the $k$-coin model. Their algorithm achieves the optimal dependency on 
the moment accuracy without separation assumption. 
Their recovery algorithm is based on SDP and runs in
time $O(k^{2\omega})$ for some $\omega < 2.373$ using the state-of-the-art SDP solver \citep{huang2022solving}.
Recently, \citet{gordon2020sparse} showed that
under the separation assumption, 
it is possible to recover all $k$ spikes up to accuracy $\epsilon$, using 
the first $2k$-moments with accuracy $\minsep^{O(k)} \cdot w_{\min} \cdot \epsilon$. Their algorithm is also a variant of Prony's method and it runs in time $O(k^{2+o(1)})$.
We summarize the prior and our results in Table~\ref{tab:1dim-result}.




\begin{table}[t] 
    \centering
    \begin{tabular}{|l|c|c|c|c|c|}
    \hline 
    Reference & $K$ & Moment Accuracy $(\noise)$ &  Running Time  & Separation\\
    \hline
    \citet{rabani2014learning} & $2k-1$ &  $\min\{\minsep^{O(k)}, (\epsilon/k)^{O(k^2)}\}$ & $\textrm{poly}(k)$ & Required \\
    \hline 
    \citet{Kim2019HowMS} & $2k-1$ &  $\minsep^{O(k)} \cdot w_{\min}^{2} \cdot \epsilon$ & $O(k^3)$ & Required\\
    \hline 
    \citet{gordon2020sparse} & $2k$ &  $\minsep^{O(k)} \cdot w_{\min} \cdot \epsilon$ & $O(k^{2+o(1)})$ & Required\\
    \hline 
    \citet{li2015learning} & $2k-1$ & $(\epsilon/k)^{O(k)}$ &  $(k/\epsilon)^{O(k^2)}$ & No Need\\
    \hline 
    \citet{wu2020optimal} & $2k-1$ & $(\epsilon/k)^{O(k)}$ & $O(k^{2\omega})$ & No Need\\
    \hline 
    Theorem~\ref{thm:1dim-kspikecoin} & $2k-1$ &  $(\epsilon/k)^{O(k)}$ & $O(k^{2})$ & No Need\\
    \hline
    \end{tabular}
    \caption{Algorithms for the $k$-coin problem where $K$ is the moment number.
    The last column indicates whether the algorithm needs
    the separation assumption.
    In particular, the algorithms that require the separation assumption
    need to know $\minsep$ and   $w_{\min}$ where $\minsep$ is the minimum separation, i.e., 
    $\minsep\leq \min_{i\ne j}|\alpha_i-\alpha_j|\leq 1/k$, and $w_{\min}\leq \min_i w_i$. 
    We measure the running time in terms of the arithmetic operations
    In addition, 
    $\poly(k)$ refers to $O(k^c)$ for a relatively large constant $c$ and $\omega$ refers to the smallest real number such that two $k \times k$ matrices can be multiplied using $O(k^{\omega+o(1)})$ operations. We note that there is a trivial lower bound $\omega \geq 2$ while the best-known upper bound is $\omega \leq 2.373$ \citep{alman2021refined}.
    }
    \label{tab:1dim-result}
\end{table}



\begin{theorem}
(The $k$-Coin model, informal)
Let $\mix$ be an arbitrary $k$-spike distribution over $[0,1]$.
Suppose we have noisy moments $M'_{t}$ such that 
$|M_t(\mix)-M'_{t}|\leq (\epsilon/k)^{O(k)}$ for $0\leq t \leq 2k-1$. 
We can obtain a mixture $\tmix$ such that $\tran(\tmix, \mix)\leq O(\epsilon)$ in $O(k^{2})$ arithmetic operations.
\end{theorem}

\noindent
{\bf Our techniques:}
Our algorithm for the $k$-coin model is also based on the classic Prony's method \citep{prony1795}.
Suppose the true mixture is $\mix = (\veca, \vecw)$ 
where $\veca = [\alpha_1, \cdots, \alpha_k]^{\top}$ and $\vecw = [w_1, \cdots, w_k]^{\top}$. 
In Prony's method, if we know the moment vector $M$ exactly,
every $\alpha_i$ can be recovered from the roots
of a polynomial whose coefficients are the entries of the eigenvector $\vecc$
(corresponding to eigenvalue 0)
of a Hankel matrix (see e.g., the matrix $\calH_{k+1}$ in \citet{gordon2020sparse}).
However, if we only know the noisy moment vector $M'$, we only get a perturbation of the original Hankel matrix.
Recent analyses of Prony's method \citep{rabani2014learning,gordon2020sparse} aim to upper bound the error of recovery from the perturbed Hankel matrix.  
If $\alpha_i$ and $\alpha_j$ ($i\ne j$) are very close, or any $w_i$ is very small, the second smallest eigenvalue of the Hankel matrix is also very close to $0$. From matrix perturbation theory, we know that the eigenvector $\vecc$ corresponding to eigenvalue $0$ is extremely sensitive to small perturbation (see e.g.,\citet{stewart1990matrix}).  
Hence, the recent analyses of Prony's method \citep{rabani2014learning,gordon2020sparse}
requires that $\alpha_i$ and $\alpha_j$ are separated by at least $\minsep$, and the minimum $w_i$ is lower bounded by $w_{\min}$ to ensure certain stability of the eigenvector (hence the coefficients of the polynomial), and the corresponding sample complexity becomes unbounded when $\minsep$ or $w_{\min}$ approaches to 0.
Our algorithm (Algorithm~\ref{alg:dim1}) can be seen as a robust version of Prony's method. Instead of computing the smallest eigenvector of the perturbed Hankel matrix, we solve a ridge regression to obtain a vector $\hvecc$, which plays a similar role as $\vecc$ (Line~\ref{line:dim1-hc} in Algorithm~\ref{alg:dim1}). However, instead of showing $\hvecc$ is close to $\vecc$ (in fact, they can be very different), we adopt a more global analysis to show that the moment vector of the estimated mixture is close to the true moment vector $M$, and by a moment-transportation inequality
(see Section~\ref{app:momenttransineq}),
we can guarantee the quantity of our solution in terms of transportation distance. 

Another technical challenge lies in bounding the error in recovering the weight vector $\vecw$. In the noiseless case, the weight is simply
the solution of $V_{\veca} \vecw =M$ where $V_{\veca}$ is a Vandemonde matrix
(see Equation \eqref{eq:matrixdef}).
It is known that Vandermonde matrices tend to be badly ill-conditioned \citep{gautschi1987lower,pan2016bad,Moitra15} (with a large dependency on the inverse of the minimal separation). Hence, using standard condition number-based analysis, slight perturbations of $V_{\veca}$ and $M$ may result in an unbounded error.
\footnote{
    Note that for Vandermonde matrix $V=(a^j_i)_{0\leq i<k,1\leq j\leq k}$, its determinant
    is $\det(V)=(-1)^{k(k-1)/2}\prod_{p<q} (a_p - a_q)$.  
    Hence, $\det(V)^{-1}$ depends inversely on the separation.
}
However, we show interestingly, in our case, that we can bound the error independent of the separation (Lemma~\ref{lm:step3}) via the connection between Vandermonde linear systems and Schur polynomial (Lemma~\ref{lm:dim1-step3-matrix}).
Our technology may be useful in analyzing the perturbation of Vandermonde linear systems
in other contexts as well.


\vspace{0.1cm}
\noindent
{\bf Higher Dimensions.}
For the high-dimensional case,
especially when $d$ is large, it is costly to obtain all moments 
$M_{\bt}(\mix)$ for all $\bt$ with $\|\bt\|_1\leq K$.
Indeed, most previous work
\citep{rabani2014learning,li2015learning,wu2020optimal,gordon2020sparse,doss2020optimal} on learning mixtures solved the problem using noisy moment information of the projections of $\mix$ onto some lower-dimensional subspaces (in fact, lines).
In particular, if one projects $\mix$ to $O(k)$ directions such as in \citet{rabani2014learning,wu2020optimal,gordon2020sparse},
existing techniques require the separation assumption for the recovery.
\footnote{
It is possible to convert an algorithm with a separation assumption
to an algorithm without a separation assumption by merging close-by spikes and removing 
spikes with small weights. However, the resulting algorithm is far from optimal. 
For example, sample complexity $(k/\eps)^{O(k)} \cdot w_{\min}^{-O(1)} \cdot \minsep^{-O(k)}$ in \citet{gordon2020sparse} can be converted to one without separation with sample complexity $(k/\eps)^{O(k^2)}$. We omit the details.
}
Other works without separation condition
such as \citet{li2015learning,doss2020optimal}
requires to project $\mix$ to a net of exponentially many directions. 

In our work, we assume we have noisy access to the moments
of the linear projections $\proj{R}(\mix)$ of $\mix$ onto lines or 2d planes (one may refer to the formal definition of $\proj{R}(\mix)$
in Section~\ref{subsec:algoforhigherdim}).
See Theorem~\ref{thm:highdim-informal}
for an informal statement of our result. 
A comparison between our result and prior work is summarized in
Table~\ref{tab:ndim-result}.
One can see that we either use much fewer projections
(hence better running time) or do not need the separation assumption.
We note that using 2d projections
is crucial for the improvement beyond prior work. 
We hope this idea of using 2d projections 
is helpful in solving other high-dimensional recovery problems.

\begin{table}[t] 
    \centering
    \begin{tabular}{|l|c|c|c|c|}
    \hline 
    Reference & \#Projections & Moment Accuracy $(\noise)$ & Running Time & Separation \\
    \hline
    \citet{rabani2014learning} & $O(k)$ & $(\eps/k)^{O(k^2)} \cdot (w_{\min} \zeta)^{O(k^2)}$ & $\textrm{poly}(k)$ & Required \\
    \hline 
    \citet{gordon2020sparse} & $O(k)$ & $(\eps/k)^{O(k)} \cdot w_{\min}^{O(1)} \cdot \minsep^{O(k)}$ & $\textrm{poly}(k)$ & Required\\
    \hline 
    \citet{wu2020optimal} & $O(d)$ & $(\eps/k)^{O(k)} \cdot w_{\min} \cdot \minsep$ & $O(dk^{2\omega})$ & Required\\
    \hline 
    \citet{li2015learning} & $(k/\epsilon)^{O(k)}$ & $(\eps/k)^{O(k^2)}$ & $(k/\epsilon)^{O(k^2)}$ & No Need\\
    \hline 
    \citet{doss2020optimal} & $(k/\epsilon)^{O(k)}$ & $(\eps/k)^{O(k)}$ & $(k/\epsilon)^{O(k)}$ & No Need\\
    \hline 
    Theorem~\ref{thm:highdim-kspike} & $O(d)$* & $(\eps/k)^{O(k)}$ & $O(dk^3)$ & No Need\\
    \hline
    \end{tabular}
    \caption{Algorithms for recovering the mixture $\mix$ in
    higher dimensional space. 
    We assume that dimension $d < k$ and we have oracle access to noisy moments of any 1d or 2d projections.
    The last column indicates whether the algorithm needs
    the separation assumption.
    *: The second column indicates the number of lower dimensional projections that the algorithm requires. Specifically, our algorithm uses one 1d projection and $O(d)$ 2d projections while all previous algorithms use a certain number of 1d projections.}
    \label{tab:ndim-result}
\end{table}
\vspace{-0.3cm}

\begin{theorem} (Higher dimension, informal)
\label{thm:highdim-informal}
Let $\mix$ be an arbitrary $k$-spike mixture supported in $\Delta_{d-1}$. Suppose we can access any noisy $\bt$-moments of the 1d and 2d linear projection $\proj{R}(\mix)$ with precision $(\epsilon/(dk))^{O(k)}$,
(i.e., onto lines and 2d planes). 
We can construct a $k$-spike mixture $\tmix$ such that $\tran(\tmix, \mix)\leq O(\epsilon)$ using only $O(dk^3)$ arithmetic operations with high probability.
\end{theorem}


\vspace{0.1cm}
\noindent
{\bf Our techniques:}
We first solve the two-dimensional problem.
This is done by extending the previous one-dimensional algorithm and
its analysis to complex numbers. The real and imaginary parts of a complex location can represent the two dimensions, respectively.
While most ideas are similar to the one-dimensional case, 
the analysis requires extra care in various places. In particular,
we need to extend the moment-transportation inequality to complex numbers
(Appendix~\ref{app:momenttransineq}).

Our algorithm for the high-dimensional case uses
the algorithms for one-dimensional and two-dimensional cases as subroutines.
We first pick a random vector $\vecr$
and learn the projection $\tpmix$ of $\mix$ onto $\vecr$ using the 
one-dimensional algorithm.
It is not difficult to show that for two spikes that are far apart, with reasonable probability,
their projections on $\vecr$ are also far apart.
Hence, the remaining task is to recover the coordinates (in $\R^d$) of each spike in $\tpmix$.
So, we learn the 2d-linear map $\hpmix_t$ of $\mix$ onto the two-dimensional subspace spanned by $\vecr$ and the $t$th dimensional axis for each dimension $t\in [d]$. 
Combining the information from the 1d projected measure $\tmix$ and the 2d projected measure $\hmix_t$, we show that we can extract the $t$th coordinates of all spikes.



\vspace{0.1cm}

\noindent
{\bf Application to Learning Topic Models.}
Our result for the high-dimensional case
can be easily translated into an algorithm for topic modeling.
Previously, \citet{rabani2014learning} showed that
it is possible to produce an estimate $\tmix$ of the original mixture $\mix$
such that $\tran(\mix, \tmix)\leq \epsilon$,
using $\poly(n,k, 1/\epsilon)$ 1- and 2-snapshot samples, and $(k/\epsilon)^{O(k^2)}  \cdot (w_{\min} \zeta)^{-O(k^2)} $
$K$-snapshot samples, under minimal separation assumptions.
\citet{li2015learning} provided
an LP-based learning algorithm which
uses almost the same number of samples (with a slightly worse polynomial for 2-snapshot samples), but
without requiring any separation assumptions.
The number of the $(2k-1)$-snapshot samples used by both \citet{rabani2014learning,li2015learning} are $(k/\epsilon)^{O(k^2)}$
while that of the lower bound (for 1d) is only $e^{\Omega(k)}$.
Recently, \citet{gordon2020sparse} showed that
under the minimal separation assumption, 
the sample complexity could be reduced to $(k/\eps)^{O(k)} \cdot w_{\min}^{-O(1)} \cdot \minsep^{-O(k)}$.
We obtain for the first time the worst case optimal sample complexity for the high-dimensional case, improving previous work 
\citep{rabani2014learning,li2015learning,gordon2020sparse} and matching the lower bound even for the one-dimensional case
\citep{rabani2014learning}.
See Section~\ref{sec:topic} for the details.

\vspace{0.1cm}

\noindent
{\bf Application to Learning Gaussian Mixtures.}
We also study the problem of learning the parameters of Gaussian mixtures.
We assume that all Gaussian components share a variance parameter, following the setting studied by \citet{wu2020optimal}.
We can leverage our algorithm for the $k$-coin model for the one-dimensional setting.
Our algorithm achieves the same sample complexity as in \citet{wu2020optimal}, but an $O(k^2)$ post-sampling running time, improving over the SDP-based $O(k^{2\omega})$-time algorithm developed by \citet{wu2020optimal}. 
See Theorem~\ref{thm:1dim-Gaussian} for more details.

For the high-dimensional setting, 
we can use the dimension reduction technique in \citet{li2015learning} or 
\citet{doss2020optimal} to reduce the dimension $d$ to $O(k)$.
The dimension reduction part is not a bottleneck and we assume $d=O(k)$ for the 
the following discussion.
We show in Section~\ref{sec:highdim-Gaussian} that we can utilize our algorithm for the high-dimensional sparse moment problem and obtain an algorithm without any separation assumption with sample complexity $(k/\epsilon)^{O(k)}$. Note that the algorithm in \citet{wu2020optimal} requires a sample size of $(k/\eps)^{O(k)} \cdot w_{\min}^{-O(1)} \cdot \minsep^{-O(k)}$, which depends on the separation parameter $\minsep$ between Gaussian distributions. 
Recently, \citet{doss2020optimal} removed the separation assumption
and achieved the optimal sample complexity $(k/\eps)^{O(k)}$.
Compared with \citet{doss2020optimal}, the sample complexity of our algorithm is the same, but our running time is substantially better:
during the sampling phase, the algorithm in \citet{doss2020optimal} requires $O(n^{5/4}\poly(k))$ time where $n=(k/\eps)^{O(k)}$ is the number of samples
(for each sample, they need to update $O(n^{1/4}\poly(k))$ numbers, since 
their algorithm requires $O(n^{1/4})$ 1d projections)
while our algorithm only needs $O(n\poly(k))$ time
(we only need one 1d projection and $d=O(k)$ 2d projections).
The post-sampling running time of the algorithm in \citet{doss2020optimal}
is exponential $(k/\eps)^{O(k)}$ while our algorithms
runs in polynomial time $\poly(k)$.
For more details, see Section~\ref{sec:Gaussian}.

Last but not least, we argue that improving the post-sampling running time is very important,
despite the exponential sample complexity for both problems. 
During the sampling phase, we only need to keep track of the first few moments (e.g., using basic operations such as counting or adding numbers), hence the sampling phase can be easily distributed, streamed, or implemented in inexpensive computing devices.  Our mixture recovery algorithm only requires the moment information (without storing the samples) and runs in time polynomial in $k$ (not the sample size).
Moreover, one may well have other means to measure the moments to achieve the desired moment accuracy (e.g., via longer documents, prior knowledge, existing samples, etc.)
Exploiting other settings and applications is an interesting further direction.


\eat{
\vspace{0.1cm}
\noindent
{\bf A Moment-transportation Inequality.}
Another crucial technical tool for all the above algorithms is a moment-transportation inequality which connects the transportation distance and moment distance for two $k$-spike mixtures.
With this inequality, we only need to focus on efficiently reconstructing a mixture $\mix'$ such that $\Mdis_K(\mix,\mix')$ is small enough, which enables a more global and much tighter analysis. This is unlike the previous analysis \citep{rabani2014learning,gordon2020sparse} in which we need to analyze the perturbation of each intermediate result and each component and weight.


The high-level idea of the proof of inequality is as follows.
Suppose two mixtures $\mix$ and $\mix'$ have a small moment distance.
First, we apply Kantorovich-Rubinstein duality theorem to $\tran(\mix,\mix')$.
So, we need to show for any 1-Lipschitz function $f$, $\int f \d (\mix-\mix')$ is small.
Since the support of  $\mix-\mix'$ has at most $2k$ points
we can replace $f$ by a degree $2k-1$ polynomial due to the simple fact that
a polynomial of degree at most $2k-1$ can interpolate any function values at $2k$ discrete points.
However, if we want to interpolate $f$ exactly at those $2k$ points,
the coefficient of the polynomial can be unbounded
(in particular, it depends on the minimum separation of the points).
Indeed, this can be seen from the Lagrangian interpolation formula:
$f(x) := \sum_{j=1}^{n} f(\alpha_j) \ell_j(x)$
where $\ell_j$ is the Lagrange basis polynomial
$
\ell_j(x) := \prod_{1\le m\le n, m\neq j} (x-\alpha_m)/(\alpha_j - \alpha_m).
$
This motivates us to study a ``robust'' version of the polynomial interpolation problem,
which may be interesting in its own right.
We show that for any $f$ that is 1-Lipschitz over $n$ points $\alpha_1,\cdots,\alpha_n$ in $[-1,1]$,
there is a polynomial
with bounded height (a.k.a. largest coefficient)
which can approximately interpolate $f$ at those $n$ points (see Lemma~\ref{lm:decomposition}).
We show that it is possible to modify the values of some $f(\alpha_i)$ slightly, so that 
the interpolating polynomial has a bounded height independent of the separation.
Interestingly, we leverage Newton's interpolation formula to design the modification of $f(\alpha_j)$
to ensure that 2nd order difference $F(\alpha_{i}, \alpha_{i+1})$ vanishes and does not contribute to the height of interpolating polynomial if $\alpha_i$ and $\alpha_{i+1}$ are very close to each other.

}

\vspace{-0.3cm}

\section{Preliminaries}\label{sec:pre}

We are given a statistical mixture $\mix$ of $k$ discrete distributions
over $[d]=\{1,2,\cdots,d\}$.
Each discrete distribution $\veca_i$ can be regarded as a point in the $(d-1)$-simplex
$\simplex_{d-1}=\{\vecx=(x_1,\cdots, x_d)\in \R^d \mid \sum_{i=1}^d x_i=1, x_i\geq 0 \,\,\forall i\in [d]\}$.
We use $\mix=(\veca,\vecw)$ to represent the mixture where 
$\veca=\{\veca_1,\veca_2,\cdots,\veca_k\}\subset \simplex_{d-1}$ are the locations of spikes
and $\vecw=\{w_1,w_2,\cdots,w_k\}\in \simplex_{k-1}$
in which $w_i$ is the weight of $\veca_i$.
Since the dimension of $\simplex_{d-1}$ is $d-1$, we say the {\em dimension} of the problem is $d-1$.
In addition, we use $\Spike(\simplex_{d-1},\simplex_{k-1})$ 
to denote the set of all such mixtures where $\simplex_{d-1}$ indicates
the domain of $\veca_i$ and $\simplex_{k-1}$ indicates the domain of $\vecw$.
Since our algorithm may produce negative or complex weights as intermediate results, we further denote $\signedsimplex^{\R}_{d-1}=\{\vecx=(x_1,\cdots, x_d)\in \R^d \mid \sum_{i=1}^d x_i=1\}$
and 
$\signedsimplex^{\C}_{d-1}=\{\vecx=(x_1,\cdots, x_d)\in \C^d \mid \sum_{i=1}^d x_i=1\}$.

For the one-dimensional case (which is called the $k$-coin problem or 
sparse Hausdorff moment problem),
we have 
$\mix = (\veca,\vecw) \in \Spike(\simplex_1, \simplex_{k-1})$
where 
$\veca=\{\alpha_1,\alpha_2,\cdots,\alpha_k\}$ is a set of $k$ discrete points in $[0,1]$.
In this scenario, for each $t\in \mathbb{N}$, we denote the $t$th moment as
$$
M_{t}(\mix)=\int_{[0,1]} \alpha^t \mix(\d \alpha) =\sum_{i=1}^k w_i \alpha_i^{t}.
$$

For the higher dimensional case, we use
$\bt=(t_1,t_2,\cdots,t_d)\in \mathbb{Z}_+^d$ such that $\|\bt\|_1 = K 
(= 2k-1)$
to denote a multi-index.
In addition, we denote monomial
$\veca_i^{\bt}=\alpha_{i,1}^{t_1}\alpha_{i,2}^{t_2}\cdots\alpha_{i,d}^{t_d}$ for every discrete distribution
$\veca_i=(\alpha_{i,1},\alpha_{i,2},\cdots,\alpha_{i,d})\in \R^d$.
The moment vector of $\mix$ is then defined as 
\begin{align}
\label{eq:highdimmomvector}
M(\mix)=\left\{M_{\bt}(\mix)\right\}_{\|\bt\|_1\leq K}
\text{ where }
M_{\bt}(\mix)=\int_{\simplex_{d-1}} \veca^\bt \mix(\d \veca) =\sum_{i=1}^k w_i \veca_i^{\bt}.
\end{align}

We define the {\em moment distance} between two mixtures $\mix$ and $\mix'$
as the $L_{\infty}$ norm of the difference between corresponding moment vectors:
$
\Mdis_K(\mix,\mix'):=\max_{|\bt|\leq K} |M_\bt(\mix)-M_\bt(\mix')|.
$

\topic{Transportation Distance}
For any two probability measures $P$ and $Q$ defined over $\R^d$, we define the $L_1$-transportation distance $\tran(P,Q)$ is defined as
$$\tran(P,Q):= \inf \Big\{\int \|x-y\|_1 \,\d\mu(x,y): \mu\in M(P,Q)\Big\}$$
where $M(P,Q)$ is the set of all joint distributions (also called coupling) on $\R^d \times \R^d$ with marginals $P$ and $Q$.
Transportation distance is also called Rubinstein distance, Wasserstein distance
or earth mover distance in the literature.
We also need to define transportation distance for signed measures and
complex domains. See Appendix~\ref{app:transfornonprob}.

\vspace{-0.5cm}

\section{An Efficient Algorithm for the $k$-Coin Problem}
\label{sec:1d-recover}

In this section, we study the $k$-coin model, 
i.e., $\mix$ is a $k$-spike distribution over $[0,1]$.
We present an efficient algorithm reconstructing the mixture from the first $K=2k-1$ noisy moments. 
Let $\mix:=(\veca, \vecw) \in \Spike(\simplex_1,\simplex_{k-1})$ be the ground truth mixture where $\veca = [\alpha_1, \cdots, \alpha_k]^{\top} \in \simplex_1^k$ and $\vecw = [w_1, \cdots, w_k]^{\top} \in \simplex_k$.
Let $M(\mix) = [M_0(\mix), \cdots, M_{2k-1}(\mix)]^{\top}$ be the ground truth moment vector containing moments of degree at most $K = 2k-1$. 
We denote the noisy moment vector as $M' = [M_0', \cdots, M_{2k-1}']$ where the error is bounded by $\|M' -M(\mix)\|_{\infty} \leq \noise$. 
Since $\mix$ is a distribution, we assume $M'_0=M_0(\mix) = 1$. In the light of lower bound in \citet{rabani2014learning} or Lemma~\ref{lm:inequality1d}, we further assume the noise satisfies $\noise \leq 2^{-\Omega(k)}$.

For vector $M = [M_0, \cdots, M_{2k-1}]^{\top}$ and $\veca = [\alpha_1, \cdots, \alpha_k]^{\top}$, we denote
\begin{align}\label{eq:matrixdef}
    A_{M} := \begin{bmatrix}
		M_{0} & M_{1} & \cdots & M_{k-1} \\
		M_{1} & M_{2} & \cdots & M_{k} \\
		\vdots & \vdots & \ddots & \vdots \\
		M_{k-1} & M_{k} & \cdots & M_{2k-2}
	\end{bmatrix}, \,\,\,
	b_{M} := \begin{bmatrix}
		M_{k} \\
		M_{k+1} \\
		\vdots \\
		M_{2k-1}
	\end{bmatrix}, \,\,\,
    V_{\veca} := \begin{bmatrix}
		\alpha_1^0 & \alpha_2^0 & \cdots & \alpha_k^0 \\
		\alpha_1^1 & \alpha_2^1 & \cdots & \alpha_k^1 \\
		\vdots & \vdots & \ddots & \vdots \\
		\alpha_1^{2k-1} & \alpha_2^{2k-1} & \cdots & \alpha_k^{2k-1}
    \end{bmatrix}.
\end{align}
Note that $A_M$ is a Hankel matrix and $V_{\veca}$ is a Vandemonde matrix.


\begin{algorithm}[t]
\caption{Algorithm for the k-coin problem}\label{alg:dim1}
\begin{algorithmic}[1]
    \REQUIRE number of spikes $k$, noisy moments $M'(\cdot)$, noise level $\noise$ \\
    \ENSURE recovered spike distribution $\cmix$
    \STATE $\hvecc \leftarrow \arg \min_{\vecx \in \R^k} \| A_{M'} \vecx + b_{M'} \|_2^2 + \noise^2 \|\vecx\|_2^2$ 
    \label{line:dim1-hc}
    \hfill{$\Rightarrow\hvecc = [\hc_0, \cdots, \hc_{k-1}]^{\top} \in \R^k$}
    \STATE $\hveca \leftarrow \textrm{roots}(\sum_{i=0}^{k-1} \hc_i x^i + x^k)$ 
    \label{line:dim1-ha}
    \hfill{$\Rightarrow\hveca = [\ha_1, \cdots, \ha_k]^{\top} \in \C^k$}
    \STATE $\bveca \leftarrow \textrm{project}_{\simplex_1}(\hveca)$
    \label{line:dim1-ba}
    \hfill{$\Rightarrow\bveca = [\ba_1, \cdots, \ba_{k}]^{\top} \in \simplex_1^k$}
    \STATE $\tveca \leftarrow \bveca + \textrm{Noise}(\noise)$
    \label{line:dim1-ta}
    \hfill{$\Rightarrow\tveca = [\ta_1, \cdots, \ta_{k}]^{\top} \in \simplex_1^k$}
    \STATE $\hvecw \xleftarrow{O(1)-\textrm{approx}} \arg \min_{\vecx \in \R^k} \|V_{\tveca} \vecx - M'\|_2^2$ 
    \label{line:dim1-hw}
    \hfill{$\Rightarrow\hvecw = [\hw_1, \cdots, \hw_{k}]^{\top}  \in \R^k$}
    \STATE $\tvecw \leftarrow \hvecw / (\sum_{i=1}^{k} \hw_i)$
    \label{line:dim1-tw}
    \hfill{$\Rightarrow\tvecw = [\tw_1, \cdots, \tw_{k}]^{\top}  \in \Sigma_{k-1}$}
    \STATE $\tmix \leftarrow (\tveca, \tvecw)$ 
    \label{line:dim1-tmix}
    \hfill{$\Rightarrow\tmix \in \Spike(\simplex_1, \Sigma_{k-1})$}
    \STATE $\cvecw \leftarrow \arg \min_{\vecx \in \simplex_{k-1}} \tran(\tmix, (\tveca, \vecx))$
    \label{line:dim1-cw}
    \STATE $\cmix \leftarrow (\tveca, \cvecw)$ 
    \label{line:dim1-cmix}
    \hfill{$\Rightarrow\cmix \in \Spike(\simplex_1, \simplex_{k-1})$}
\end{algorithmic}
\end{algorithm}

Our algorithm is a variant of Prony's method \citep{prony1795}.
The pseudocode can be found in Algorithm~\ref{alg:dim1}. The algorithm takes the number of spikes $k$, the noisy moment vector $M'$ which $\|M' - M(\mix)\|_{\infty}\leq \noise$ and the moment accuracy $\noise$ as the input. 
We describe our algorithm as follows. 

Let $\vecc = [c_0, \cdots, c_{k-1}]^{\top} \in \R^k$ such that $\prod_{i=1}^{k} (x - \alpha_i) = \sum_{i=0}^{k-1} c_i x^i + x^k$ be the characteristic vector of locations $\alpha_i$.
We first perform a ridge regression to obtain $\hvecc$ in Line~\ref{line:dim1-hc}.
We note that $A_{M(\mix)} \vecc + b_{M(\mix)} = \mathbf{0}.$ (see Lemma~\ref{lm:dim1-char-poly}). 
Hence, $\hvecc$ serves a similar role as $\vecc$
(note that $\hvecc$ is not necessarily close to $\vecc$ without the separation assumption).
From Line~\ref{line:dim1-ha} to Line~\ref{line:dim1-ta}, we aim to obtain estimations of the positions of the spikes, i.e., $\alpha_i$s.
We first solve the roots of polynomial $\sum_{i=0}^{k-1} \hc_i x^i + x^k$. 
For polynomial root findings, 
note that some roots may be complex without any separation assumption. 
\footnote{\citet{gordon2020sparse} proved that all roots
are real and separated under the minimal separation assumption.
}
Hence, we need to project the solutions back to $\simplex_1$ and 
inject small noise, ensuring that all values are distinct and still in $\simplex_1$. 
We note that any noise of size at most $\noise$ suffices.  
After recovering the positions of the spikes, we aim to recover the corresponding weights $\tvecw$
by the linear regression defined by the Vandemonde matrix $V_{\veca}$ in Line~\ref{line:dim1-hw} with weight normalization in Line~\ref{line:dim1-tw}.
We note that $\tvecw$ may still have some negative components. In Line~\ref{line:dim1-cw}, we 
find the closest $k$-spike distribution in $\Spike(\simplex_1, \simplex_{k-1})$, which is our final output.
The details for implementing the above steps in $O(k^2)$ time can be found in Appendix~\ref{subsec:1d-alg-detail}.
The performance guarantee of the algorithm is given by the following theorem.

\begin{theorem}
\label{thm:1dim-kspikecoin}
Let $\mix$ be an arbitrary $k$-spike distribution over $[0,1]$.
Suppose we have noisy moments $M'_{t}$ such that 
$|M_t(\mix)-M'_{t}|\leq (\epsilon/k)^{O(k)}$ for $0\leq t \leq 2k-1$. 
Then Algorithm~\ref{alg:dim1} outputs a mixture $\tmix$ such that $\tran(\tmix, \mix)\leq O(\epsilon)$ using $O(k^{2})$ arithmetic operations.
\end{theorem}

\vspace{-0.2cm}
\subsection{Proof Sketch}

In this section, we present a proof sketch of Theorem~\ref{thm:1dim-kspikecoin}. 
The complete proof of the theorem can be found in Appendix~\ref{subsec:1d-alg-analysis}.
Firstly, using the relationship between $\hvecc$ and $\hveca$, one can show that
\begin{align*}
     \sum_{i=1}^{k} w_i \prod_{j=1}^{k} (\alpha_i - \ha_j)^2
        = \sum_{i=1}^{k} w_i \left(\sum_{j=0}^{k-1} \hc_j \alpha_i^j + \alpha_i^k\right)^2 =  \sum_{i=0}^{k-1} (\hc_i - c_i) \Bigl(A_{M(\mix)} \hvecc + b_{M(\mix)}\Bigr)_i
\end{align*}
Since we use the ridge regression to get $\hvecc$ (Line~\ref{line:dim1-hc}), the norms of both $A_{M'} \hvecc + b_{M'}$ and $\hc$ are small. Note that $M'$ is close to $M(\mix)$, the norm of vector $A_{M(\mix)} \hvecc + b_{M(\mix)}$ is also small. Hence one can get
$$
\sum_{i=1}^{k} w_i \prod_{j=1}^{k} (\alpha_i - \ha_j)^2 \leq 2^{O(k)} \cdot \zeta.
$$
The above inequality shows that every ground truth spike $a_i \in \veca$ has a close spike $\ta_j \in \tveca$ where the closeness depends on the weight of the spike $w_i$. We note that this property tolerates permutation and small spikes weights, enabling us to analyze mixtures without separation.

Next, note we can decompose $M(\mix)=\sum_{i=1}^{k}w_i M(\alpha_i)$ where $M(\alpha_i)$ is the moment vector generated by the single-spike distribution located at $\alpha_i$.
By triangular inequality, the error of linear regression (Line~\ref{line:dim1-hw}) 
can be decomposed by
$\min_{\vecx} \|V_{\tveca}\vecx - M(\mix)\|_2 \leq \sum_{i=1}^k w_i \min_{\vecx}\|V_{\tveca}\vecx - M(\alpha_i)\|_2,$
Therefore, to bound $\|V_{\tveca}\hvecw - M(\mix)\|_2 \approx \min_{\vecx} \|V_{\tveca}\vecx - M(\mix)\|_2$, it is sufficient to prove $\min_{\vecx}\|V_{\tveca}\vecx - M(\alpha_i)\|$ for each $\alpha_i$ is bounded.
We use $\|V_{\tveca}\vecx^* - M(\alpha)\|_2$ for a
specific $\vecx^*$ as its upper bound.
$\vecx^*$ is chosen to be the vector in which the first $k$ rows of $V_{\tveca} \vecx^*$ and $M(\veca_i)$ are equal
($\vecx^*$ is the solution of a linear system). 
By binomial expansion, we can compute the $j$th row of $V_{\tveca} \vecx^*$ for large $j > k$ by
\begin{align*}
    (V_{\tveca} \vecx^*)_j = \sum_{p=1}^{j} \binom{j}{p} \alpha_i^{j-p} \sum_{t=1}x_t^* (\ta_t - \alpha_i)^p.
\end{align*}
Expressing $\vecx^*$ using the adjoint matrix, using Using Cramer's rule, and plugging it into the right-hand side of the above equality, the resulting term appears to
be difficult to bound
(see the third equation in the analysis of Lemma~\ref{lm:step3}).
Fortunately, by leveraging a matrix equation derived from the Shur polynomial, one can get the following nice equation for the inner summation,
\begin{align*}
    \sum_{t=1}x_t^* (\ta_t - \alpha_i)^p = \prod_{t=1}^k (\ta_t - \alpha_i) \cdot \sum_{s \subseteq (k)^{p-k}} \prod_{t=1}^{k}  (\ta_t - \alpha_i)^{s_t},
\end{align*}
where we use $(a)^{b} \subset \{0,\cdots,b\}^a$ to denote the set that contains all vector $s$ in which $\sum_{i=1}^{a} s_i = b$.
Since $(k)^{p-k}$ has no more than $2^{O(p-k)}$ elements, one can thus easily bound the approximation error by:
\begin{align*}
    \min_{\vecx} \|V_{\tveca}\vecx - M(\veca_i)\|_2 \leq 2^{O(k)} \cdot \prod_{j=1}^{k}|\alpha_i - \ta_j|
\end{align*}
from the right-hand side of the above equation.

Note that $\ta_j$ are $\ha_i$ close, combining all the above results gives $\|V_{\tveca}\hvecw - M(\mix)\|_{\infty} \leq 2^{O(k)} \cdot \sqrt{\zeta}$, indicating moment distance between the recovered distribution and the ground truth is small. Finally, one can reach the desired statement using the moment-transportation inequality (Lemma~\ref{lm:inequality1d}), which asserts that if the moment distance between two measures is sufficiently small, the transportation distance between them is also small.

\vspace{-0.3cm}

\section{Efficient Algorithms for Higher Dimensions}
\label{sec:high-dim-recover}

In this section, we solve the problem in higher dimensions.
We first present the algorithm for the two-dimensional problem.
The algorithm for higher dimensions takes both the one-dimensional and two-dimensional algorithms as subroutines.

\vspace{-0.2cm}
\subsection{An Efficient Algorithm for the Two-Dimensional Problem}

We first generalize the one-dimensional algorithm described in Section~\ref{sec:1d-recover} to the dimension of two.
Let $\mix:=(\veca, \vecw) \in \Spike(\simplex_2,\simplex_{k-1})$ 
be the underlying mixture where
$\veca = [\veca_1, \cdots, \veca_k]^{\top}$ for two-dimensional points $\veca_{i} = (\alpha_{i,1}, \alpha_{i,2})$ and weight vector $\vecw = [w_1, \cdots, w_k]^{\top} \in \simplex_{k-1}$.
The true moments is defined by $M_{i,j}(\mix) = \sum_{t=1}^k w_t \alpha_{t,1}^i \alpha_{t,2}^j$. 
We assume the input noisy moments $M'_{i,j}$ satisfies $|M'_{i,j} - M_{i,j}(\mix)| \leq \noise$ for every $0\leq i, j, i+j \leq 2k-1$. We further assume that $M'_{0,0}=M_{0,0}(\mix) = 1$ and noise level $\noise \leq 2^{-\Omega(k)}$.

The key idea for our algorithm is that,
a distribution supported in $\R^2$ can be mapped to a distribution supported in the complex plane $\C$. In particular, let complex simplex $\simplex_{\C} = \{a+b\imag \mid (a,b) \in \simplex_2\}$.
We denote $\vecb = [\beta_{1}, \cdots, \beta_{k}]^{\top} := [\alpha_{1,1} + \alpha_{1,2} \imag, \cdots, \alpha_{k,1} + \alpha_{k,2} \imag]^{\top} \in \simplex_{\C}^k$ and define $\pmix := (\vecb, \vecw) \in \Spike(\simplex_{\C}^k, \simplex_{k-1})$ to be the complex mixture corresponding to $\mix$. 
The corresponding moments of $\pmix$ can thus be defined as $G_{i,j}(\pmix) = \sum_{t=1}^{k} w_t (\beta_t^{\dagger})^i \beta_t^j$, where $\beta_t^{\dagger}$ is the complex conjugate of $\beta_t$.
We note $G_{i,j}$ can be computed from $M_{i,j}$ (see Lemma~\ref{lm:dim2-complex-corr}).
Therefore, the task reduces to the recovery of the complex mixture $\pmix$ given noisy moments $G'_{i,j}$.
The algorithm and the proof of the following theorem are similar to that in Section~\ref{sec:1d-recover}.
Handling complex numbers requires several minor changes, and we present the full details in Appendix~\ref{sec:2d-recover}.
For the analysis, we also need to generalize the moment-transportation inequality to the complex domains (see Appendix~\ref{subsec:2dmoment}).

\vspace{-0.2cm}

\begin{theorem}
\label{thm:2dim-kspikecoin}
(The Two-dimensional Problem)
Let $\mix\in \Spike(\simplex_2,\simplex_{k-1})$.
Suppose we have noisy moments $M_{i,j}$ ($0\leq i, j, i+j \leq 2k-1$) up to precision $(\epsilon/k)^{O(k)}$. 
We can obtain a mixture $\tmix$ such that $\tran(\tmix, \mix)\leq O(\epsilon)$ using only $O(k^3)$ arithmetic operations.
\end{theorem}

\vspace{-0.3cm}

\subsection{An Efficient Algorithm for Dimension $d \geq 3$}
\label{subsec:algoforhigherdim}

Now, we present our algorithm for higher dimensions.
Let $\mix:=(\veca, \vecw) \in \Spike(\simplex_{d-1}, \simplex_{k-1})$
be the underlying mixture where $\veca = [\veca_1, \cdots, \veca_k]^{\top} \in \mathbb{R}^{k\times d}$ such that $\veca_i = [\alpha_{i,1}, \cdots, \alpha_{i,d}]^{\top}$ and $\vecw = [w_1, \cdots, w_k]^{\top}$. We denote $\veca_{:,i} = [\alpha_{1,i}, \cdots, \alpha_{k,i}]$ as the vector of the $i$th coordinates of the 
spikes and also write $\tveca = [\veca_{:,1}, \cdots, \veca_{:,k}]^{\top}$.

Since the number of different moments is exponential in $d$, 
we consider the setting in which one can 
access the noisy moments of some linear maps of $\mix$ onto lower-dimensional subspaces, as in previous works \citep{rabani2014learning,li2015learning}
(such noisy moments can be easily obtained in applications such as topic modeling, see Section~\ref{sec:topic}, or in learning Gaussian mixtures, see
Section~\ref{sec:Gaussian}). 
For some $\mix = (\veca, \vecw) \in \Spike(\R^d, \simplex_{k-1})$ and a vector $\vecr \in \R^d$, we denote $\proj{\vecr}(\mix) := (\veca\vecr, \vecw) \in \Spike(\R, \simplex_{k-1})$ as the projected distribution of $\mix$ along vector $\vecr$. For a matrix $R = [\vecr_1, \cdots, \vecr_p]\in \R^{d \times p}$, we also denote $\proj{R}(\mix) := (\veca R, \vecw)  \in \Spike(\R^p, \simplex_{k-1})$ as the projected distribution of $\mix$ along multiple dimensions $\vecr_1, \cdots, \vecr_p$. In this section, we consider mappings to one-dimensional and two-dimensional subspaces, which suffice for our purposes.

The idea of our algorithm is as follows:
We first generate a random vector $\vecr$. 
Then with some constant probability, the distance between spikes is roughly kept after the projection along $\vecr$. 
This enables us to recover a projected measure $\tpmix$ along $\vecr$ of the discrete mixture by running the one-dimensional algorithm on the projected moments.
Next, we try to assign every spike of the projected measure a coordinate in $\R^d$.
This is done by running the two-dimensional algorithm over the plane spanned by $\vecr$ and unit vector $e_t$ for each dimension $t \in [d]$ to get $\hpmix_t$, and assigning the $t$-th coordinate of each spike in $\tpmix$ the coordinate of the closet spike in $\hpmix_t$.
We use a geometric argument to show the recovered distribution is close to the ground truth $\mix$ in terms of the transportation distance (see Lemma~\ref{lm:dimn-step3.5}). 
The algorithm and the analysis are deferred to Appendix~\ref{sec:high-dim-recover-full}.

\begin{theorem}\label{thm:highdim-kspike}
    Let $\mix$ be an arbitrary $k$-spike mixture supported in $\Delta_{d-1}$. Suppose we can access noisy $\bt$-moments of 
    $\proj{R}(\mix)$, with precision $(\epsilon/(dk))^{O(k)}$,
    where $\proj{R}(\mix)$ is the projected measure obtained by
    applying linear transformation $R$ (with $\|R\|_\infty\leq 1$) of $\mix$ onto any $h$-dimensional subspace we choose, 
    for all $\|\bt\|\leq K=2k-1$ and $h=1,2$
    (i.e., lines and 2d planes). 
    We can construct a $k$-spike mixture $\tmix$ such that $\tran(\tmix, \mix)\leq O(\epsilon)$ using only $O(dk^3)$ arithmetic operations with probability at least $0.99$.
\end{theorem}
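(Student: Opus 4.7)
The plan is to reduce the $d$-dimensional reconstruction to a single $1$-d ``anchor'' subproblem plus $d$ independent $2$-d ``lift'' subproblems (one per coordinate), by composing Theorem~\ref{thm:1dim-kspikecoin} with its $2$-d complex-valued extension announced in the introduction.

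First I would draw a suitably scaled random direction $\vecr\in\R^d$ (e.g.\ uniform random signs rescaled so that $\|R\|_\infty\le 1$) and write $\beta_i:=\langle\vecr,\veca_i\rangle$ for the anchor value of spike~$i$. As a preprocessing step I would merge any pair of true spikes whose $L_1$-distance is at most $\eps_0:=\Theta(\eps/k)$ into a single spike at their weighted centroid, which distorts the target by only $O(\eps)$ in $L_1$-transportation; after this merge I may assume $\|\veca_i-\veca_{i'}\|_1\ge\eps_0$ for $i\neq i'$. A standard anti-concentration/small-ball bound on the linear form $\langle\vecr,\veca_i-\veca_{i'}\rangle$ then gives, with high probability over $\vecr$, that all anchor values are pairwise separated by some $\minsep_{\vecr}$ that is a fixed polynomial in $\eps/(dk)$.

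Step~1 (anchor) applies Theorem~\ref{thm:1dim-kspikecoin} to the noisy moments of $\proj{\vecr}(\mix)$, yielding $\{(\tb_i,\tw_i)\}_{i=1}^{k}$ with transportation error well below $\minsep_{\vecr}$; since the true $\beta_i$'s are pairwise $\minsep_{\vecr}$-separated and the post-merge weights are lower bounded, this transportation guarantee upgrades, after relabeling, to per-spike recovery $|\tb_i-\beta_i|\ll\minsep_{\vecr}$ and $|\tw_i-w_i|=O(\eps/d)$. Step~2 (per-coordinate lift): for each $j\in[d]$, apply the complex $2$-d algorithm to the noisy moments of $\proj{R_j}(\mix)$, where $R_j$ is the $2\times d$ matrix with rows $\vecr$ and $e_j$; identifying $(\beta,\alpha)$ with $\beta+\mathfrak{i}\alpha\in\C$, the algorithm returns $k$ complex estimates whose real parts agree with the $\beta_i$'s up to $O(\eps/d)$. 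Match each estimate to the anchor index $i$ whose $\tb_i$ is closest in real part; the dominance of $\minsep_{\vecr}$ over all accumulated errors makes this matching a consistent permutation for every $j$, and reading off the imaginary part gives $\ta_{i,j}$ with $|\ta_{i,j}-\alpha_{i,j}|=O(\eps/d)$. Assembling $\tveca_i:=(\ta_{i,1},\ldots,\ta_{i,d})$ with weight $\tw_i$ then yields $\|\tveca_i-\veca_i\|_1=O(\eps)$, hence $\tran(\tmix,\mix)=O(\eps)$ in $L_1$-transportation. The precision budget $(\eps/(dk))^{O(k)}$ is exactly what the $1$-d theorem and its $2$-d variant require to deliver the $O(\eps/d)$ transportation target in each subproblem, while the runtime is $O(k^2)$ per invocation times $1+d$ invocations plus $O(dk)$ matching, giving $O(dk^3)$ in total.

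The main obstacle is the matching step: a single random $\vecr$ must simultaneously separate all $k$ projected spikes strongly enough that, after the accumulated errors of the anchor algorithm and of $d$ independent $2$-d invocations, every nearest-$\tb$ assignment is still the correct permutation. This rests on two ingredients---the separation-free per-moment-accuracy bound of Theorem~\ref{thm:1dim-kspikecoin}, so that no factor of $\minsep_{\vecr}^{-\mathrm{poly}(k)}$ reappears to blow up the moment-precision budget; and the initial merge step, which guarantees that $\minsep_{\vecr}$ is a fixed polynomial in $\eps/(dk)$, so that $(\eps/(dk))^{O(k)}$-accurate moments keep all errors comfortably below $\minsep_{\vecr}$ under a union bound over the $d+1$ subproblems.
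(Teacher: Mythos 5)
Your algorithm mirrors the paper's (a single random $1$-d anchor plus $d$ coordinate-wise $2$-d lifts), but the analysis has a genuine gap in the ``merge then assume separation'' step, and the gap is not cosmetic.

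Concretely: merging any two true spikes that are within $\eps_0=\Theta(\eps/k)$ of each other in $L_1$ distorts the mixture (and hence its $1$-d projection) by $O(\eps_0)$ in transportation distance. The separation you gain from this merge, after projection onto a random $\vecr$, is only $\minsep_{\vecr}=\Theta\bigl(\eps_0\,\eta/(k^2 d)\bigr)$ by the anti-concentration bound (cf.\ Lemma~\ref{lm:prob-proj}). Since $\eps_0 \gg \minsep_{\vecr}$, the transportation distance between $\proj{\vecr}(\mix)$ (whose noisy moments you actually have) and $\proj{\vecr}(\mix_{\mathrm{merged}})$ (whose spikes you pretend to recover) is already much larger than $\minsep_{\vecr}$. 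No choice of moment precision for the $1$-d subroutine can fix this, because the distortion comes from the merge itself, not from moment noise; the anchor estimates therefore cannot be shown to align with the merged spikes to within $o(\minsep_{\vecr})$, and the ``upgrade to per-spike recovery'' collapses. Tightening $\eps_0$ to make the merge distortion small enough (say $\eps_0 = (\eps/(dk))^{O(k)}$) only trades the problem: the separation then becomes comparable to the $1$-d algorithm's own transportation error, which is $(\eps/(dk))^{O(1)}$, and again per-spike recovery fails. Independently, your claim that ``the post-merge weights are lower bounded'' is unjustified: merging close spikes leaves arbitrarily small weights intact, and a spike of negligible weight cannot be recovered per-spike from a transportation-distance guarantee regardless of separation.

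The paper's proof deliberately avoids per-spike identification. Lemmas~\ref{lm:dimn-step3} and~\ref{lm:dimn-step3.5} partition the \emph{true} spikes into clusters that are mutually separated after projection (at scale $\Theta(\sqrt{\eps})$, calibrated to the $1$-d error rather than to $\eps/k$), assign each estimated anchor $\ty_j$ to at most one such cluster, and bound the transportation cost cluster by cluster: the mass mismatch between a cluster and its assigned anchors is $O(\sqrt{\eps})$, mass that crosses clusters travels distance $O(1)$, and mass that stays within a cluster travels distance controlled via the $2$-d estimates and the random-projection distance preservation. This never requires a lower bound on $w_{\min}$ or a separation floor on the true spikes, which is exactly what a ``merge-and-identify'' approach cannot avoid. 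If you want to keep a merging idea, it must be pushed into the \emph{analysis of the coupling} (i.e., cluster-level bookkeeping as the paper does) rather than used as a preprocessing distortion of the target distribution.
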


\vspace{-0.2cm}

\section{Applications}

\subsection{Applications to Topic Models}
\label{sec:topic}

This section discusses the application of our results to topic models. 
Due to space limitations, the full details are deferred to Appendix~\ref{sec:topic-full}.
Consider the “bag of words” model in which we take each document as an unordered multiset of words.
Let $d$ be the number of different words in the document.
We assume there are $k$ topics $\{\veca_{1}, \veca_{2}, \cdots \veca_{k}\}$ that are distributed over $[d]$. 
Each document is generated by first selecting a topic $i\in\simplex_{d-1}$ from the mixture $\mix$, and then sampling $K$ i.i.d. words according to $\veca_{i}$ from this topic.
Here, $K$ is referred to as {\em the snapshot number} and we call such a sample a {\em $K$-snapshot} of $p$.
Our goal is to recover $\mix\in \Spike(\simplex_{d-1},\simplex_{k-1})$ that is a discrete distribution over $k$ pure topics.
If $d\gg k$,
we can use the dimension reduction technique in \citet{li2015learning} to reduce the dimension $d$ to $O(k)$, using $\tilde{O}(\epsilon^{-6}  k^3 d)$ 1-snapshots and $\tilde{O}(\epsilon^{-4} k^2d)$ 2-snapshots, respectively.
Then, we can apply Theorem~\ref{thm:highdim-kspike} to derive the required
moment accuracy. Translating the moment accuracy to the sample complexity is standard. In sum, we can obtain the following result.

\begin{theorem}\label{thm:highdimtopic}
There is an algorithm that can learn an arbitrary $k$-spike mixture supported in $\simplex_{d-1}$
for any $d$ within $L_1$ transportation distance $O(\epsilon)$ with probability at least $0.99$ using
$\poly(d,k,\frac{1}{\epsilon})$, $\poly(d,k,\frac{1}{\epsilon})$, 
many $1$-,$2$-,snapshots and $(k/\epsilon)^{O(k)}$ many $(2k-1)$-snapshots.
\end{theorem}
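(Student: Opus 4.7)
The plan is to reduce to the $d = O(k)$ regime handled by Lemma~\ref{lm:lowdimtopic} through a cheap dimension-reduction preprocessing step that consumes only $\poly(d,k,1/\eps)$ many 1- and 2-snapshots, following the outline sketched in the contributions section. The recovery itself, together with the $(2k-1)$-snapshot queries, then runs inside an $O(k)$-dimensional subspace.

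\textbf{Dimension reduction.} Using $\poly(d,k,1/\eps)$ many 1- and 2-snapshots, I would construct accurate estimates of the first and second empirical moment matrices of $\mix$. Since $\mix$ is a $k$-spike mixture, its second-moment matrix $\sum_i w_i \veca_i\veca_i^\top$ has rank at most $k$, and the top singular subspace of the empirical second moment yields a subspace $B\subseteq \R^d$ of dimension $d'=O(k)$. Applying the dimension-reduction procedure of \cite{li2015learning} with parameters chosen so that each spike is captured in $L_1$ to within $\eps/k$, one obtains $\sum_i w_i\|\veca_i-\proj{B}(\veca_i)\|_1 \leq \eps$, and hence $\tran(\mix,\proj{B}(\mix))\leq \eps$.

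\textbf{Low-dimensional recovery.} Identify $B$ with $\R^{d'}$ via an orthonormal basis and apply Lemma~\ref{lm:lowdimtopic} to $\proj{B}(\mix)$ viewed as a $k$-spike mixture in the $O(k)$-dimensional space. Algorithm~\ref{alg:dim3} only queries its moment oracle on 1- and 2-dimensional projections, and any such projection of $\proj{B}(\mix)$ is itself a 1- or 2-dimensional projection of $\mix$ in $\R^d$. By Lemma~\ref{lm:dim-sampling}, the required moment accuracy $\noise=(\eps/k)^{O(k)}$ can be attained from $(k/\eps)^{O(k)}$ many $(2k-1)$-snapshots of $\mix$ --- crucially, the sample-complexity bound $2^{O(kp)}\noise^{-2}\log(1/\eta)$ has $p\in\{1,2\}$ and no dependence on the ambient dimension $d$, and a union bound over the $O(k)$ different projections queried by Algorithm~\ref{alg:dim3} only costs an extra logarithmic factor. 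Lemma~\ref{lm:lowdimtopic} then produces $\cmix$ with $\tran(\cmix,\proj{B}(\mix))\leq O(\eps)$, and combining with $\tran(\mix,\proj{B}(\mix))\leq \eps$ via the triangle inequality gives $\tran(\cmix,\mix)\leq O(\eps)$.

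\textbf{Main obstacle.} The delicate piece is the dimension-reduction guarantee in the $L_1$-transportation metric: Davis--Kahan-type perturbation bounds naturally control the subspace error in the spectral ($L_2$) sense, whereas we need an $L_1$-bound on each spike's residual. This conversion costs up to a factor of $\sqrt{d}$ per spike, which forces us to estimate the second-moment matrix to correspondingly higher accuracy and therefore blows up the number of 2-snapshots. Since the theorem allows the 1- and 2-snapshot complexity to be $\poly(d)$, this is harmless --- we simply aim for $L_2$-subspace accuracy $\eps/\sqrt{d}$ and budget $\poly(d,k,1/\eps)$ samples. The important sanity check is that this extra accuracy does \emph{not} propagate into the $(2k-1)$-snapshot complexity, which it indeed does not, since once $B$ is fixed the low-dimensional recovery only accesses $\mix$ through 1- and 2-dimensional linear projections, whose sample-complexity bound in Lemma~\ref{lm:dim-sampling} is independent of $d$.
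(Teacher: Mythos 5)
Your high-level plan — reduce to $d'=O(k)$ dimensions via the \cite{li2015learning} preprocessing and then run the recovery algorithm inside the low-dimensional subspace — is the same as the paper's. But there is a genuine gap in how you handle the $L_1$-versus-$L_2$ mismatch, and your ``Main obstacle'' paragraph misdiagnoses exactly where the $\sqrt{d}$ penalty lands.

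You write that the $\sqrt{d}$ conversion factor ``forces us to estimate the second-moment matrix to correspondingly higher accuracy and therefore blows up the number of 2-snapshots,'' and then conclude that ``this extra accuracy does not propagate into the $(2k-1)$-snapshot complexity.'' This is incorrect. The $\sqrt{d}$ loss is incurred when you translate a transportation-distance guarantee \emph{in the subspace coordinates} back to an $L_1$ guarantee \emph{in $\R^d$}: for an orthonormal basis $\{b_1,\dots,b_{d'}\}$ of $B$, a point $a=\sum_j a_j b_j$ in $\Span(B)$ satisfies $\|a\|_1^{\R^d}\le \max_j\|b_j\|_1\cdot\sum_j|a_j|\le \sqrt{d}\sum_j|a_j|$. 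So if Algorithm~\ref{alg:dim3} returns a $k$-spike estimate within transportation distance $\delta$ in the $d'$-dimensional coordinates, the guarantee in $\R^d$-$L_1$ is only $\sqrt{d}\,\delta$. To hit the target $\eps$, you must set $\delta\approx\eps/\sqrt{d}$, and since Theorem~\ref{thm:highdim-kspike} requires moment accuracy $\noise = (\delta/(d'k))^{\Theta(k)}$, the moment oracle must be run at accuracy $(\eps/(\sqrt{d}k))^{\Theta(k)}$. Plugging this $\noise$ into Lemma~\ref{lm:dim-sampling}'s bound $2^{O(kp)}\noise^{-2}$ yields a $(2k-1)$-snapshot count of $(k\sqrt{d}/\eps)^{\Theta(k)}$, which is \emph{not} $d$-independent. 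The fact that Lemma~\ref{lm:dim-sampling}'s bound has no explicit $d$ is a red herring: $d$ enters through the required $\noise$, not through the sampling lemma.

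The paper avoids this precisely by using the structured subspace of Lemma~\ref{lm:dimensionreduction}: its orthonormal basis vectors satisfy $\|b_i\|_\infty\le O(k^{3/2}\eps^{-2}d^{-1/2})$, which you do not use. This bound has two effects that cancel the $\sqrt{d}$. First, the projected spikes $\langle\veca_i,b_j\rangle$ are bounded by $\|\veca_i\|_1\|b_j\|_\infty = O(\poly(k,1/\eps)\,d^{-1/2})$, so the projected mixture lives in a tiny box; after rescaling to a unit domain (the step with factor $2mL\sim d^{-1/2}\poly(k,1/\eps)$ in the paper's proof), a recovery accuracy $\eps_1$ there translates to only $\sim d^{-1/2}\poly(k,1/\eps)\,\eps_1$ in subspace coordinates. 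Second, the $L_1$ conversion costs $d\max_i\|b_i\|_\infty = O(\sqrt{d}\poly(k,1/\eps))$. The $d^{\pm 1/2}$ factors cancel, leaving a $\poly(k,1/\eps)$ blow-up and a $d$-independent $(2k-1)$-snapshot count. Your proof also conflates ``top singular subspace of the empirical second moment'' with ``the dimension-reduction procedure of \cite{li2015learning}'' — the paper explicitly warns (with the $(1/d,\dots,1/d)$ vs.\ $(2/d,\dots,2/d,0,\dots,0)$ example) that an arbitrary $k$-dimensional subspace, even one exactly spanning the spikes, is not sufficient; the $L_\infty$ structure of $B$ is essential. Finally, you skip the mechanism for producing $K$-snapshots of the projected mixture; the paper does this with a randomized word-translation $q_{i,j}=f(b_{j,i})$, which is the concrete realization of the rescaling step above, and the details of this construction are where the bookkeeping lives.
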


\vspace{-0.4cm}

\subsection{Applications to Gaussian Mixture Learning}
\label{sec:Gaussian}

This section shows how to leverage our results
for sparse moment problems to obtain improved algorithms for learning Gaussian mixtures. The full details can be found in Appendix~\ref{sec:Gaussian-full}.
We consider the following setting
studied in \citet{wu2020optimal, doss2020optimal}.
We parameterise a $k$-Gaussian mixture in $\mathbb{R}^d$ as $\mix_N = (\veca, \vecw, \Sigma)$. Here, $\veca = \{\veca_1, \veca_2, \cdots, \veca_k\}$ and $\vecw = \{w_1, w_2, \cdots, w_k\} \in \Delta_{k-1}$ where $\veca_i \in \mathbb{R}^d$ and $w_i \in [0, 1]$ represents the mean and weight of $i$th component.
We assume all $k$ mixture components share a common covariance matrix $\Sigma \in \mathbb{R}^{d\times d}$ that is known in advance and the maximal eigenvalue of $\Sigma$ is bounded by a constant. 
We focus on parameter learning, that is, to learn the parameter $\veca$ and $\vecw$ given known covariance matrix $\Sigma$ and a set of i.i.d. samples from $\mix_N = \sum_{i=1}^{k} w_i N(\veca_i, \Sigma)$. We have the following result:

\begin{theorem} 
\label{thm:ndim-Gaussian}
There is an algorithm that can learn an arbitrary $d$-dimensional $k$-spike Gaussian mixture (with known common covariance matrix $\Sigma$) within transportation distance $O(\eps)$ with probability at least $0.99$ 
using $(k/\eps)^{O(k)} + \poly(1/\eps, d, k)$ samples.
\end{theorem}

\vspace{-0.2cm}

\section{Concluding Remarks}
We provide efficient algorithms for learning the $k$-mixture over discrete distributions from noisy moments. 
We measure the accuracy in terms of transportation distance, and our analysis is independent of the minimal separation.
The techniques used in our analysis for the one-dimensional case may be useful in the perturbative analysis of other problems involving the Hankel matrix or the Vandermonde matrix (such as super-resolution).

Our problem is a special case of learning mixtures of product distribution \citep{gordon2021source}, which is further a special case of the general problem of learning mixtures of Bayesian networks \citep{gordon2021identifying}, which has important applications in causal inferences. In fact, the algorithms in \citet{gordon2021source,gordon2021identifying} used the algorithm for sparse Hausdorff moment problem as a subroutine. It would be interesting to see if our techniques can be useful in these problems.

\section{Acknowledgement}

We would like to thank Yuval Rabani and Leonard Schulman for their insightful 
discussions and anonymous reviewers for many helpful comments.  In particular, Leonard Schulman pointed out that Lemma~\ref{lm:dim1-step3-matrix}
is related to the Schur polynomial. 
JL also would like to thank Xuan Wu for several discussions around the 2015
and a version of Lemma~\ref{lm:inequality1d} was proved back then, independent from \citet{wu2020optimal}.
The authors are supported in part  by the National Natural Science Foundation of China Grant 62161146004.


\newpage

{\centering {\huge\textsc{Appendix}}}
\appendix

\section{Transportation Distance for Non-Probability Measures}
\label{app:transfornonprob}

Let $\lip$ be the set of 1-Lipschitz functions on $\R^d$, i.e.,
$\lip:=\{f: \R^d\rightarrow \R \mid |f(x)-f(y)|\leq \|x-y\|_1,  \forall x,y\in \R^d\}$.
We need the following important theorem by Kantorovich and Rubinstein (see e.g., \citet{dudley2002real}),
which states the dual formulation of transportation distance:
\begin{align}
\label{eq:trans}
\tran(P,Q)=\sup\left\{\int f \d(P-Q): f\in \lip\right\}.
\end{align}

The dual formulation \eqref{eq:trans} can be generalized to non-probability measures. 
For two signed measures $P$ and $Q$ defined over $\R^d$ with $\int P \d x = \int Q \d x = 1$, we still can use \eqref{eq:trans} to compute its transportation distance.

We now introduce the equivalent primal formulation as follows:
For some signed measure $R$, we define measure $R^+$ as its positive components that $R^+(x) = \max\{R(x), 0\}$. According to the duality formulation \eqref{eq:trans}, one can see $\tran(P, Q) = \tran((P-Q)^+, (Q-P)^+)$ where both $(P-Q)^+$ and $(Q-P)^+$ are non-negative. From the primal formulation of transportation distance, we conclude another definition for generalized transportation distance:
\begin{align}
\label{eq:signedtrans}
\tran(P, Q) = \inf \left \{ \int \|x - y\|_1 \d \mu(x, y) : \mu \in M((P-Q)^+, (Q-P)^+) \right \}.
\end{align}

We also need to define the transportation distance over
complex measures.
Denote $\mathbb{B}_{\C} = \{\alpha \in \C : |\alpha|\leq 1\}$.
For some complex measure $P$ that $\int_{\C}  \textrm{d}P(x) = 1$. We denote the real and imaginary components of $P$ as $P^{\textrm{r}}$ and $P^{\textrm{i}}$ respectively that $P(x) = P^{\textrm{r}}(x) + \imag P^{\textrm{i}}(x)$ for $P^{\textrm{r}}(x), P^{\textrm{i}}(x) \in \mathbb{R}$. We generalize transportation distance to complex weights by
\begin{align}
\label{eq:complextrans}
\tran(P, Q) = \tran(P^{\textrm{r}}, Q^{\textrm{r}}) + \tran(P^{\textrm{i}}, Q^{\textrm{i}}).
\end{align}
We also have this transportation distance in a dual form:
\begin{align*}
\tran(P, Q) &= \sup\left\{\int f^{\textrm{r}} \d(P^{\textrm{r}}-Q^{\textrm{r}}): f^{\textrm{r}}\in \lip\right\} + \sup\left\{\int f^{\textrm{i}}  \d(P^{\textrm{i}}-Q^{\textrm{i}}): f^{\textrm{i}}\in \lip\right\} \\
&= \sup \left \{  \textrm{real} \left(\int (f^{\textrm{r}} - \imag f^{\textrm{i}})\d (P - Q) \right ) :  f^{\textrm{r}}, f^{\textrm{i}} \in \lip\right\} \\
&\leq 2\sup \left \{  \int f\d (P - Q)  :  f \in \lip_{\C}\right\}
\end{align*}
where $\lip_{\C}:=\{f: \C\rightarrow \C \mid |f(x)-f(y)|\leq |x-y| \text{ for any }x,y\in \C\}$ is the complex 1-Lipschitz functions on $\C$, and the last inequality holds since we can assign $f(x) = (f^{\textrm{r}}(x) - \imag f^{\textrm{i}}(x)) / 2$.

\section{Moment-Transportation Inequalities}

\label{app:momenttransineq}

Our analysis needs to bound the transportation distance
by a function of moment distance.
We first present a moment-transportation inequality
in one dimension (Lemma~\ref{lm:inequality1d}),
and then generalize the result to complex numbers and higher dimensions.
The first such inequality in one dimension is obtained
in \citet[(Lemma 5.4)]{rabani2014learning} (with worse parameters).
Lemma~\ref{lm:inequality1d} is firstly proved by
\citet{wu2020optimal} in the context of learning Gaussian mixtures.
Here, we prove the theorem in our notations for completeness,
and later we extend the proof to complex domains in Section~\ref{subsec:2dmoment}.

\subsection{A One-Dimensional Moment-Transportation Inequality}
\label{subsec:1dmoment}

We allow more general mixtures with spikes in $[-1,1]$ and negative weights.
This slight generalization will be useful since some intermediate coordinates and weights may be negative. Recall that $\signedsimplex_{k-1}=\{\vecw=(w_1,\cdots, w_k)\in \R^k \mid \sum_{i=1}^k w_i=1\}$.

\begin{lemma}
\label{lm:inequality1d} (\citet{wu2020optimal})
For any two mixtures with $k$ components $\mix,\mix' \in \Spike([-1, 1],\Sigma_{k-1})$, it holds that
$$
\tran(\mix,\mix')\leq O(k\Mdis_K(\mix,\mix')^{\frac{1}{2k-1}}).
$$
\end{lemma}

We begin with some notations.
Let $\supp = \supp(\mix) \cup \supp(\mix')$ and let $n=|\supp|$ ($n\leq 2k$).
Arrange the points
in $\supp$ as $\veca=(\alpha_1,\cdots,\alpha_n)$ where $\alpha_1< \cdots <\alpha_n$.

\begin{definition}
Let $\veca=(\alpha_1,\cdots,\alpha_n)$ such that $-1\leq \alpha_1< \cdots< \alpha_n\leq 1$.
Let $\calP_{\veca}$ denote the set of polynomials of degree at most $n-1$ and 1-Lipschitz over 
the discrete points in $\veca$, i.e.,
$$
\calP_{\veca}=\{f\mid \deg f\leq n-1,f(\alpha_1)=0; \, |f(\alpha_i)-f(\alpha_j)|\leq|\alpha_i-\alpha_j|\,\, \forall i,j\in [n].\}
$$
\end{definition}
Note that we do not require $f$ to be 1-Lipschitz over the entire interval $[-1,1]$.
From the dual formulation of transportation distance (see e.g., \citet{dudley2002real}), we have the following proposition:
\begin{proposition}
\label{pro:simple}
$
\tran(\mix,\mix')=\sup\left\{\int f \d(\mix-\mix'): f\in \lip\right\}=\sup\left\{\int p \d(\mix-\mix'): p\in \calP_{\veca}\right\}
$.
\end{proposition}

We can view $\calP_{\veca}$ as a subset in the linear space of all polynomials of degree at most $n-1$.
In fact, $\calP_{\veca}$ is a convex polytope
(a convex combination of two  polynomials being 1-Lipschitz over $\veca$
is also a polynomial that is 1-Lipschitz over $\veca$).
The {\em height} of a polynomial $\sum_i c_i x^i$ is the maximum absolute coefficient $\max_i |c_i|$.
As we will see shortly, the height of a polynomial in $\calP_{\veca}$ is related to the required moment accuracy, and we need the height to be upper bounded by a value independent of the minimum separation
$\minsep$. However, a polynomial in $\calP_{\veca}$ may have a very large height (depending on the inverse of the minimum 
separation $1/|\alpha_{i+1}-\alpha_i|$).
This can be seen from the Lagrangian interpolation formula:
$p(x) := \sum_{j=1}^{n} p(\alpha_j) \ell_j(x)$
where $\ell_j$ is the Lagrange basis polynomial
$
\ell_j(x) := \prod_{1\le m\le n, m\neq j} (x-\alpha_m)/(\alpha_j-\alpha_m).
$
To remedy this, one can show that for any polynomial in $\calP_{\veca}$, there is an approximate polynomial with bounded height:

\begin{lemma}
\label{lm:decomposition}
For any polynomial $p(x)\in \calP_{\veca}$ and $\eta>0$, 
there is a polynomial $p_{\eta}(x)$ of degree at most $n-1$ such that the following properties hold:
\begin{enumerate}
\item
$|p_{\eta}(\alpha_i)-p(\alpha_i)|\leq 2\eta$, for any $i\in [n]$.
\item
The height of $p_{\eta}$ is at most $n2^{2n-2}(\frac{n}{\eta})^{n-2}$.
\end{enumerate}
\end{lemma}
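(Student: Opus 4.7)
The idea is to modify the values of $p$ at the discrete points $\alpha_i$ so that the resulting interpolating polynomial $p_\eta$ has controlled divided differences, which in turn bounds its height via Newton's interpolation formula. First I would partition the sorted points $\alpha_1 < \cdots < \alpha_n$ into maximal clusters $C_1, \ldots, C_m$ in which all consecutive gaps are strictly less than $\eta/n$; then consecutive clusters are separated by at least $\eta/n$, and because each cluster has at most $n$ points its diameter is less than $\eta$. I would then define $p_\eta$ to be the unique polynomial of degree at most $n-1$ whose value at each $\alpha_i$ is obtained by linearly interpolating $p$ between the leftmost and rightmost points of the cluster containing $\alpha_i$; this makes $p_\eta$ agree with $p$ exactly at cluster endpoints. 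Since $p$ is $1$-Lipschitz on $\veca$ and each cluster has diameter less than $\eta$, the values of both $p$ and $p_\eta$ on any cluster lie within a window of width less than $\eta$, which yields $|p_\eta(\alpha_i) - p(\alpha_i)| < 2\eta$.

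\textbf{Bounding the height.} Expand $p_\eta(x) = \sum_{k=0}^{n-1} D_k \prod_{j=1}^{k}(x - \alpha_j)$ using Newton's formula with $D_k := F[\alpha_1, \ldots, \alpha_{k+1}]$. The crucial observation is a dichotomy for every divided difference: if $\alpha_{i_1} < \cdots < \alpha_{i_{k+1}}$ all lie in a single cluster then $p_\eta$ is linear on them and $F[\alpha_{i_1}, \ldots, \alpha_{i_{k+1}}] = 0$ for every $k \geq 2$; otherwise the outermost span satisfies $\alpha_{i_{k+1}} - \alpha_{i_1} \geq \eta/n$. Moreover, because $p_\eta$ agrees with $p$ at all cluster boundaries and interpolates linearly inside each cluster with slope at most $1$, $p_\eta$ is $1$-Lipschitz on the discrete sequence $\veca$, so every first-order divided difference $F[\alpha_i, \alpha_j]$ is bounded by $1$. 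Combining these two facts with the standard recursion
$$F[\alpha_{i_1}, \ldots, \alpha_{i_{k+1}}] = \frac{F[\alpha_{i_2}, \ldots, \alpha_{i_{k+1}}] - F[\alpha_{i_1}, \ldots, \alpha_{i_k}]}{\alpha_{i_{k+1}} - \alpha_{i_1}}$$
and inducting on $k$ gives $|D_k| \leq 2^{k-1}(n/\eta)^{k-1}$ for $k \geq 1$. Since each $\prod_{j=1}^{k}(x-\alpha_j)$ has coefficients equal up to sign to elementary symmetric polynomials in $\alpha_j \in [-1,1]$ and hence has height at most $2^k$, summing over $k$ yields $\mathrm{height}(p_\eta) \leq \sum_{k=0}^{n-1}|D_k|\cdot 2^k \leq n \cdot 2^{2n-2}(n/\eta)^{n-2}$.

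\textbf{Main obstacle.} The delicate point is choosing a modification that simultaneously keeps the perturbation within $2\eta$ \emph{and} preserves the $1$-Lipschitz property on $\veca$, so that $|D_1| = O(1)$ rather than $O(n)$. A simpler construction that assigns a single constant value per cluster still satisfies the perturbation bound, but a first-order divided difference across a cluster boundary then picks up an extra term of size $\eta/|\alpha_{i+1} - \alpha_i|$, which can be as large as $n$. This factor would propagate through the recursion and inflate the final bound by a full power of $n/\eta$. Using linear interpolation between the cluster endpoints---where $p_\eta$ is forced to coincide with $p$ exactly---is what cancels this extra factor and produces the tight $(n/\eta)^{n-2}$ scaling claimed in the lemma.
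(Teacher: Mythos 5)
Your proof is correct, and it takes a genuinely different route from the paper's. The paper first invokes Carath\'eodory's theorem to decompose an arbitrary $p\in\calP_{\veca}$ into a convex combination of at most $n$ extreme polynomials in $\calF_{\veca}$ (those whose consecutive slopes over $\veca$ are exactly $\pm1$), then constructs for each extreme polynomial a piecewise-linear function $L$ with slopes $\pm 1$ on merged intervals, interpolates $L$ at the $\alpha_i$, and finally recombines via the convex weights. The exact $\pm1$ slope structure is what lets the paper's closeness lemma (its Lemma~\ref{lm:close}) go through, and is the only reason the decomposition step is needed. You bypass that entirely: by linearly interpolating $p$ between the endpoints of each cluster, you obtain a modified value sequence that is still $1$-Lipschitz on $\veca$, and the observation that $|F[\alpha_t,\alpha_{t+1}]|\leq 1$ (rather than $=1$) is all the Newton-differences induction actually uses. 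After that the two proofs coincide: the dichotomy between ``all points in one cluster/segment, so the higher divided difference vanishes'' versus ``span at least $\eta/n$, so divide and recurse'' is the same, and so is the $\sum_k |D_k|\,2^k$ bound on the height. Your identification of the ``main obstacle'' is also accurate --- a piecewise-constant modification would push the cross-boundary first-order difference up to roughly $1+n$, wrecking the base case, and linear interpolation to the true endpoint values is exactly what fixes it. Net effect: the same bound, with a shorter argument that avoids Carath\'eodory and the auxiliary family $\calF_{\veca}$ altogether.

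One cosmetic point: in the dichotomy you should be explicit that ``not all in one cluster $\Rightarrow$ span $\geq \eta/n$'' because the span necessarily contains at least one inter-cluster gap; the converse direction of the dichotomy (span $< \eta/n \Rightarrow$ same cluster, hence vanishing) is what actually drives the recursion, and it holds because a total span below $\eta/n$ forces every consecutive gap in that block below $\eta/n$. You use this correctly but it is worth stating, since the paper's corresponding step (``all intervals in this range are small, thus merge together'') makes the same point.
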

 
Before proving Lemma~\ref{lm:decomposition}, we show Lemma~\ref{lm:inequality1d} can be easily derived from
Proposition~\ref{pro:simple}
and Lemma~\ref{lm:decomposition}. 

\begin{proofof}{Lemma~\ref{lm:inequality1d}}
Fix any constant $\eta>0$.
Let $\noise=\Mdis_K(\mix,\mix')$
\begin{align*}
\tran(\mix,\mix')&=\sup_{f\in \lip} \int f \d(\mix-\mix')
=\sup_{p\in \calP_{\veca}}\int p \d(\mix-\mix')  & \hfill(\text{Proposition~\ref{pro:simple}})\\
&\leq 4\eta+\sup_{p\in \calP_{\veca}} \int p_{\eta} \d(\mix-\mix')  & (\text{Lemma~\ref{lm:decomposition}})\\
&\leq 4\eta+\sup_{p\in \calP_{\veca}} \sum_{i=1}^{n-1} n2^{2n-2}\left(\frac{n}{\eta}\right)^{n-2}\Mdis_K(\mix,\mix')
& (\text{Lemma~\ref{lm:decomposition}})\\
&\leq 4\eta+2^{4k}(2k)^{2k}\frac{\noise}{\eta^{2k-2}} & (n\leq 2k)
\end{align*}
By choosing $\eta=16k\noise^{\frac{1}{2k-1}}$,
we conclude that $\tran(\mix,\mix')\leq O(k\noise^{\frac{1}{2k-1}})$.
\end{proofof}


Next, we prove Lemma~\ref{lm:decomposition}.
We introduce the following set $\calF_{\veca}$ of special polynomials.
They are in fact the extreme points of the polytope $\calP_{\veca}$.

\begin{definition}
Let $\veca=(\alpha_1, \cdots, \alpha_n)$ such that $-1\leq \alpha_1<\cdots<\alpha_n\leq 1$.
Let
$$
\calF_{\veca}=\{p\mid \deg p\leq n-1, p(\alpha_1)=0; \,|p(\alpha_{i+1})-p(\alpha_i)|=|\alpha_{i+1}-\alpha_i|\,\, \forall i\in [n-1]\}
$$
\end{definition}

It is easy to see that $\calF_{\veca}\subset \calP_{\veca}$  and $|\calF_{\veca}|=2^{n-1}$.
Now, we modify each polynomial
$p\in \calF_{\veca}$ slightly, as follows.
Consider the intervals $[\alpha_1,\alpha_2], \cdots, [\alpha_{n-1},\alpha_n]$.
If $\alpha_i-\alpha_{i-1}\leq \eta/n$, we say the interval $[\alpha_{i-1}, \alpha_i]$ is 
a {\em small interval}.
Otherwise, it is {\em large}.
We first merge all consecutive small intervals.
For each resulting interval, we merge it with the large interval to its right.
Note that we never merge two large intervals together.
Let $S=\{\alpha_{i_1}=\alpha_1, \alpha_{i_2}, \cdots, \alpha_{i_m}=\alpha_n\}\subseteq \veca$
be the endpoints of the current intervals.
It is easy to see the distance between any two points in $S$
is at least $\eta/n$ (since the current interval contains exactly one original large interval).
Define a continuous piecewise linear function $L:[\alpha_1,\alpha_n]\rightarrow \R$ as follows:
(1)
$L(\alpha_1)=p(\alpha_1)=0$.
(2)
The breaking points are the points in $S$;
(3)
Each linear piece of $L$ has slope either $1$ or $-1$;
for two consecutive breaking points $\alpha_{i_j}, \alpha_{i_{j+1}}\in S$,
if $p(\alpha_{i_j})>p(\alpha_{i_{j+1}})$, the slope of the corresponding piece is $-1$.
Otherwise, it is $1$.

\begin{lemma}
\label{lm:close}
For any $\alpha_i\in \veca$, $|L(\alpha_i)-p(\alpha_i)|\leq 2\eta$.
\end{lemma}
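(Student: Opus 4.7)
The plan is to prove the bound in two stages: first at the breakpoints $\alpha_{i_j}\in S$ by a telescoping per-block estimate, then at the remaining $\alpha_i\in\veca$ by controlling the discrepancy within a single block.

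For a block running from $\alpha_{i_k}$ to $\alpha_{i_{k+1}}$, I will write $s_k$ for the total length of its small intervals, $\ell_k>\eta/n$ for the length of its unique trailing large interval, and set $\Delta L_k=L(\alpha_{i_{k+1}})-L(\alpha_{i_k})=\sigma_k(s_k+\ell_k)$ and $\Delta p_k=p(\alpha_{i_{k+1}})-p(\alpha_{i_k})$, where $\sigma_k=\mathrm{sign}(\Delta p_k)$ is the slope of $L$ on that block. Since $|\Delta p_k|\le s_k+\ell_k$ and both quantities share sign $\sigma_k$ by construction, the per-block error simplifies to $|\Delta L_k-\Delta p_k|=(s_k+\ell_k)-|\Delta p_k|$. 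I will then perform a short case analysis on whether the sign $\epsilon_k^{\mathrm{large}}$ of the large jump agrees with $\sigma_k$: in the agreeing case, $|\Delta p_k|=\ell_k+\sigma_k S_k$ (with $S_k$ the signed sum of small jumps in the block, $|S_k|\le s_k$), so the error equals $s_k-\sigma_k S_k\le 2s_k$; in the disagreeing case, the inequality $\sigma_k S_k>\ell_k$ forced by the sign flip implies $\ell_k<s_k$, and the error $s_k+2\ell_k-\sigma_k S_k$ is bounded by $s_k+\ell_k<2s_k$. Either way, $|\Delta L_k-\Delta p_k|\le 2s_k$.

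From $L(\alpha_1)=p(\alpha_1)=0$, telescoping together with the triangle inequality will give at each breakpoint $\alpha_{i_j}\in S$,
\begin{equation*}
|L(\alpha_{i_j})-p(\alpha_{i_j})|\le \sum_{k<j}|\Delta L_k-\Delta p_k|\le 2\sum_{k<j} s_k\le 2\sum_{t\text{ small}}(\alpha_{t+1}-\alpha_t)\le 2(n-1)\tfrac{\eta}{n}<2\eta,
\end{equation*}
using that the small intervals across distinct blocks are disjoint, there are at most $n-1$ of them, and each has length at most $\eta/n$. For a non-breakpoint $\alpha_i$ in some block $k$, I will observe that $\alpha_i$ must lie in the small-interval portion (the interior of the large interval contains no point of $\veca$), so $\delta:=\alpha_i-\alpha_{i_k}\le s_k$. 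Writing
\begin{equation*}
L(\alpha_i)-p(\alpha_i)=\bigl(L(\alpha_{i_k})-p(\alpha_{i_k})\bigr)+\Bigl(\sigma_k\delta-\sum_{i_k\le t<i}\epsilon_t(\alpha_{t+1}-\alpha_t)\Bigr),
\end{equation*}
the second summand has magnitude at most $2\delta\le 2s_k$ since each term of the inner sum is $0$ or $\pm 2(\alpha_{t+1}-\alpha_t)$. Combining with the breakpoint bound yields $|L(\alpha_i)-p(\alpha_i)|\le 2\sum_{k'\le k}s_{k'}<2\eta$, as desired.

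The main obstacle is the per-block bound in the regime $\sigma_k\ne\epsilon_k^{\mathrm{large}}$: a naive estimate would only give an error of order $s_k+\ell_k$, which is useless when $\ell_k$ is large; the key realization is that the sign flip itself forces the net small jump to overpower the large jump, which is only possible if $\ell_k<s_k$, so the block's contribution is still controlled by its small-interval length. Once this per-block bound is in hand, the remainder is a clean telescoping argument driven by the uniform bound $\sum_k s_k<\eta$, which is independent of how many or how long the large intervals are.
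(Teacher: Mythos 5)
Your proof is correct and follows essentially the same route as the paper's: in both, the crucial point is that the large interval in each block contributes no error because $L$'s slope on that block is chosen to agree with $\mathrm{sign}(\Delta p_k)$, so the accumulated error is at most twice the total length of the small intervals, which is below $\eta$. The paper packages this as a point-by-point induction proving $|L(\alpha_i)-p(\alpha_i)|\le 2i\eta/n$ (with a small-step case and a jump-to-previous-breakpoint case), while you telescope explicitly over blocks and carry out a sign case analysis for the large jump; the underlying estimates are the same.
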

\begin{proof}
We prove inductively that $|L(\alpha_i)-p(\alpha_i)|\leq 2i\eta/n$.
The base case ($i=1$) is trivial.
Suppose the lemma holds true for all value at most
$i$ and now we show it holds for $i+1$.
There are two cases.
If $\alpha_{i+1}-\alpha_{i}\leq \eta/n$,
we have
\begin{align*}
|L(\alpha_{i+1})-p(\alpha_{i+1})|& \leq |L(\alpha_{i+1})-L(\alpha_{i})|+|L(\alpha_{i})-p(\alpha_{i})|+|p(\alpha_{i})-p(\alpha_{i+1})| \\
&= 2|\alpha_{i+1}-\alpha_{i}|+|L(\alpha_{i})-p(\alpha_{i})|\leq 2(i+1)\eta/n.
\end{align*}
If $\alpha_{i+1}-\alpha_{i}> \eta/n$, we have $\alpha_{i+1}\in S$ (by the merge procedure).
Suppose $\alpha_j$ ($\alpha\leq i$) is the point in $S$ right before $\alpha_{i+1}$.
We can see that (by the definition of $p$):
$$
|p(\alpha_j)-p(\alpha_{i+1})| \in |\alpha_{i+1}-\alpha_{i}|\pm |\alpha_{i}-\alpha_{j}| \subseteq |\alpha_{i+1}-\alpha_{i}|\pm (j-i)\eta/n.
$$
In addition, we have 
\begin{align*}
|L(\alpha_{i+1})-p(\alpha_{i+1})|& \leq |L(\alpha_{i+1})-L(\alpha_{j})+ p(\alpha_{j})-p(\alpha_{i+1})|+|L(\alpha_{j})-p(\alpha_{j})| \\
&\leq 2(i+1-j)\eta/n +2j\eta/n\leq 2(i+1)\eta/n,
\end{align*}
where second inequality holds since $L(\alpha_{i+1})-L(\alpha_{j})=\alpha_{i-1}-\alpha_j$ when $p(\alpha_{i+1})>p(\alpha_j)$ and 
$L(\alpha_{i+1})-L(\alpha_{j})=\alpha_{j}-\alpha_{i+1}$ when $p(\alpha_{i+1})\leq p(\alpha_j)$.
\end{proof}

Let $T_\eta(p)$ be the polynomial with degree $\leq n-1$ that interpolates
$L(\alpha_i)$, (i.e. $T_{\eta}(p)(\alpha_i)=L(\alpha_i)$ for all $i\in [n]$. )
$T_\eta(p)$ is unique by this definition and
by Lemma~\ref{lm:close}, $|T_\eta(p)(\alpha_i)-p(\alpha_i)|\leq 2i\cdot\frac{\eta}{n}\leq 2\eta$.
Now, we show the height of $T_\eta(p)$ can be bounded, independent of the
minimum separation $|\alpha_{i+1}-\alpha_i|$, due to its special structure.

\begin{lemma}
\label{lm:robustpolynomial}
For any polynomial $p(x) \in \calF_{\veca}$, define
polynomial $T_\eta(p)$ as the above.
Suppose $T_\eta(p)=\sum_{i=0}^{n-1} c_i x^i$.
We have $|c_t|\leq n2^{2n-2}(\frac{n}{\eta})^{n-2}$ for $t\in [n-1]$.
\end{lemma}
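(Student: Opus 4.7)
The plan is to expand $T_\eta(p)$ in Newton's divided-difference form and then bound each monomial coefficient by controlling the divided differences. Using the natural ordering $\alpha_1<\cdots<\alpha_n$, write
$$T_\eta(p)(x)=\sum_{k=0}^{n-1} D_k \prod_{j=1}^{k}(x-\alpha_j),\qquad D_k:=L[\alpha_1,\ldots,\alpha_{k+1}],$$
so $D_0=L(\alpha_1)=p(\alpha_1)=0$. The crux of the proof is the bound $|D_k|\leq (2n/\eta)^{k-1}$ for every $k\geq 1$. Given this, expanding each Newton basis element into monomials and bounding the elementary symmetric polynomial coefficients by $\binom{k}{t}\leq 2^{n-1}$ (since $|\alpha_j|\leq 1$) gives
$$|c_t|\;\leq\;\sum_{k=t}^{n-1}|D_k|\binom{k}{t}\;\leq\; n\cdot 2^{n-1}\cdot (2n/\eta)^{n-2} \;\leq\; n\cdot 2^{2n-2}\cdot (n/\eta)^{n-2},$$
which is exactly the claimed bound.

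The divided-difference estimate is proved by induction on $k$, in fact establishing the stronger statement that $|L[\alpha_i,\ldots,\alpha_{i+k}]|\leq (2n/\eta)^{k-1}$ for every consecutive sub-range $\alpha_i,\ldots,\alpha_{i+k}$. The base case $k=1$ follows because $L$ has slope $\pm 1$ on every piece, so $|L(\alpha_{i+1})-L(\alpha_i)|\leq \alpha_{i+1}-\alpha_i$. For the inductive step I would apply the standard recursion
$$L[\alpha_i,\ldots,\alpha_{i+k}]=\frac{L[\alpha_{i+1},\ldots,\alpha_{i+k}]-L[\alpha_i,\ldots,\alpha_{i+k-1}]}{\alpha_{i+k}-\alpha_i}$$
and split into two cases. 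If $\alpha_i,\ldots,\alpha_{i+k}$ all lie inside a single linear piece of $L$ (i.e., inside one interval $[s_j,s_{j+1}]$ bounded by consecutive points of $S$), then $L$ is affine on these points and the divided difference vanishes for $k\geq 2$. Otherwise the points straddle some breakpoint of $S$, and one would show $\alpha_{i+k}-\alpha_i\geq\eta/n$; plugging the inductive bound $2(2n/\eta)^{k-2}$ on the numerator into the recursion then yields $|D_k|\leq (2n/\eta)^{k-1}$.

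The nontrivial ingredient, and the main obstacle, is the lower bound $\alpha_{i+k}-\alpha_i\geq\eta/n$ in the straddling case. This is where the asymmetric construction of $S$ enters crucially: by the merging rule, every merged block ends with its (unique) large interval, so the gap immediately to the left of any $s\in S\setminus\{\alpha_1\}$ strictly exceeds $\eta/n$. Consequently any $\alpha_j$ strictly less than a breakpoint $s$ satisfies $s-\alpha_j>\eta/n$; in the straddling case with $\alpha_i<s\leq \alpha_{i+k}$ this immediately gives $\alpha_{i+k}-\alpha_i>\eta/n$. The remaining edge case $\alpha_i=s$ is handled by observing that if the divided difference is non-zero and $k\geq 2$, the points $\alpha_{i+1},\ldots,\alpha_{i+k}$ cannot all lie in the single segment starting at $s$, so there must be a second breakpoint $s'\in S$ with $s<s'\leq\alpha_{i+k}$, yielding $\alpha_{i+k}-\alpha_i\geq s'-s\geq\eta/n$. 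All other steps are routine manipulations with Newton's formula and elementary symmetric polynomials.
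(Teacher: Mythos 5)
Your proposal follows essentially the same route as the paper's proof: expand $T_\eta(p)$ in Newton's divided-difference form, prove by induction that the $(k)$th divided difference is bounded by $(2n/\eta)^{k-1}$, and then collect coefficients using the fact that the elementary symmetric polynomials of points in $[-1,1]$ are bounded by binomial coefficients. The inductive step in your proof is organized around a topological case split (all points inside one linear piece of $L$, versus straddling a breakpoint of $S$) whereas the paper uses the metrically equivalent split on whether $\alpha_{t+i-1}-\alpha_t\leq\eta/n$; the latter is slightly cleaner because it directly provides the denominator lower bound and avoids the $\alpha_i=s$ edge case you need. One small caution in your edge-case paragraph: the blanket statement that the gap to the left of every $s\in S\setminus\{\alpha_1\}$ exceeds $\eta/n$, and that consecutive breakpoints are $\geq\eta/n$ apart, can fail at $s=\alpha_n$ when the rightmost run of intervals is entirely small (that merged block then contains no large interval). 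Your argument survives this because the straddled breakpoint can always be taken strictly interior to the index range (hence strictly less than $\alpha_{i+k}\leq\alpha_n$), but it is worth stating that restriction explicitly rather than invoking the claim for all of $S$.
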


\begin{proof}
The key to the proof is Newton's interpolation formula, which we briefly review here
(see e.g., \citet{hamming2012numerical}).
Let $F[\alpha_1,\cdots,\alpha_i]$ be the $i$th order difference of $T_\eta(p)$, which can be defined recursively:
$$
F[\alpha_t,\alpha_{t+1}] = \frac{T_{\eta}(p)(\alpha_{t+1}) - T_{\eta}(p)(\alpha_{t})}{\alpha_{t+1}-\alpha_t}, \cdots, 
$$
$$
F[\alpha_t,\cdots,\alpha_{t+i}] = \frac{F[\alpha_{t+1},\cdots,\alpha_{t+i}] - F[\alpha_{t},\cdots,\alpha_{t+i-1}]}{\alpha_{t+i}-\alpha_{i}}.
$$
Then, by Newton's interpolation formula, we can write that
$$
T_\eta(p)(x)=F[\alpha_1,\alpha_2](x-\alpha_1)+F[\alpha_1,\alpha_2,\alpha_3](x-\alpha_1)(x-\alpha_2)+\cdots+F[\alpha_1,\cdots,\alpha_n]\prod_{i=1}^{n-1}(x-\alpha_i).
$$
By the definition of $T_\eta(p)$,
we know that every $2$nd order difference of $T_\eta(p)$ is either $1$ or $-1$.
Now, we show inductively that the absolute value of
the $i$th order difference absolute value is at most $2^{i-2}(\frac{n}{\eta})^{i-2}$ for any $i=3,\cdots,n$.
The base case is simple:
$F[\alpha,\alpha_{t+1}]=\pm 1$.
Now, we prove it for $i\geq 3$.
We distinguish two cases.
\begin{enumerate}
\item
If $\alpha_{t+i-1}-\alpha_t\leq \eta/n$,
all $\alpha_t, \alpha_{t+1}, \cdots, \alpha_{t+i-1}$ must belong to the same segment of $L$
(since all intervals in this range are small, thus merge together).
Therefore, we can see that
$F[\alpha_t,\alpha_{t+1}]=F[\alpha_{t+1},\alpha_{t+2}]=\cdots=F[\alpha_{t+1},\alpha_{t+i-1}]$,
from which we can see any 3rd order difference (hence the 4th, 5th, up to the $i$th)
in this interval is zero.
\item
Suppose $\alpha_{t+i-1}-\alpha_t>\eta/n$.
By the induction hypothesis, we have that
\begin{align*}
|F[\alpha_t,\cdots,\alpha_{t+i-1}]|&=\left|\frac{F[\alpha_t,\cdots,\alpha_{t+i-2}]-F[\alpha_{t+1},\cdots,\alpha_{t+i-1}]}{\alpha_{t+i-1}-\alpha_t}\right| \\
&\leq 2\cdot2^{i-3}\left(\frac{n}{\eta}\right)^{i-3}\cdot \frac{n}{\eta}
\leq 2^{i-2}\left(\frac{n}{\eta}\right)^{i-2}.
\end{align*}
\end{enumerate}
Also since $\alpha_j\in [-1,1]$, the absolute value of the coefficient of $x^t$ in
$\prod_{j=1}^i(x-\alpha_j)$ is less than $2^n$.
So, finally we have $c_i\leq n2^{2n-2}(\frac{\eta}{n})^{n-2}$.
\end{proof}

\begin{proofof}{Lemma~\ref{lm:decomposition}}
For every polynomial $p(x) \in \calP_{\veca}$.
Let $g_i=(p(\alpha_{i+1})-p(\alpha_i))/(\alpha_{i+1}-\alpha_i)$ for $i=[n-1]$.
So $-1\leq g_i\leq 1$, implying that the point $\vecg=(g_1,\cdots,g_{n-1})$ is in the $n-1$ dimensional hypercube.
Hence, by Carathedory theorem, there are $\lambda_1,\cdots,\lambda_{n}\geq 0, \lambda_1+\cdots+\lambda_{n}=1$,
such that $\vecg=\sum_{i=1}^{n}\lambda_i \vecq_i$ where $\vecq_1,\cdots,\vecq_{n}$ are $n$ vertices of the hypercube,
i.e., $\vecq_i\in \{-1,+1\}^{n-1}$.
For each $\vecq_i$, we define $p_i(x)$ be the polynomial with degree at most $n-1$
and
$$
p_i(\alpha_1)=0,\;
p_i(\alpha_m)=\sum_{j=1}^{m-1} q_{i,j}(\alpha_{j+1}-\alpha_j).
$$
where $q_{i,j}$ is the $j$th coordinate of $\vecq_i$.
We can see $p_i\in \calF_{\veca}$.
It is easy to verify that
$
p(x)=\sum_{i=1}^{n} \lambda_i p_i(x),
$
since both the LHS and RHS take the same values at $\{\alpha_1,\cdots, \alpha_n\}$.
For $\eta>0$, we define $p_{\eta}(x)$ as $p_{\eta}=\sum_{i=1}^{n} \lambda_i T_\eta(p_i)$.
We know $|T_\eta(p_i)(\alpha_j)-p_i(\alpha_j)|\leq 2\eta$ for all $i\in [n]$ and $j\in [n]$
and the height of each $T_\eta(p_i)$ is at most $n2^{2n-2}(\frac{n}{\eta})^{n-2}$.
So, $p_{\eta}(x)$ satisfies the properties of the lemma.
\end{proofof}

\subsection{Moment-Transportation Inequality over Complex Numbers}
\label{subsec:2dmoment}

In this subsection, we extend Lemma~\ref{lm:inequality1d} to complex numbers.
We allow the mixture components to be complex numbers, and
the mixture weight $w_i$ can also be complex numbers.
Recall in the complex domain,
the definition of transportation distance is extended according to \eqref{eq:complextrans}.

We first see that $n$ complex numbers in $\C$ can be clustered with the following guarantee.
\begin{proposition}
For $\alpha_1, \alpha_2, \cdots, \alpha_n$ where $\alpha_i \in \C$ and any constant $\eta>0$, we can partition these $n$ points into clusters such that:
\begin{enumerate}
\item
$|\alpha_i - \alpha_j| < \eta$ if $\alpha_i$ and $\alpha_j$ are in the same cluster.
\item
$|\alpha_i - \alpha_j| > \eta / n$ if $\alpha_i$ and $\alpha_j$ are in different clusters.
\end{enumerate}
\end{proposition}

Let $\supp = \supp(\mix) \cup \supp(\mix')$ and let $n = |\supp|$ $(n \leq 2k)$. Arrange the points in $\supp$ as $\veca = (\alpha_1, \cdots, \alpha_n)$ such that each cluster lies in a continuous segment. In other words, if $\alpha_i$ and $\alpha_j$ lie in the same cluster for indexes $i < j$, then $\alpha_{i'}$ and $\alpha_{j'}$ also lie in the same cluster for all $i \leq i' < j' \leq j$; if $\alpha_i$ and $\alpha_j$ lie in the different cluster for indexes $i < j$, then $\alpha_{i'}$ and $\alpha_{j'}$ also lie in the different cluster for all $i' \leq i < j \leq j'$.

\begin{definition}
Suppose $\veca = (\alpha_1, \alpha_2, \cdots, \alpha_n) \in \mathbb{B}_{\C}^n$.
Let $\calP_{\veca}$ be the set of polynomials of degree at most $n-1$ and Lipschitz over $\veca$, i.e.,
$$
\calP_{\veca}=\{P\mid \deg P\leq n-1,P(\alpha_1)=0; \, |P(\alpha_i)-P(\alpha_j)|\leq |\alpha_i-\alpha_j|\,\, \forall i,j\in [n]\}.
$$
\end{definition}

From the discussion in Appendix~\ref{app:transfornonprob}, we have the following proposition on the generalized transportation distance.

\begin{proposition}
\label{prop:ge-simple}
$
\tran(\mix,\mix')\leq 2\sup\left\{\int f \d(\mix-\mix'): f\in \lip_{\C}\right\}=2\sup\left\{\int p \d(\mix-\mix'): p\in \calP_{\veca}\right\}
$.
\end{proposition}

Similar to the real case, we only need to focus on the extreme points of $\calP_{\veca}$.

\begin{definition}
Let $\veca = (\alpha_1, \alpha_2, \cdots, \alpha_n) \in \mathbb{B}_{\C}^n$.
Let
$$
\calF_{\veca}=\left\{p\large\mid \deg p\leq n-1, p(\alpha_1)=0; \, \frac{p(\alpha_{i+1})-p(\alpha_i)}{\alpha_{i+1}-\alpha_i}\in\{\pm 1, \pm \imag\}\,\, \forall i\in [n-1]\right\}.
$$
It is easy to see that $|\calF_{\veca}|=4^{n-1}$.
\end{definition}

Again, we modify each polynomial $p \in \calF_{\veca}$ slightly. 
We define a degree-$(n-1)$ polynomial $T_{\eta}(p) : \C \rightarrow \C$ by assigning values to all $\alpha_i$ for $i \in[n]$ as follows:
\begin{enumerate}
\item If $\alpha_i$ is one of the first two points in the cluster corresponding to $\alpha_i$, assign $T_{\eta}(p)(\alpha_i) = p(\alpha_i)$.
\item 
Otherwise, denote $\alpha_j, \alpha_{j+1}$ as the first two points of the cluster, we would assign $T_{\eta}(p)(\alpha_i)$ to be the linear interpolation of $T_{\eta}(p)(\alpha_j)$  and $T_{\eta}(p)(\alpha_j+1)$. In concrete, $$T_{\eta}(p)(\alpha_i) = \frac{p(\alpha_{j+1}) - p(\alpha_j)}{\alpha_{j+1} - \alpha_j}(\alpha_i - \alpha_j) + p(\alpha_j).$$
\end{enumerate}

Since we have fixed $n$ points in $n-1$-degree polynomial $T_{\eta}(p)$, we can see that $T_{\eta}(p)$ is uniquely determined. 
The following lemma shows $T_{\eta}(p)$ is close to $p$ over the points in $\veca$.

\begin{lemma}
\label{lm:complex-eta-close} $|T_{\eta}(p)(\alpha_i) - p(\alpha_i)| \leq 2\eta$
for each $\alpha_i\in \veca$.
\end{lemma}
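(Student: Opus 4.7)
I would prove the bound by case analysis on the position of $\alpha_i$ in its cluster. If $\alpha_i$ is one of the first two points of its cluster, then $T_\eta(p)(\alpha_i) = p(\alpha_i)$ directly from the construction, so the bound holds trivially with error $0 \leq 2\eta$. Otherwise, let $\alpha_j, \alpha_{j+1}$ denote the first two points of the cluster containing $\alpha_i$, and set $s := (p(\alpha_{j+1}) - p(\alpha_j))/(\alpha_{j+1} - \alpha_j)$, so that $T_\eta(p)(\alpha_i) = s(\alpha_i - \alpha_j) + p(\alpha_j)$.

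For this second case I would apply the triangle inequality:
\[
|T_\eta(p)(\alpha_i) - p(\alpha_i)| \;\leq\; |s|\cdot|\alpha_i - \alpha_j| \;+\; |p(\alpha_i) - p(\alpha_j)|.
\]
The first term is at most $\eta$: we have $|s| = 1$ because in the setting of $\calF_\veca$ each consecutive-point slope lies in $\{\pm 1, \pm\imag\}$, while property (1) of the clustering gives $|\alpha_i - \alpha_j| < \eta$. For the second term, I would invoke the $1$-Lipschitz property of $p$ over the discrete set $\veca$, which gives $|p(\alpha_i) - p(\alpha_j)| \leq |\alpha_i - \alpha_j| < \eta$. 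Summing the two contributions yields the claimed bound $|T_\eta(p)(\alpha_i) - p(\alpha_i)| \leq 2\eta$.

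\textbf{Main obstacle.} The delicate step is guaranteeing the Lipschitz control $|p(\alpha_i)-p(\alpha_j)|\leq|\alpha_i-\alpha_j|$ for the polynomial on which $T_\eta$ is being applied. For $p\in\calF_\veca$ one only has $|p(\alpha_{m+1})-p(\alpha_m)|=|\alpha_{m+1}-\alpha_m|$ between consecutive points, and unlike the real case (where the $\alpha_m$ are monotonic along the line so telescoping automatically yields a matching global Lipschitz bound, cf.\ the proof of Lemma~\ref{lm:decomposition}), in the complex plane a zig-zag ordering through a cluster could in principle inflate $\sum_m|\alpha_{m+1}-\alpha_m|$ beyond $|\alpha_i-\alpha_j|$. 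I would resolve this exactly as in the analogue of Lemma~\ref{lm:decomposition}: extend $T_\eta$ by linearity to the convex hull $\calP_\veca$ via a Carathéodory decomposition $p=\sum_i\lambda_i p_i$ with $p_i\in\calF_\veca$, and prove the $2\eta$ estimate for the target polynomial $p\in\calP_\veca$, which is $1$-Lipschitz on $\veca$ by definition; the extreme-point structure of $\calF_\veca$ is then used only to secure $|s|=1$ in the first term of the triangle inequality and to bound the height of the final interpolant in the complex analogue of Lemma~\ref{lm:robustpolynomial}.
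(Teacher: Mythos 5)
Your triangle-inequality decomposition is exactly the paper's, and your ``main obstacle'' paragraph correctly identifies a genuine gap in the published argument. The paper applies $T_\eta$ to $p\in\calF_\veca$ and, in its final chain of inequalities, uses $|p(\alpha_j)-p(\alpha_i)|\leq|\alpha_i-\alpha_j|$. Over $\R$ this follows from the defining property of $\calF_\veca$ by telescoping along the monotone ordering $\alpha_1<\cdots<\alpha_n$, but over $\C$ the cluster points are merely arranged in a contiguous (not monotone) segment, and the telescoping sum $\sum_m|\alpha_{m+1}-\alpha_m|$ can strictly exceed $|\alpha_i-\alpha_j|$. A concrete instance: take $\alpha_1=0$, $\alpha_2=\epsilon$, $\alpha_3=\epsilon\imag$ in one cluster and slopes $+1$ then $-1$; then $p(\alpha_3)=2\epsilon-\epsilon\imag$, so $|p(\alpha_3)-p(\alpha_1)|=\epsilon\sqrt{5}>\epsilon=|\alpha_3-\alpha_1|$. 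As written, for $p\in\calF_\veca$ one only obtains $|T_\eta(p)(\alpha_i)-p(\alpha_i)|\leq n\eta$ rather than $2\eta$ (harmless for the eventual $\poly(k)$-dependence, but not what the lemma asserts).

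Your proposed repair is sound and is what the argument actually needs. Extending $T_\eta$ to $\calP_\veca$ by linearity through the Carath\'eodory decomposition $p=\sum_\ell\lambda_\ell p_\ell$, $p_\ell\in\calF_\veca$, one still has
\[
T_\eta(p)(\alpha_i)=\frac{p(\alpha_{j+1})-p(\alpha_j)}{\alpha_{j+1}-\alpha_j}\,(\alpha_i-\alpha_j)+p(\alpha_j)
\]
for $p\in\calP_\veca$ with $\alpha_j,\alpha_{j+1}$ the first two points of the cluster, and now both $|p(\alpha_{j+1})-p(\alpha_j)|\leq|\alpha_{j+1}-\alpha_j|$ and $|p(\alpha_j)-p(\alpha_i)|\leq|\alpha_j-\alpha_i|$ hold because $p$ is $1$-Lipschitz on $\veca$ by the very definition of $\calP_\veca$, yielding $|T_\eta(p)(\alpha_i)-p(\alpha_i)|\leq 2|\alpha_i-\alpha_j|\leq 2\eta$. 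One small correction to your phrasing: the extreme-point property $|s|=1$ is not needed for the closeness estimate (only $|s|\leq 1$, which $\calP_\veca$ already gives); the slope structure of $\calF_\veca$ is used only in the height bound of Lemma~\ref{lm:add-term}, which then transfers to $p_\eta=\sum_\ell\lambda_\ell T_\eta(p_\ell)$ by convexity exactly as in Lemma~\ref{lm:ge-decomposition}. With that reorganization the downstream chain of lemmas in Section~\ref{subsec:2dmoment} goes through unchanged.
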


\begin{proof}
We only need to prove the case that $\alpha_i$ is not the first two points in the cluster.
\begin{align*}
|T_{\eta}(p)(\alpha_i) - p(\alpha_i)| &\leq \left|\frac{p(\alpha_{j+1}) - p(\alpha_j)}{\alpha_{j+1} - \alpha_j}(\alpha_i - \alpha_j) + p(\alpha_j) - p(\alpha_i) \right| \\
&\leq \left|\frac{p(\alpha_{j+1}) - p(\alpha_j)}{\alpha_{j+1} - \alpha_j}(\alpha_i - \alpha_j)\right| + \left|p(\alpha_j) - p(\alpha_i)\right| \\
&\leq 2|\alpha_i - \alpha_j| \leq 2\eta,
\end{align*}
where the third inequality holds since  $|p(\alpha_{j+1}) - p(\alpha_j)| = |\alpha_{j+1} - \alpha_j|$ and the last inequality holds because $\alpha_i$ and $\alpha_j$ are in the same cluster.
\end{proof}

\begin{lemma}
\label{lm:add-term}
Let $\veca = (\alpha_1, \alpha_2, \cdots, \alpha_n) \in \mathbb{B}_{\C}$. Suppose $T_\eta(p)=\sum_{i=0}^{n-1} c_i x^i$. We have $|c_t| \leq n2^{2n-2}(\frac{n}{\eta})^{n-2}$.
\end{lemma}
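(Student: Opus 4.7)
The plan is to follow the proof of Lemma~\ref{lm:robustpolynomial}, with clusters in the complex plane playing the role of the merged segments from the real case. First, apply Newton's divided-difference interpolation formula to write
\[
T_\eta(p)(x) = \sum_{i=1}^{n} F[\alpha_1, \ldots, \alpha_i] \prod_{j=1}^{i-1}(x - \alpha_j).
\]
Since $|\alpha_j| \leq 1$, the coefficient of $x^t$ in $\prod_{j=1}^{i-1}(x - \alpha_j)$ has magnitude at most $\binom{i-1}{t} \leq 2^{i-1}$, so the task reduces to proving a suitable bound of the form $|F[\alpha_t, \ldots, \alpha_{t+i-1}]| \leq 2^{i-2}(n/\eta)^{i-2}$ for every starting index $t$ and every $i \geq 2$, and then summing over $i$.

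The key structural observation that drives the induction is that, by the very construction of $T_\eta(p)$, its values on the $\alpha_i$'s belonging to any single cluster lie on a common affine function of $\alpha$: this holds uniformly whether $\alpha_i$ is among the first two points of the cluster (where $T_\eta(p)(\alpha_i) = p(\alpha_i)$) or is a later point obtained by linear extrapolation with the common slope $c_j \in \{\pm 1, \pm \imag\}$. Because the $\alpha_i$'s were ordered so that each cluster occupies a contiguous block of indices, this yields a clean dichotomy for any window $\alpha_t, \ldots, \alpha_{t+i-1}$ with $i \geq 3$: either all of its points lie in one cluster, in which case $F[\alpha_t, \ldots, \alpha_{t+i-1}] = 0$ because divided differences of order $\geq 2$ of an affine function vanish, or $\alpha_t$ and $\alpha_{t+i-1}$ lie in different clusters, in which case $|\alpha_{t+i-1} - \alpha_t| > \eta/n$ by the clustering property. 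The standard recurrence
\[
F[\alpha_t,\ldots,\alpha_{t+i-1}] = \frac{F[\alpha_{t+1},\ldots,\alpha_{t+i-1}] - F[\alpha_t,\ldots,\alpha_{t+i-2}]}{\alpha_{t+i-1} - \alpha_t}
\]
combined with the induction hypothesis then inflates the bound by a factor of at most $2n/\eta$ at each level, giving $|F[\alpha_t,\ldots,\alpha_{t+i-1}]| \leq 2^{i-2}(n/\eta)^{i-2}$ for all $i \geq 3$. Summing the terms of Newton's formula, the $i = n$ contribution dominates and yields $|c_t| \leq n \cdot 2^{2n-2}(n/\eta)^{n-2}$.

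The main obstacle, and the only place where the complex case genuinely departs from the real argument, is the base case $i = 2$. Within a cluster, $T_\eta(p)$ is affine with slope in $\{\pm 1, \pm \imag\}$, so $|F[\alpha_t, \alpha_{t+1}]| = 1$ directly. For consecutive points straddling a cluster boundary, however, the last point of a cluster of size $\geq 3$ has been replaced by its linear extrapolation and may disagree with $p$ by up to $2\eta$, so the slope cannot simply be read off. The plan here is to decompose
\[
T_\eta(p)(\alpha_{t+1}) - T_\eta(p)(\alpha_t) = \bigl(p(\alpha_{t+1}) - p(\alpha_t)\bigr) + \bigl(p(\alpha_t) - T_\eta(p)(\alpha_t)\bigr),
\]
which is valid because $\alpha_{t+1}$ is the first point of its cluster and hence $T_\eta(p)(\alpha_{t+1}) = p(\alpha_{t+1})$. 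The first term has magnitude $|\alpha_{t+1} - \alpha_t|$ by the consecutive-pair Lipschitz condition built into the definition of $\calF_{\veca}$, and the second term is at most $2\eta$ by Lemma~\ref{lm:complex-eta-close}. Dividing by $|\alpha_{t+1} - \alpha_t| > \eta/n$ gives a bound of order $O(n)$ on cross-cluster first-order differences—worse than the constant-$1$ bound of the real case, but exactly absorbed into the leading factor of $n$ in the claimed estimate without affecting the other powers.
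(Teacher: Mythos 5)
Your proposal mirrors the paper's Newton-interpolation argument, and you correctly flag a real gap in the paper's own proof. The paper asserts that ``every 2nd order difference of $T_\eta(p)$ is in $\{\pm 1, \pm\imag\}$,'' but this fails for a consecutive pair $\alpha_t,\alpha_{t+1}$ whenever $\alpha_t$ is the third or later point of its cluster: there $T_\eta(p)(\alpha_t)$ is the linear extrapolation, not $p(\alpha_t)$, while $T_\eta(p)(\alpha_{t+1})=p(\alpha_{t+1})$, so the first-order divided difference need not have magnitude $1$. Your decomposition of $T_\eta(p)(\alpha_{t+1})-T_\eta(p)(\alpha_t)$ through $p(\alpha_t)$ is exactly the right repair, giving $|F[\alpha_t,\alpha_{t+1}]|\leq 1+2\eta/|\alpha_{t+1}-\alpha_t|\leq 1+2n$, and your case split in the induction (``all points of the window in one cluster'' versus ``endpoints in different clusters'') is a clean and valid replacement for the paper's threshold on $|\alpha_{t+i}-\alpha_t|$.

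Where you go astray is the closing arithmetic claim that the $O(n)$ base case is ``exactly absorbed into the leading factor of $n$.'' That leading $n$ in the lemma's bound already comes from summing the $n-1$ nontrivial terms of Newton's formula. Inflating the base case from $1$ to $1+2n$ multiplies the entire induction by that factor, and carrying it through gives $|c_t|\leq (1+2n)(n-1)\,2^{2n-2}(n/\eta)^{n-2}$, which overshoots the stated $n\,2^{2n-2}(n/\eta)^{n-2}$ by a genuine $\Theta(n)$. This extra polynomial-in-$n$ factor is immaterial for Theorem~\ref{thm:complex-moment-inequality}, which already tolerates factors of $(2k)^{2k}$, so the paper's downstream conclusion is unaffected; but the lemma as stated is stronger than what the paper's construction of $T_\eta(p)$ actually supports, and your careful analysis of the cross-cluster base case appears to be the correct one.
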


\begin{proof}
Again, we would use Newton's interpolation polynomials.
Let $F[x_1, \cdots, x_i]$ be the $i$th order difference of $T_{\eta}(p)$.
By definition of $T_{\eta}(p)$, we know that every 2nd order difference of $T_{\eta}(p)$ is in $\{\pm 1, \pm \imag\}$. Now, we show inductively that the absolute value of $i$th order difference absolute value is at most $(2n/\eta)^{i-2}$ for any $i=3, \cdots, n$. We distinguish two cases:
\begin{enumerate}
\item
If $|\alpha_{t+i}-\alpha_t| < \eta/n$, then all $\alpha_t, \alpha_{t+1}, \cdots, \alpha_{t+i}$ lie in the same cluster (according to the assigned order). Therefore, we can see that $F[\alpha_t,\alpha_{t+1}] = F[\alpha_{t+1},\alpha_{t+2}] = \cdots = F[\alpha_{t+i-1}, \alpha_{t+i}]$, from which any 3rd order difference (hence the 4th, 5th, up to the $i$th) in this interval is zero.
\item Otherwise, $|\alpha_{t+i}-\alpha_t|>\eta/n$.
By the induction hypothesis, we have that
$$
|F[\alpha_t,\cdots,\alpha_{t+i}]|=\left|\frac{F[\alpha_{t+1},\cdots,\alpha_{t+i}] - F[\alpha_{t},\cdots,\alpha_{t+i-1}]}{\alpha_{t+i}-\alpha_{t}}\right|\leq 2 \cdot (2n/\eta)^{i-3} \cdot (n/\eta) \leq (2n/\eta)^{i-2}.
$$
\end{enumerate}
Also since $|\alpha_j|\leq 1$, the absolute value of the coefficient of $x^t$ in
$\prod_{j=1}^i(x-\alpha_j)$ is less than $2^n$.
So, finally we have $c_t\leq n2^{2n-2}(\frac{n}{\eta})^{n-2}$.
\end{proof}

\begin{lemma}
\label{lm:ge-decomposition}
For any polynomial $p\in \calP_{\veca}$, there is a polynomial $p_\eta(x)$ such that the following properties hold:
\begin{enumerate}
\item
$|p_{\eta}(\alpha_i)-p(\alpha_i)|\leq 2\eta$, for any $i\in [n]$.
\item
The height of $g_\eta$ is at most $n2^{2n-2}(\frac{n}{\eta})^{n-2}$.
\end{enumerate}
\end{lemma}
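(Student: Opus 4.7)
The plan is to mirror the proof of Lemma~\ref{lm:decomposition}: express an arbitrary $p\in\calP_{\veca}$ as a nonnegative combination of polynomials in $\calF_{\veca}$, apply the $T_\eta$ construction to each summand, and then combine by linearity. The key object is again the slope vector with components $g_i:=(p(\alpha_{i+1})-p(\alpha_i))/(\alpha_{i+1}-\alpha_i)$ for $i\in[n-1]$; since $p$ is $1$-Lipschitz on $\veca$, each $g_i$ lies in the unit disk $\mathbb{B}_{\C}$. A polynomial of degree at most $n-1$ with $p(\alpha_1)=0$ is uniquely determined by its slope vector, so decomposing $p$ reduces to decomposing $(g_1,\ldots,g_{n-1})$ over the candidate slopes in $\{\pm 1,\pm\imag\}^{n-1}$ that index the elements of $\calF_{\veca}$.

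The difficulty, absent in the real case, is geometric: the convex hull of $\{\pm 1,\pm\imag\}$ is the diamond $\{a+b\imag:|a|+|b|\le 1\}$, which is strictly inside the unit disk, so a general $g_i$ cannot be realized as a convex combination of these four points. However, $|g_i|\le 1$ implies $|\mathrm{Re}(g_i)|+|\mathrm{Im}(g_i)|\le\sqrt{2}$, which gives the scaled inclusion $\mathbb{B}_{\C}\subset\sqrt{2}\cdot\mathrm{conv}\{\pm 1,\pm\imag\}$. Using this, I would write $g_i=\sum_{d}\mu_{i,d}\cdot d$ with $\mu_{i,d}\ge 0$ by setting $\mu_{i,\pm 1}=\max(\pm\mathrm{Re}(g_i),0)$ and $\mu_{i,\pm\imag}=\max(\pm\mathrm{Im}(g_i),0)$, and then pad with a neutral increment of the form $\lambda\cdot 1+\lambda\cdot(-1)$ so that $\sum_{d}\mu_{i,d}=\sqrt{2}$ for every $i$. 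The tensor-product distribution $c_{s}:=\prod_i\mu_{i,s_i}/(\sqrt{2})^{n-1}$ over slope sequences $s\in\{\pm 1,\pm\imag\}^{n-1}$ then satisfies $c_{s}\ge 0$, $\sum_{s}c_{s}=1$, and $\sum_{s}c_{s}\cdot s_i=g_i/\sqrt{2}$ by a direct marginal computation. Passing back to polynomials (slopes uniquely determine the polynomial given $p(\alpha_1)=0$), this yields $p=\sqrt{2}\sum_{s}c_{s}\,p_{s}$, where $p_{s}\in\calF_{\veca}$ is the polynomial with slope sequence $s$.

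I would then set $p_\eta:=\sqrt{2}\sum_{s}c_{s}\,T_\eta(p_{s})$ and verify the two claimed properties by convexity. Property (1) follows from Lemma~\ref{lm:complex-eta-close} applied to each $p_{s}$ together with $\sum c_{s}=1$: $|p_\eta(\alpha_i)-p(\alpha_i)|\le\sqrt{2}\sum_{s}c_{s}\cdot 2\eta=2\sqrt{2}\,\eta$. Property (2) follows from Lemma~\ref{lm:add-term}, which gives height at most $n2^{2n-2}(n/\eta)^{n-2}$ for every $T_\eta(p_{s})$ with $p_{s}\in\calF_{\veca}$, so that $p_\eta$ has height at most $\sqrt{2}\cdot n2^{2n-2}(n/\eta)^{n-2}$. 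Both bounds match the lemma's claim up to a constant factor $\sqrt{2}$, which is absorbed into the $O(\cdot)$ of the downstream complex moment-transportation inequality (the analogue of Lemma~\ref{lm:inequality1d}).

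The main obstacle is precisely the geometric mismatch described above; everything else is a routine adaptation of the real-case argument. I expect the $\sqrt{2}$ factor to be essentially unavoidable with $\calF_{\veca}$ defined via $\{\pm 1,\pm\imag\}$; an alternative is to rebuild the argument with $\{\pm 1\pm\imag\}$ (whose convex hull is the square $[-1,1]^2\supset\mathbb{B}_{\C}$), which yields a pure convex decomposition but changes the height constant in Lemma~\ref{lm:add-term} by a compensating factor.
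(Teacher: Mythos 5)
Your proposal follows the same overall strategy as the paper (decompose the slope vector over the extreme polynomials in $\calF_{\veca}$, then average the $T_\eta$ constructions), but you have correctly identified a genuine gap in the paper's proof, which the paper glosses over by saying ``apply exactly the same argument as in the proof of Lemma~\ref{lm:decomposition}.'' As you observe, the convex hull of $\{\pm 1,\pm\imag\}$ is the diamond $\{a+b\imag:|a|+|b|\le 1\}$, which is a strict subset of the unit disk; for example $g_i=(1+\imag)/\sqrt{2}$ satisfies $|g_i|=1$ but is not a convex combination of $\{\pm1,\pm\imag\}$. The paper's appeal to a ``complex hypercube with $4^{n-1}$ vertices'' conflates the vertex set $\{\pm1\pm\imag\}^{n-1}$ of the polysquare $[-1,1]^{2(n-1)}$ (which does contain $\mathbb{B}_{\C}^{n-1}$) with the vertex set $\{\pm1,\pm\imag\}^{n-1}$ used to define $\calF_{\veca}$ (which does not). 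Your $\sqrt 2$-rescaled product decomposition $p=\sqrt2\sum_s c_s p_s$ with $c_s\ge 0$, $\sum_s c_s=1$ is a valid patch, and so is the alternative you flag of redefining $\calF_{\veca}$ with slopes in $\{\pm1\pm\imag\}$; as you note, either route injects a harmless $\sqrt2$ into both the approximation bound ($2\sqrt2\,\eta$ instead of $2\eta$) and the height bound, which is absorbed into the $O(\cdot)$ of Theorem~\ref{thm:complex-moment-inequality}. One small cosmetic difference: the paper reduces to $n$ summands via Carathéodory, whereas your tensor-product decomposition has up to $4^{n-1}$ nonzero weights, but since the bounds are taken via convexity this does not affect either conclusion. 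Net: your reasoning is sound and strictly more rigorous than the paper's own one-line proof; the lemma as literally stated should be read with a universal constant in front of both bounds, or $\calF_{\veca}$ should be redefined on $\{\pm1\pm\imag\}$.
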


\begin{proof}
For every polynomial $p(x)\in \calP_{\veca}$,
let $\vecg=\frac{p(\alpha_{i+1})-p(\alpha_i)}{\alpha_{i+1}-\alpha_i}$ for $i=[n-1]$.
So $|g_i|\leq 1$, that means the point $\vecp=(g_1,\cdots,g_{n-1})$ is in the $n-1$ dimensional
complex hypercube (which has $4^{n-1}$ vertices).
Then, we apply exactly the same argument as in the proof of Lemma~\ref{lm:decomposition}.
\eat{
Hence, there are $\lambda_1,\cdots,\lambda_{4^{n-1}}\geq 0, \lambda_1+\cdots+\lambda_{4^{n-1}}=1$,
such that $\vecg=\sum_{i=1}^{4^{n-1}}\lambda_i \vecq_i$ where $\vecq_1,\cdots,\vecq_{4^{n-1}}$ are the vertices of hypercube,
i.e., $\vecq_i\in \{\pm1,\pm\imag\}^{n-1}$.
For each $\vecq_i$, we define $p_i(x)$ be the polynomial with degree at most $n-1$
and
$$
p_i(\alpha_1)=0,\;
p_i(\alpha_m)=\sum_{j=1}^{m-1} q_{i,j}(\alpha_{j+1}-\alpha_j).
$$
where $q_{i,j}$ is the $j$th coordinate of $\vecq_i$.
We can see $p_i\in \calF_{\veca}$.
It is easy to verify that
$$
p(x)=\sum_{i=1}^{4^{n-1}} \lambda_i p_i(x),
$$
since both the LHS and RHS take the same values at $\{\alpha_1,\cdots, \alpha_n\}$.
For $\eta>0$, we define $p_{\eta}(x)$ as $p_{\eta}=\sum_{i=1}^{4^{n-1}} \lambda_i T_\eta(p_i)$.
We know $|T_\eta(p_i)(\alpha_j)-p_i(\alpha_j)|\leq 2\eta$ for all $i\in [2^{n-1}]$ and $j\in [n]$
and the height of each $T_\eta(p_i)$ is at most $n2^{2n-2}(\frac{n}{\eta})^{n-2}$.
So, $p_{\eta}(x)$ satisfies the properties of the lemma.
}
\end{proof}

\begin{theorem}
\label{thm:complex-moment-inequality}
For any two mixtures with $k$ components $\mix,\mix' \in \Spike(\mathbb{B}_{\C},\Sigma^{\C}_{k-1})$, it holds that
$$
\tran(\mix,\mix')\leq O(k\Mdis_K(\mix,\mix')^{\frac{1}{2k-1}}).
$$
\end{theorem}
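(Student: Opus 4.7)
The plan is to follow the template of the real case (Lemma~\ref{lm:inequality1d}), now that all of the complex analogues of the ingredients have been established above. First I would invoke Observation~\ref{ob:ge-simple} to reduce the problem to estimating $\sup_{p\in \calP_{\veca}}\int p\,\d(\mix-\mix')$, where $\veca$ is the union of the supports ordered so that every cluster lies in a contiguous segment (this is the ordering used in defining $T_\eta(p)$), with $n:=|\veca|\leq 2k$.

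Next, fix an auxiliary parameter $\eta>0$ to be chosen at the end. For every $p\in\calP_{\veca}$, Lemma~\ref{lm:ge-decomposition} yields a polynomial $p_\eta$ of degree at most $n-1$ that agrees with $p$ to within $2\eta$ on each $\alpha_i\in\veca$ and has height at most $n\,2^{2n-2}(n/\eta)^{n-2}$. Writing
\begin{align*}
\int p\,\d(\mix-\mix')=\int (p-p_\eta)\,\d(\mix-\mix')+\int p_\eta\,\d(\mix-\mix'),
\end{align*}
I would bound the first summand by $O(\eta)$ using the pointwise approximation together with the fact that $\mix-\mix'$ is a finite (complex) signed measure on $\veca$ whose total mass is controlled. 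For the second summand, I would expand $p_\eta=\sum_{t=0}^{n-1} c_t x^t$ and estimate
\begin{align*}
\Bigl|\int p_\eta\,\d(\mix-\mix')\Bigr|\leq \sum_{t=0}^{n-1}|c_t|\,|M_t(\mix)-M_t(\mix')|\leq n\cdot n\,2^{2n-2}(n/\eta)^{n-2}\cdot \Mdis_K(\mix,\mix').
\end{align*}

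Combining both bounds, multiplying by the factor of $2$ from Observation~\ref{ob:ge-simple}, and using $n\leq 2k$, I would obtain
\begin{align*}
\tran(\mix,\mix')\leq O(\eta)+(2k)^{O(k)}\cdot \Mdis_K(\mix,\mix')/\eta^{2k-2}.
\end{align*}
Balancing the two terms by choosing $\eta\asymp k\cdot \Mdis_K(\mix,\mix')^{1/(2k-1)}$ then yields the claimed bound $\tran(\mix,\mix')\leq O(k\,\Mdis_K(\mix,\mix')^{1/(2k-1)})$.

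The main obstacle is bookkeeping in the complex setting rather than genuine difficulty: in particular, the construction of $T_\eta(p)$ used the first \emph{two} points of each cluster as anchors (rather than a single point, as in the real case), so one must verify that this anchoring does not leak additional error into the pointwise bound of Lemma~\ref{lm:complex-eta-close} or into the divided-difference recursion underlying Lemma~\ref{lm:add-term}; both have already been dealt with, so the remaining work is a straightforward optimization over $\eta$. A secondary care-point is the passage through complex $1$-Lipschitz test functions in Observation~\ref{ob:ge-simple}, where the factor of $2$ is absorbed into the overall $O(\cdot)$.
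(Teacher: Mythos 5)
Your proposal matches the paper's proof: both invoke Observation~\ref{ob:ge-simple} to reduce to polynomials in $\calP_{\veca}$, use Lemma~\ref{lm:ge-decomposition} (the complex analogue of Lemma~\ref{lm:decomposition}) to replace $p$ by a bounded-height approximant $p_\eta$, bound the two resulting terms by $O(\eta)$ and $2^{O(k)}(n/\eta)^{n-2}\Mdis_K(\mix,\mix')$ respectively, and balance by taking $\eta\asymp k\,\Mdis_K(\mix,\mix')^{1/(2k-1)}$. The factor of $2$ from the complex Kantorovich duality and the two-anchor construction of $T_\eta(p)$ are handled as you describe, so there is no substantive difference from the paper's argument.
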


\begin{proof}
Fix any constant $\eta > 0$. Let $\noise = \Mdis_K(\mix, \mix')$

\begin{align*}
\tran(\mix, \mix') &\leq 2\sup_{f\in \lip_{\C}} \int f \d (\mix - \mix') = 2\sup_{p\in \calP_{\veca}} \int p \d(\mix - \mix') \\
&\leq 2\left(4\eta + \sup_{p\in \calP_{\veca}} \int p_{\eta} \d(\mix - \mix')\right)  & (\textrm{Lemma}~\ref{lm:ge-decomposition}) \\
&\leq 2\left(4\eta + \sup_{p\in \calP_{\veca}} \sum_{p=0}^{2k-1} n2^{2n-2}\left(\frac{n}{\eta}\right)^{n-2} \Mdis_K(\mix, \mix')\right)  & (\textrm{Lemma}~\ref{lm:ge-decomposition}) \\
&\leq 2\left(4\eta+2^{4k}(2k)^{2k}\frac{\noise}{\eta^{2k-2}}\right). & (n\leq 2k)
\end{align*}
By choosing $\eta=16k\noise^{\frac{1}{2k-1}}$,
we conclude that $\tran(\mix,\mix')\leq O(k\noise^{\frac{1}{2k-1}})$.
\end{proof}


\subsection{Moment-Transportation Inequality in Higher Dimensions}

We generalize our proof for one dimension to higher dimensions.
Lemma \ref{lm:prob-proj} shows there exists a vector $\vecr$ such that 
the distance between spikes are still lower bounded (up to factors depending on $k$ and $d$) after projection. This enables us to extend Lemma~\ref{lm:inequality1d} to high dimension.

\begin{theorem}
Let $\mix,\mix'$ be two $k$-spike mixtures in $\Spike(\simplex_{d-1}, \simplex_{k-1})$, and  
$K=2k-1$. Then, the following inequality holds
$$\tran(\mix,\mix')\leq O(k^3d\Mdis_K(\mix,\mix')^{\frac{1}{2k-1}}).$$
\end{theorem}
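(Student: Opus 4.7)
The plan is to reduce the high-dimensional moment-transportation inequality to the one-dimensional statement Lemma~\ref{lm:inequality1d} via the random projection guaranteed by Lemma~\ref{lm:prob-proj}.

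First, by Lemma~\ref{lm:prob-proj} applied with a small constant $\eta$ (say $\eta=1/2$) to the combined support $\supp(\mix)\cup\supp(\mix')$ (which has at most $2k$ points in $\simplex_{d-1}$), there exists a unit vector $\vecr\in\mathbb{S}_{d-1}$ such that $|\vecr^{\top}(\veca_i-\veca_j)|\geq \Omega(1/(k^2 d))\,\|\veca_i-\veca_j\|_1$ for every pair of spikes. I would use this to lift any coupling between the projected mixtures $\proj{\vecr}(\mix),\proj{\vecr}(\mix')$ to a coupling between $\mix,\mix'$ at the spike level: the lifted $L_1$ cost is at most $O(k^2 d)$ times the projected cost, so that $\tran(\mix,\mix')\leq O(k^2 d)\,\tran(\proj{\vecr}(\mix),\proj{\vecr}(\mix'))$.

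Next, since $|\vecr^\top \veca|\leq\|\vecr\|_\infty\|\veca\|_1\leq 1$ for every $\veca\in\simplex_{d-1}$, the projections are $k$-spike mixtures in $[-1,1]$, so Lemma~\ref{lm:inequality1d} gives $\tran(\proj{\vecr}(\mix),\proj{\vecr}(\mix'))\leq O(k)\,\Mdis_K(\proj{\vecr}(\mix),\proj{\vecr}(\mix'))^{1/(2k-1)}$. To transfer the 1D moment distance back to the multi-dimensional one, I would expand by the multinomial theorem: for each $t\leq K$,
\[
M_t(\proj{\vecr}(\mix))-M_t(\proj{\vecr}(\mix'))=\sum_{|\bt|=t}\binom{t}{\bt}\vecr^{\bt}\bigl(M_{\bt}(\mix)-M_{\bt}(\mix')\bigr),
\]
which gives $|M_t(\proj{\vecr}(\mix))-M_t(\proj{\vecr}(\mix'))|\leq \|\vecr\|_1^{t}\,\Mdis_K(\mix,\mix')\leq d^{t/2}\,\Mdis_K(\mix,\mix')$, using $\|\vecr\|_1\leq\sqrt{d}\,\|\vecr\|_2=\sqrt{d}$ for a unit vector. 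Taking the $(2k-1)$-th root turns this moment blowup into an $O(\sqrt{d})$ factor in the final bound, and chaining the three estimates yields $\tran(\mix,\mix')\leq O(k^3 d^{3/2})\,\Mdis_K(\mix,\mix')^{1/(2k-1)}$.

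The main obstacle is closing the $\sqrt{d}$ gap between this estimate and the claimed $O(k^3 d)$: the multinomial coefficient $\|\vecr\|_1^{2k-1}$ is essentially tight for a generic unit vector on the sphere, so sharpening it likely requires either a more structured choice of projection that trades off Lemma~\ref{lm:prob-proj}'s factor $k^2 d$ against a smaller $\|\vecr\|_1$, or else bypassing the 1D reduction in favor of a direct multivariate version of the polynomial-interpolation argument behind Lemma~\ref{lm:inequality1d}. Either way, the global outline (projection $\to$ 1D inequality $\to$ moment transfer) should remain intact, and the bound is already of the claimed form $\mathrm{poly}(k,d)\cdot\Mdis_K(\mix,\mix')^{1/(2k-1)}$.
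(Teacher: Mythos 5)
Your route is exactly the paper's: pick a random projection $\vecr\in\mathbb{S}_{d-1}$ via Lemma~\ref{lm:prob-proj}, use the pairwise separation to lift a coupling and deduce $\tran(\mix,\mix')\leq O(k^2 d)\,\tran(\proj{\vecr}(\mix),\proj{\vecr}(\mix'))$, and then invoke the one-dimensional moment-transportation inequality Lemma~\ref{lm:inequality1d}. Where you differ is that you carry the last step through to completion, and in doing so you have caught a step the paper's own proof skips. After bounding $\tran(\proj{\vecr}(\mix),\proj{\vecr}(\mix'))$ via Lemma~\ref{lm:inequality1d}, the paper writes the result directly in terms of $\Mdis_K(\mix,\mix')$, silently identifying the one-dimensional moment distance $\Mdis_K(\proj{\vecr}(\mix),\proj{\vecr}(\mix'))$ with the $d$-dimensional one. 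These are not the same: as you show by the multinomial expansion, the $t$th projected moment gap is bounded only by $\|\vecr\|_1^{t}\,\Mdis_K(\mix,\mix')$, and for a random unit vector $\|\vecr\|_1=\Theta(\sqrt d)$, so taking the $(2k-1)$th root costs an extra $\sqrt d$. That yields $O(k^3 d^{3/2})$ with the tools at hand, not the stated $O(k^3 d)$. The $\sqrt d$ discrepancy you flag is therefore a genuine gap in the paper's proof rather than a deficiency of your argument; you are also right that renormalizing $\vecr$ (say to $\|\vecr\|_1=1$) merely moves the $\sqrt d$ from the moment-conversion factor into the lifting factor of Lemma~\ref{lm:prob-proj} and does not eliminate it. Since this high-dimensional inequality is stated only for completeness and is never invoked in the algorithmic analysis (Section~\ref{sec:high-dim-recover} relies only on the one- and two-dimensional inequalities), the slack does not propagate to any downstream result.
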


\begin{proof}
Apply the argument in Lemma \ref{lm:prob-proj} to $\supp = \supp(\mix) \cup \supp(\mix')$. There always exists vector $\vecr \in \mathbb{S}_{d-1}$ and constant $c$ such that $|r^{\top}(\veca_i - \veca_j)| \geq \frac{c}{k^2d} \|\veca_i - \veca_j\|_1$ for all $\veca_i, \veca_j \in \supp$. Note that $|r^{\top}\veca_i| \leq \|r\|_{\infty} \|\veca_i\|_1 \leq 1$, both $\proj{\vecr}(\mix)$ and $\proj{\vecr}(\mix')$ are in $\Spike(\Delta_1, \Delta_{k-1})$. In this case,
\begin{align*}
    \tran(\mix, \mix') &= \inf \left\{\int \|\vecx - \vecy\|_1  \d \mu(\vecx,\vecy) : \mu\in M(\mix,\mix') \right\} & (\textrm{Definition}) \\
    &\leq \inf \left\{ \int \frac{k^2d}{c}|\vecr^{\top}(\vecx - \vecy)| \d \mu(\vecx,\vecy)  : \mu\in M(\mix,\mix') \right\} & (\textrm{Lemma }\ref{lm:prob-proj}) \\
    &= \frac{k^2d}{c} \tran(\proj{\vecr}(\mix), \proj{\vecr}(\mix')) \\
    &\leq O(k^3d \Mdis_K(\mix, \mix')^{\frac{1}{2k-1}}). & (\textrm{Lemma }\ref{lm:inequality1d})
\end{align*}
\end{proof}

\section{Missing Details from Section~\ref{sec:1d-recover}}

\subsection{Implementation Details of Algorithm~\ref{alg:dim1}}
\label{subsec:1d-alg-detail}

In this subsection, we show how to implement Algorithm~\ref{alg:dim1} in $O(k^2)$ time.
We first perform a ridge regression to obtain $\hvecc$ in Line~\ref{line:dim1-hc}.
The explicit solution of this ridge regression is
$\hvecc = (A_{M'}^{\top} A_{M'} + \noise^{2} I)^{-1} A_{M'}^{\top} b_{M'}$.
Since $A_{M'}$ is a Hankel matrix,  $\hvecc = (A_{M'} - \imag \noise I)^{-1}(A_{M'} + \imag \noise I)^{-1}A_{M'} b_{M'}$ holds. 
Note that $\vecx = (A + \lambda I)^{-1}b$ is a single step of the inverse power method, which can be computed in $O(k^2)$ time when $A$ is a Hankel matrix(see \citet{Xu2008AFS}). Hence, Line~\ref{line:dim1-hc} can be implemented in $O(k^2)$ arithmetic operations.

In Line~\ref{line:dim1-ha}, we solve the roots of polynomial $\sum_{i=0}^{k-1} \hc_i x^i + x^k$. 
We can use the algorithm in \citet{Neff1996AnEA}
to find the solution with $\noise$-additive noise in $O(k^{1+o(1)} \cdot \log \log (1/\noise))$ arithmetic operations.

Line~\ref{line:dim1-hw} is a linear regression defined by Vandemonde matrix $V_{\veca}$. We can use the recent algorithm developed in \citet{Meyer2022FastRF} to find a constant factor multiplicative approximation, which uses $O(k^{1+o(1)})$ arithmetic operations. Note that a constant factor approximation suffices for our purpose.  

In Line~\ref{line:dim1-cw}, we find the $k$-spike distribution in $\Spike(\simplex_1, \simplex_{k-1})$ closest to $\tilde{\mix}$. 
To achieve $O(k^2)$ running time, we limit the spike distribution to have support $\tveca$. In this case, the optimization problem is equivalent to finding a transportation plan from $(\tveca, \tvecw^{-})$ to $(\tveca, \tvecw^{+})$ where $\tvecw^{-}$ and $\tvecw^{+}$ are the negative components and positive components of $\tvecw$ respectively. In concrete, $\tw^{-}_i = \max\{0, -\tw_i\}$ and $\tw^{+}_i = \max\{0, \tw_i\}$.
Note that the points are in one dimension. 
Using the classical algorithm in \citet{Aggarwal1992EfficientMC}, we can solve this transportation problem in $O(k^{1+o(1)})$ arithmetic operations.

\subsection{Proof of Theorem~\ref{thm:1dim-kspikecoin}}
\label{subsec:1d-alg-analysis}

Now, we start to bound the reconstruction error of the algorithm. 
The following lemma is well known (see e.g., \citet{chihara2011introduction,gordon2020sparse}). We provide a proof for completeness.
\begin{lemma}\label{lm:dim1-char-poly}
    Let $\mix = (\veca, \vecw) \in \Spike(\simplex_1, \simplex_{k-1})$ where $\veca = [\alpha_1, \cdots, \alpha_k]^{\top}$ and $\vecw = [w_1, \cdots, w_k]^{\top}$. 
    Let $\vecc = [c_0, \cdots, c_{k-1}]^{\top} \in \R^k$ such that $\prod_{i=1}^{k} (x - \alpha_i) = \sum_{i=0}^{k-1} c_i x^i + x^k$. 
    Then,
    $\sum_{j=0}^{k-1} M_{i+j}(\mix) c_j + M_{i+k}(\mix) = 0$
    for any $i \geq 0$.
    The equation can also be written as
    $A_{M(\mix)} \vecc + b_{M(\mix)} = \mathbf{0}$
    in the matrix form.
\end{lemma}

\begin{proof}
    By direct calculation,
    \begin{align*}
        \sum_{j=0}^{k-1} M_{i+j}(\mix) c_j + M_{i+k}(\mix)
	 	&= \sum_{j=0}^{k-1} \sum_{t=1}^k w_t \alpha_t^{i+j} c_j + \sum_{t=1}^k w_t \alpha_t^{i+k}  
		= \sum_{t=1}^k w_t \alpha_t^{i} \left(\sum_{j=0}^{k-1} c_j \alpha_t^j + \alpha_t^k\right) \\
		&= \sum_{t=1}^k w_t \alpha_t^{i} \prod_{j=1}^{k} (\alpha_t - \alpha_j) = 0.
    \end{align*}
    The $i$th row of $A_{M(\mix)} \vecc + b_{M(\mix)}$ is  $\sum_{j=0}^{k-1} M_{i+j}(\mix) c_j + M_{i+k}(\mix)$, hence 
    the matrix form.
\end{proof}


Next, the following lemma shows that the intermediate result $\hvecc$ a good estimation for the solution of $A_{M(\mix)} \vecx + b_{M(\mix)} = \boldsymbol{0}$ with a small norm:


\begin{lemma} \label{lm:dim1-step1}
    Let $\vecc = [c_0, \cdots, c_{k-1}]^{\top} \in \R^k$ be the sequence of number in which $\prod_{i=1}^{k} (x - \alpha_i) = \sum_{i=0}^{k-1} c_i x^i + x^k$. 
    Suppose $\|M' - M(\mix)\|_{\infty}\leq \noise$.
    Let $\hvecc = [\hc_0, \cdots, \hc_{k-1}]^{\top} \in \R^k$ be the intermediate result (Line~\ref{line:dim1-hc}) in Algorithm~\ref{alg:dim1}.  Then, $\|\vecc\|_1 \leq 2^k$, $\|\hvecc\|_1 \leq 2^{O(k)}$ and $\|A_{M(\mix)} \hvecc + b_{M(\mix)}\|_\infty \leq 2^{O(k)} \cdot \noise$.
\end{lemma}

\begin{proof}
    From Vieta's formulas (see e.g., \citet{barbeau2003polynomials}), we have 
    $c_i = \sum_{S \in \binom{[k]}{k-i}} \prod_{j\in S} (-\alpha_j).$ Thus,
    $$\|\vecc\|_1 = \sum_{i=0}^{k-1} |c_i| \leq \sum_{S \in 2^{[k]}} \prod_{j\in S} |\alpha_j| = \prod_{i=1}^{k} (1+|\alpha_i|) \leq 2^k.$$
    where the last inequality holds since $|\alpha_i| \leq 1$ for all $i$ according to the definition of $\mix$.
    
    From Lemma~\ref{lm:dim1-char-poly}, we can see that $\|A_{M(\mix)} \vecc + b_{M(\mix)}\|_{\infty} = 0$.
    Therefore, we can further get
    \begin{align*}
		\|A_{M'} \vecc + b_{M'}\|_\infty &\leq \|A_{M(\mix)} \vecc + b_{M(\mix)}\|_{\infty} + \|A_{M'} \vecc - A_{M(\mix)} \vecc\|_{\infty} + \|b_{M'} - b_{M(\mix)}\|_{\infty} \\
		&\leq \|A_{M(\mix)} \vecc + b_{M(\mix)}\|_{\infty} + \|(A_{M'} - A_{M(\mix)}) \vecc\|_{\infty} + \|b_{M'} - b_{M(\mix)}\|_{\infty} \\
		&\leq \|A_{M(\mix)} \vecc + b_{M(\mix)}\|_{\infty} + \|A_{M'} - A_{M(\mix)}\|_{\infty} \|\vecc\|_1 + \|b_{M'} - b_{M(\mix)}\|_{\infty} \\
		&\leq \|A_{M(\mix)} \vecc + b_{M(\mix)}\|_{\infty} +  \|M' - M(\mix)\|_{\infty} (\|\vecc\|_1 + 1) \\
		&\leq 2^{O(k)} \cdot \noise,
	\end{align*}
    where the fourth inequality holds since $(A_{M'} - A_{M(\mix)})_{i,j}=(M'-M(\mix))_{i+j}$ and $(b_{M'}-b_{M(\mix)})_i = (M'-M(\mix))_{i+k}$.
    From the definition of $\hvecc$, we can see that
	\begin{align*}
		\|A_{M'} \hvecc + b_{M'}\|_2^2 + \noise^2 \|\hvecc\|_2^2 
		&\leq \|A_{M'} \vecc + b_{M'}\|_2^2 + \noise^2 \|\vecc\|_2^2 \\
		&\leq k \|A_{M'} \vecc + b_{M'}\|_\infty^2 + \noise^2 \|\vecc\|_1^2 \\
		&\leq 2^{O(k)} \cdot \noise^2.
	\end{align*}
	The second inequality holds since $\|\vecx\|_2 \leq \|\vecx\|_1$ and $\|\vecx\|_2 \leq \sqrt{k} \|\vecx\|_{\infty}$ holds for any vector $\vecx \in \R^k$.
	
	Now, we can directly see that 
	\begin{align*}
		\|A_{M'} \hvecc + b_{M'}\|_\infty &\leq \|A_{M'} \hvecc + b_{M'}\|_2 \leq 2^{O(k)} \cdot \noise, \\
		\|\hvecc\|_1 &\leq \sqrt{k} \|\hvecc\|_2 \leq 2^{O(k)}.
	\end{align*}
    Finally, we  can bound $\|A_{M(\mix)} \hvecc + b_{M(\mix)}\|_\infty$
	as follows:
	\begin{align*}
	    \|A_{M(\mix)} \hvecc + b_{M(\mix)}\|_\infty &\leq \|A_{M'} \hvecc + b_{M'}\|_\infty + \|A_{M'} \hvecc - A_{M(\mix)} \hvecc\|_{\infty} + \|b_{M'} - b_{M(\mix)}\|_{\infty} \\
	    &\leq \|A_{M'} \hvecc + b_{M'}\|_\infty + \|A_{M'} - A_{M(\mix)}\|_{\infty} \| \hvecc\|_1 + \|b_{M'} - b_{M(\mix)}\|_{\infty} \\
        &\leq \|A_{M'} \hvecc + b_{M'}\|_\infty + \|M' - M(\mix)\|_{\infty} (\|\hvecc\|_1 + 1) \\
        &\leq 2^{O(k)} \cdot \noise.
	\end{align*}
\end{proof}

Using this result, we are able to show that $\tveca$ in Line~\ref{line:dim1-ta} is a good estimation for the ground truth spikes, $\veca$.
In particular, the following lemma shows that every ground truth spike $a_i \in \veca$ has a close spike $\ta_j \in \tveca$ where the closeness depends on the weight of the spike $w_i$.
We note that this property tolerates permutation and small spikes weights, enabling us to analyze mixtures without separations.

\begin{lemma} \label{lm:dim1-step2-1}
   Let $\hveca = [\ha_1, \cdots, \ha_{k}]^{\top}\in \C^k$ be the intermediate result (Line~\ref{line:dim1-ha}) in Algorithm~\ref{alg:dim1}.
   Then, the following inequality holds:
    \begin{align*}
        \sum_{i=1}^{k} w_i \prod_{j=1}^{k} |\alpha_i - \ha_j|^2 \leq 2^{O(k)} \cdot \noise.
    \end{align*}
\end{lemma}

\begin{proof}
    Since $\hvecc$ and $\veca$ are real, from the definition of $\hvecc$, 
    \begin{align*}
        \sum_{i=1}^{k} w_i \prod_{j=1}^{k} |\alpha_i - \ha_j|^2
        = \sum_{i=1}^{k} w_i \left|\prod_{j=1}^{k} (\alpha_i - \ha_j)\right|^2
        = \sum_{i=1}^{k} w_i \left|\sum_{j=0}^{k-1} \hc_j \alpha_i^j + \alpha_i^k\right|^2  
        = \sum_{i=1}^{k} w_i \left(\sum_{j=0}^{k-1} \hc_j \alpha_i^j + \alpha_i^k\right)^2.
    \end{align*}
    By expanding the RHS, we can reach that 
    \begin{align*}
        \sum_{i=1}^{k} w_i \prod_{j=1}^{k} |\alpha_i - \ha_j|^2 
        &= \sum_{i=1}^{k} w_i \left(\sum_{p=0}^{k-1}\sum_{q=0}^{k-1}\hc_p \hc_q \alpha_i^{p+q} + 2\sum_{p=0}^{k-1} \hc_p \alpha_i^{p+k} + \alpha_i^{2k} \right) \\
        &= \sum_{p=0}^{k-1}\sum_{q=0}^{k-1}\hc_p \hc_q \sum_{i=1}^{k}w_i \alpha_i^{p+q} + 2\sum_{p=0}^{k-1}\hc_p \sum_{i=1}^{k} w_i \alpha_i^{p+k} + \sum_{i=1}^k w_i \alpha_i^{2k} \\
        &= \sum_{p=0}^{k-1} \hc_p \left(\sum_{q=0}^{k-1} M_{p+q}(\mix) \hc_q + M_{p+k}(\mix)\right) + \sum_{p=0}^{k-1} \hc_p M_{p+k}(\mix) + M_{2k}(\mix) \\
        &= \sum_{p=0}^{k-1} \hc_p \Bigl(A_{M(\mix)} \hc + b_{M(\mix)}\Bigr)_p + \sum_{p=0}^{k-1} \hc_p M_{p+k}(\mix) + M_{2k}(\mix),
    \end{align*}
    where the last equality holds from the definition of matrix $A_{M(\mix)}$ and vector $b_{M(\mix)}$. Moreover, 
    \begin{align*}
        \sum_{p=0}^{k-1} \hc_p M_{p+k}(\mix) + M_{2k}(\mix) &= \sum_{p=0}^{k-1} \hc_p \left(- \sum_{q=0}^{k-1} M_{p+q}(\mix) c_q\right) + \left(- \sum_{q=0}^{k-1} M_{k+q}(\mix) c_q \right) \\
        &= -\sum_{q=0}^{k-1} c_q \left(\sum_{p=0}^{k-1} M_{p+q}(\mix) \hc_p + M_{k+q}(\mix)\right) \\
        &= -\sum_{q=0}^{k-1} c_q \Bigl(A_{M(\mix)} \hc + b_{M(\mix)}\Bigr)_q,
    \end{align*}
    where the first equality dues to Lemma~\ref{lm:dim1-char-poly} in which $M_{p+k}(\mix) = - \sum_{q=0}^{k-1} M_{p+q}(\mix) c_q$.
    Combining the above results, we finally get
    \begin{align*}
         \sum_{i=1}^{k} w_i \prod_{j=1}^{k} |\alpha_i - \ha_j|^2 &= \sum_{i=0}^{k-1} (\hc_i - c_i) \Bigl(A_{M(\mix)} \hc + b_{M(\mix)}\Bigr)_i \\
         &\leq (\|\hc\|_1 + \|c\|_1) \|A_{M(\mix)} \hc + b_{M(\mix)}\|_{\infty} \\
         &\leq (2^k + 2^{O(k)}) \cdot 2^{O(k)}\cdot \noise & (\textrm{Lemma}~\ref{lm:dim1-step1})\\
         &\leq 2^{O(k)} \cdot \noise.
    \end{align*}
\end{proof}

We provide the following simple inequality that allows us to upper bound the impact of the injected noise. 

\begin{lemma} \label{lm:injected-noise}
    Let $b>0$ be some constant.
    For $a_1, \cdots, a_k \in \R$ and $a_1', \cdots, a_k' \in \R$ such that $a_i \in [0, b]$ and $|a_i' - a_i|\leq \noise$ holds for all $i$. In case that $\noise \leq n^{-1}$, then,
    \begin{align*}
        \prod_{i=1}^k a_i' \leq \prod_{i=1}^{k} a_i + O(b^k \cdot k\noise).
    \end{align*}
\end{lemma}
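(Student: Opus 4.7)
The plan is to prove Lemma~\ref{lm:injected-noise} by a standard telescoping identity for products, followed by coordinate-wise perturbation bounds. The first step is to write the difference of the two products as a sum over single-coordinate swaps:
\begin{align*}
\prod_{i=1}^k a_i' - \prod_{i=1}^k a_i \;=\; \sum_{j=1}^{k} \left(\prod_{i<j} a_i'\right)(a_j' - a_j)\left(\prod_{i>j} a_i\right).
\end{align*}
This identity follows by a telescoping sum in which, going from $j=1$ to $j=k$, we successively replace each $a_j$ by $a_j'$, so that the left-hand side equals the sum of the per-step changes.

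Next, I would bound each summand using the hypotheses. Because $a_i \in [0,b]$ and $|a_i' - a_i| \leq \noise$, we have $|a_i| \leq b$ and $|a_i'| \leq b + \noise$, while $|a_j' - a_j|\leq \noise$ by assumption. Hence the $j$-th summand is bounded in absolute value by $(b+\noise)^{j-1}\, \noise\, b^{k-j}$. Invoking the smallness hypothesis $\noise \leq n^{-1}$, which ensures that $\noise/b$ is sufficiently small relative to $1/k$, we can expand $(b+\noise)^{j-1} = b^{j-1}(1 + \noise/b)^{j-1} \leq b^{j-1}\bigl(1 + O(k\noise/b)\bigr) = O(b^{j-1})$ uniformly in $j\leq k$. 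Thus every summand is $O(b^{k-1}\noise)$.

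Finally, summing the $k$ summands yields $|\prod_i a_i' - \prod_i a_i| \leq O(b^{k-1} \cdot k\noise)$, which after absorbing one more factor of $b$ (or noting that the statement's big-O already allows this slack) gives the claimed upper bound $\prod_i a_i' \leq \prod_i a_i + O(b^k \cdot k\noise)$. The only real obstacle is a bookkeeping one: making sure that the $(b+\noise)^{j-1}$ factors that arise from the ``already-replaced'' prefix do not blow up as $j$ grows, and this is exactly what the hypothesis $\noise \leq n^{-1}$ is there to control. No deeper structure of the $a_i$'s is used, so the argument is essentially a one-line telescoping estimate once the identity is written down.
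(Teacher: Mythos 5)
Your proof is correct, and it takes a genuinely different route from the paper. The paper expands the product via the binomial identity
$\prod_i a_i' = \prod_i (a_i + (a_i'-a_i)) = \sum_{S\subseteq[k]}\prod_{i\in S}a_i\prod_{i\notin S}(a_i'-a_i)$,
subtracts the $S=[k]$ term, bounds the remaining contribution by $b^k\sum_{s\ge 1}\binom{k}{s}\noise^s \le b^k(e^{k\noise}-1)$, and then uses $e^{x}-1=O(x)$ for $x=k\noise\le 1$. Your argument instead uses the telescoping (``hybrid'') identity that sums single-coordinate swaps, bounds each summand by $(b+\noise)^{j-1}\noise\,b^{k-j}$, and invokes $\noise\le k^{-1}$ only to keep the geometric factor $(1+\noise/b)^{j-1}$ bounded by a constant. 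Both are one-page elementary arguments and both use the smallness of $\noise$ at exactly one place; yours avoids the binomial coefficients and the exponential-series trick, which makes it slightly cleaner and also exposes that the bound you actually get is $O(b^{k-1}k\noise)$, marginally sharper than stated and then absorbed into the $O(\cdot)$ since $b$ is a fixed constant. One stylistic note: the hypothesis is written ``$\noise\le n^{-1}$'' in the statement but there is no $n$ in scope; you correctly read it as $\noise\le k^{-1}$, which is the bound both proofs need.
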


\begin{proof}
    The proof is similar to the proof of Lemma 1 in \citep{li2010ranking}.
    \begin{align*}
        \prod_{i=1}^k a_i' - \prod_{i=1}^{k} a_i &= \sum_{S\subseteq [k], S\neq \emptyset} \prod_{i \in S} a_i \prod_{i\not\in S} (a_i' - a_i)\ \\
        &\leq \sum_{t=1}^{k} \sum_{|S|=t} \prod_{i \in S} a_i \prod_{i\not\in S} (a_i' - a_i) \\
        &\leq b^k \sum_{t=1}^{k} \binom{k}{t} \noise^{k} \leq b^k \sum_{t=1}^{k} \frac{(k\noise)^{k}}{t!} \\
        &\leq b^k (e^{k\noise} - 1) = O(b^k \cdot k\noise).
    \end{align*}
    The third inequality holds because $\binom{k}{t} \leq \frac{k^t}{t!}$. The last inequality holds since $e^{x} = \sum_{i>0} \frac{x^i}{i!}$ and  the fact that $e^{f(x)} = 1 + O(f(x))$ if $f(x) = O(1)$.
\end{proof}

With this lemma, we are able to show that the projection (Line~\ref{line:dim1-ba}) and the injected noise (Line~\ref{line:dim1-ta}) do not introduce much extra error for our estimation of the positions of ground truth spikes.

\begin{lemma} \label{lm:dim1-step2}
    Let $\tveca = [\ta_1, \cdots, \ta_{k}]^{\top} \in \R^k$ be the intermediate result (Line~\ref{line:dim1-ta}) in Algorithm~\ref{alg:dim1}. Then,
    \begin{align*}
        \sum_{i=1}^{k} w_i \prod_{j=1}^{k} |\alpha_i - \ta_j|^2 \leq 2^{O(k)} \cdot \noise.
    \end{align*}
\end{lemma}

\begin{proof}
    Let $\bveca = [\ba_1, \cdots, \ba_{k}]^{\top} \in \R^k$ be the set of projections (Line~\ref{line:dim1-ba}). Since the projected domain $\simplex_1$ is convex, 
    $|\alpha_i - \ba_j| \leq |\alpha_i - \ha_j|$ always holds. This gives 
    \begin{align*}
        \prod_{j=1}^{k} |\alpha_i - \ba_j|^2 \leq \prod_{j=1}^{k} |\alpha_i - \ha_j|^2.
    \end{align*}
    Since $\ta_j$ is noisy $\ba_j$ with additive noise of size less than $\noise$, we have $|\alpha_i - \ta_j| \leq |\alpha_i - \ba_j| + \noise$.
    Applying Lemma~\ref{lm:injected-noise} by regarding $|\alpha_i - \ta_j|$ as $a_j$ and $|\alpha_i - \ha_j|$ as $a_j'$, with the constrain guaranteed according to $|\alpha_i - \ta_j|\leq |\alpha_i| + |\ta_j| \leq 2$, we can conclude that
    \begin{align*}
        \prod_{j=1}^{k} |\alpha_i - \ta_j|^2 \leq \prod_{j=1}^{k} |\alpha_i - \ba_j|^2 + O(2^k \cdot k\noise).
    \end{align*}
    Combining two inequalities, we get the desired inequality: 
    $$
       \sum_{i=1}^{k} w_i \prod_{j=1}^{k} |\alpha_i - \ta_j|^2 \leq \sum_{i=1}^{k} w_i \prod_{j=1}^{k} |\alpha_i - \ha_j| + O(2^k \cdot k\noise) \leq 2^{O(k)} \cdot \noise,
    $$
    where the last inequality is given by Lemma~\ref{lm:dim1-step2-1}.
\end{proof}

We now start to bound the error in Line~\ref{line:dim1-hw} which is a linear regression
defined by the Vandermonde matrix.
We first introduce the Schur polynomial which has a strong connection to the Vandermonde matrix.

\begin{definition}[Schur polynomial]
Given a partition $\lambda = (\lambda_1, \cdots, \lambda_k)$ where $\lambda_1 \geq \cdots \geq \lambda_k \geq 0$, the Schur polynomial $s_{\lambda}(x_1, x_2, \cdots, x_k) $ is defined  by the ratio 
$$
    s_{\lambda}(x_1, x_2, \cdots, x_k) = 
    \left|\begin{matrix}
        x_1^{\lambda_1+k-1} & x_2^{\lambda_1+k-1} & \cdots & x_k^{\lambda_1+k-1} \\
        x_1^{\lambda_2+k-2} & x_2^{\lambda_2+k-2} & \cdots & x_k^{\lambda_2+k-2} \\
        \vdots & \vdots & \ddots & \vdots \\
        x_1^{\lambda_k} & x_2^{\lambda_k} & \cdots & x_k^{\lambda_k}
    \end{matrix}\right| \cdot \left|\begin{matrix}
        1 & 1 & \cdots & 1 \\
        x_1^1 & x_2^1 & \cdots & x_k^1 \\
        \vdots & \vdots & \ddots & \vdots \\
        x_1^{k-1} & x_2^{k-1} & \cdots & x_k^{k-1}
    \end{matrix}\right|^{-1}.
$$
\end{definition}

A Schur polynomial is a symmetric function because the numerator and denominator are 
both {\em alternating} (i.e., it changes sign if we swap two variables), and a polynomial since all alternating polynomials are divisible by the Vandermonde determinant
$\prod_{i<j}(x_i-x_j)$.
\footnote{It is easy to see that if a polynomial $p(x_1,\ldots, x_n)$ is alternating,
it must contain $(x_i-x_j)$ as a factor .}
The following classical result (see, e.g., \citet{10.5555/2124415}) shows an alternative way to calculate Schur polynomial. 
It plays a crucial role in bounding the error of a linear system
defined by the Vandermonde matrix (Lemma~\ref{lm:step3}).

\begin{theorem} (e.g. \citet{10.5555/2124415})
Let $\textrm{SSYT}(\lambda)$ be the set consists all semistandard Young tableau of shape $\lambda = (\lambda_1, \cdots, \lambda_k)$. Then, Schur polynomial $s_{\lambda}(x_1, x_2, \cdots, x_k)$ can also be computed as follows: 
\begin{align*}
    s_{\lambda}(x_1, x_2, \cdots, x_k) = \sum_{T\in \textrm{SSYT}(\lambda)} \prod_{i=1}^{k} x_i^{T_i}.
\end{align*}
where $T_i$ counts the occurrences of the number $i$ in $T$.
\end{theorem}

In a semistandard Young tableau, we
allow the same number to appear more than once (or not at all), and we require the entries weakly increase along each row and strictly increase down each column
As an example, for $\lambda = (2, 1, 0)$, the list of semistandard Young tableau are:

\begin{figure}[H]
    \centering
    \begin{ytableau}
       1 & 1 \\
       2 & \none
    \end{ytableau}
    \hspace{.1cm}
    \begin{ytableau}
       1 & 1 \\
       3 & \none
    \end{ytableau}
    \hspace{.1cm}
    \begin{ytableau}
       1 & 2 \\
       2 & \none
    \end{ytableau}
    \hspace{.1cm}
    \begin{ytableau}
       1 & 2 \\
       3 & \none
    \end{ytableau}
    \hspace{.1cm}
    \begin{ytableau}
       1 & 3 \\
       2 & \none
    \end{ytableau}
    \hspace{.1cm}
    \begin{ytableau}
       1 & 3 \\
       3 & \none
    \end{ytableau}
    \hspace{.1cm}
    \begin{ytableau}
       2 & 2 \\
       3 & \none
    \end{ytableau}
    \hspace{.1cm}
    \begin{ytableau}
       2 & 3 \\
       3 & \none
    \end{ytableau}
\end{figure}
The corresponding Schur polynomial is 
\begin{align*}
    s_{(2, 1, 0)}(x_1, x_2, x_3) = x_1^2x_2 + x_1^2x_3 + x_1x_2^2 + x_1x_2x_3 + x_1x_3x_2 + x_1x_3^2 + x_2^2x_3 + x_2x_3^2.
\end{align*}

Considering the special case where $\lambda = (j-k+1, 1, 1, \cdots, 1)$, this theorem implies the following result. We also present a self-contained proof of the lemma for completeness.



\begin{lemma} \label{lm:dim1-step3-matrix}  
For all $j \geq k$ and distinct numbers $a_1, \cdots, a_k \in \R$ (or $a_1, \cdots, a_k \in \C$),
    $$\left|\begin{matrix}
        a_1^j & a_2^j & \cdots & a_k^j \\
        a_1^1 & a_2^1 & \cdots & a_k^1 \\
        \vdots & \vdots & \ddots & \vdots \\
        a_1^{k-1} & a_2^{k-1} & \cdots & a_k^{k-1}
    \end{matrix}\right| \cdot \left|\begin{matrix}
        1 & 1 & \cdots & 1 \\
        a_1^1 & a_2^1 & \cdots & a_k^1 \\
        \vdots & \vdots & \ddots & \vdots \\
        a_1^{k-1} & a_2^{k-1} & \cdots & a_k^{k-1}
    \end{matrix}\right|^{-1} = \prod_{i=1}^{k} a_i \cdot \sum_{s \in (k)^{j-k}} \prod_{i=1}^{k} a_i^{s_i} $$
    where $(a)^{b} \subset \{0,\cdots,b\}^a$ denotes the set that contains all vector $s$ in which $\sum_{i=1}^{a} s_i = b$.
\end{lemma}

\begin{proof}
    We provide a self-contained proof here for completeness.
    According to the property of Vandermonde determinant, we have
    \begin{align*}
        \left|\begin{matrix}
        1 & 1 & \cdots & 1 \\
        a_1^1 & a_2^1 & \cdots & a_k^1 \\
        \vdots & \vdots & \ddots & \vdots \\
        a_1^{k-1} & a_2^{k-1} & \cdots & a_k^{k-1}
        \end{matrix}\right| &= (-1)^{k(k-1)/2}\prod_{p<q} (a_p - a_q).  
    \end{align*}
    By expansion along first row, we have 
    \begin{align*}
        \left|\begin{matrix}
            a_1^j & a_2^j & \cdots & a_k^j \\
            a_1^1 & a_2^1 & \cdots & a_k^1 \\
            \vdots & \vdots & \ddots & \vdots \\
            a_1^{k-1} & a_2^{k-1} & \cdots & a_k^{k-1}
        \end{matrix}\right| &= \sum_{i=1}^{k} (-1)^{i+1} a_i^j  
        \left|\begin{matrix}
            a_1^1 & \cdots & a_{i-1}^1 & a_{i+1}^1 & \cdots & a_k^1 \\
            \vdots & \ddots & \vdots & \vdots & \ddots & \vdots \\
            a_1^{k-1} & \cdots & a_{i-1}^{k-1} & a_{i+1}^{k-1} & \cdots & a_k^{k-1} \\
        \end{matrix}\right| \\
        &= \sum_{i=1}^{k} (-1)^{i+1} a_i^j (-1)^{(k-1)(k-2)/2} \prod_{t\neq i} a_t \prod_{p<q,p\neq i, q\neq i} (a_p - a_q) \\
        &= (-1)^{k(k-1)/2} \cdot (-1)^{k+1}
        \sum_{i=1}^{k}a_i^j (-1)^{i+1} \prod_{t\neq i} a_t \prod_{p<q,p\neq i, q\neq i} (a_p - a_q)
    \end{align*}
    Thus, 
    \begin{align*}
        \text{LHS}(j) = \sum_{i=1}^{k} a_i^{j} \cdot \prod_{t\neq i} \frac{a_t }{a_i - a_t}
    \end{align*}
    Let $L(z) = \sum_{j\geq k} \text{LHS}(j) z^j $ and $R(z) = \sum_{j \geq k} \text{RHS}(j) z^j$ be the generating functions of LHS and RHS corresponding to $j$ respectively. Then,
    \begin{align*}
        L(z) &= \sum_{i=1}^{k} \left(\sum_{j\geq k} a_i^j z^j \right) \prod_{t\neq i} \frac{a_t }{a_t - a_i} = z^k \sum_{i=1}^{k} \frac{a_i^k } {1 - a_i z} \prod_{t\neq i} \frac{a_t }{a_i - a_t} \\
        R(z) &= \prod_{i=1}^{k} \left( \sum_{j \geq 1}  a_i^j z^j \right) = z^k \prod_{t=1}^{k} \frac{a_t}{1 - a_t z} 
    \end{align*}
    Consider the quotient between two functions,
    \begin{align*}
        \frac{L(z)}{R(z)} = \sum_{i=1}^{k} a_i^{k-1}\prod_{t\neq i} \frac{1 - a_t z}{a_i - a_t}.
    \end{align*}
        For $z = a_p^{-1}$ where $p \in [1, k]$, the product in each  additive term with $i \neq p$ equals $0$. As a result, we can see that
    \begin{align*}
        \frac{L(a_p^{-1})}{R(a_p^{-1})} = a_p^{k-1} \prod_{t\neq p} \frac{1 - a_t a_p^{-1}}{a_i - a_t} = 1.
    \end{align*}
    Since $L(z)/R(z)$ is polynomial of $z$ of degree $k-1$, from the uniqueness of polynomial, we have
    $$\frac{L(z)}{R(z)} = 1 \Rightarrow \text{LHS} = \text{RHS}$$
    which directly proves the statement.
\end{proof}


Using this result, we can show the reconstructed moment vector $V_{\tveca}\hvecw$ is close to the ground truth $M(\mix)$.
The next lemma shows the discrete mixture with support $\tveca$ can well approximate a single spike distribution at $\alpha$ in terms of the moments if any of the spikes in $\tveca$ is close to $\alpha$.

\begin{lemma} \label{lm:step3}
    Let $M(\alpha)$ be the vector
    $[\alpha^0, \alpha^1, \cdots, \alpha^{2k-1}]^{\top}$.
    Suppose $F$ is either $\{\R$ or $\C\}$.
    For $\alpha \in F$ with $|\alpha| \leq 1$ and $\tveca = [\ta_1, \cdots, \ta_{k}]^{\top}\in F^k$ with
    all $\ta$s distinct and $\|\tveca\|_{\infty} \leq 1$. We have,
    $$\min_{\vecx\in F^k}\|V_{\tveca}\vecx - M(\alpha)\|_2 \leq 2^{O(k)} \prod_{j=1}^{k} |\alpha - \ta_j|.$$
\end{lemma}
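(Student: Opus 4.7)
The approach is to exhibit a specific $\vecx \in F^k$ achieving the claimed bound, namely the Lagrange interpolation coefficients at the nodes $\ta_1, \ldots, \ta_k$.

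First, I would set $x_j := \ell_j(\alpha)$, where $\ell_j(z) := \prod_{i \neq j}(z - \ta_i)/\prod_{i \neq j}(\ta_j - \ta_i)$ is the $j$-th Lagrange basis polynomial; this is well-defined since all $\ta_j$ are distinct. The defining identity $\sum_j q(\ta_j)\ell_j(\alpha) = q(\alpha)$ for every polynomial $q$ of degree at most $k-1$ immediately implies $(V_{\tveca}\vecx)_i = \alpha^i$ for $0 \leq i \leq k-1$, so the first $k$ coordinates of the residual $V_{\tveca}\vecx - M(\alpha)$ vanish exactly. This is the reason to pick the Lagrange solution rather than, say, to try to control the condition number of $V_{\tveca}$ directly, which is hopeless without a separation assumption.

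Second, I would analyze the remaining coordinates. For $k \leq t \leq 2k-1$, $(V_{\tveca}\vecx)_t = \sum_j \ta_j^t \ell_j(\alpha) = p_t(\alpha)$, where $p_t$ is the unique polynomial of degree less than $k$ interpolating $z^t$ at the nodes $\ta_j$. Polynomial long division gives $z^t = Q_t(z)\prod_{j=1}^k(z-\ta_j) + p_t(z)$ with $Q_t$ a monic polynomial of degree $t-k$, so $\alpha^t - p_t(\alpha) = Q_t(\alpha)\prod_j(\alpha-\ta_j)$. The key identification, which can be obtained along the lines of Lemma~\ref{lm:dim1-step3-matrix} applied to a $(k+1) \times (k+1)$ Vandermonde-type determinant with row exponents $\{0, 1, \ldots, k-1, t\}$ in the $k+1$ variables $\ta_1, \ldots, \ta_k, \alpha$ (equivalently, by expanding $z^t/\prod_j(z-\ta_j)$ as a formal series at infinity), is that $Q_t(z) = h_{t-k}(z, \ta_1, \ldots, \ta_k)$, the complete homogeneous symmetric polynomial.

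Third, I would bound $|Q_t(\alpha)|$ by a crude monomial count: $h_{t-k}(\alpha, \ta_1, \ldots, \ta_k)$ is a sum of $\binom{t}{k}$ monomials, each a product of $t-k$ variables drawn from $\{\alpha, \ta_1, \ldots, \ta_k\}$ and hence of modulus at most $1$ by hypothesis. So $|Q_t(\alpha)| \leq \binom{2k-1}{k} \leq 2^{O(k)}$, giving $|\alpha^t - p_t(\alpha)| \leq 2^{O(k)}\prod_j |\alpha - \ta_j|$ uniformly for $k \leq t \leq 2k-1$. Assembling, $\|V_{\tveca}\vecx - M(\alpha)\|_2^2 = \sum_{t=k}^{2k-1} |\alpha^t - p_t(\alpha)|^2 \leq k \cdot 2^{O(k)}\prod_j|\alpha-\ta_j|^2 = 2^{O(k)}\prod_j|\alpha-\ta_j|^2$, and the square root yields the claimed bound on the minimum. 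The argument is field-agnostic and applies verbatim to $F=\R$ and $F=\C$. The main conceptual step is recognizing $Q_t$ as a complete homogeneous symmetric polynomial so that its magnitude is controlled by a pure monomial count independent of the spacings $|\ta_i - \ta_j|$; this is precisely the feature that allows the final bound to be separation-free.
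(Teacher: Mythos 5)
Your proof is correct, and while it constructs the very same optimal $\vecx$ as the paper (your Lagrange coefficients $\ell_j(\alpha)$ and the paper's $\vecx^*$ both solve the top $k\times k$ Vandermonde block exactly, hence coincide), the route to bounding the tail residuals is genuinely cleaner than the paper's. The paper shifts to $\Delta\ta_j = \ta_j - \alpha$, invokes the determinant-ratio identity of Lemma~\ref{lm:dim1-step3-matrix} in the $k$ shifted variables (yielding $\prod_t \Delta\ta_t \cdot h_{j-k}(\Delta\ta_1,\ldots,\Delta\ta_k)$), and then needs a binomial expansion to translate back to the unshifted residual $(V_{\tveca}\vecx^* - M(\alpha))_j$, which introduces the $|\Delta\ta_t|\leq 2$ and $2^{2j}$ losses. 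You bypass all of this: working directly in the original coordinates, polynomial long division of $z^t$ by $\prod_j(z-\ta_j)$ gives the residual as $Q_t(\alpha)\prod_j(\alpha-\ta_j)$ in one step, and identifying $Q_t$ as the complete homogeneous symmetric polynomial $h_{t-k}$ in the $k+1$ variables $(\alpha,\ta_1,\ldots,\ta_k)$ — each of modulus $\leq 1$ — lets the monomial count do all the work. The two approaches are close in spirit (the RHS of Lemma~\ref{lm:dim1-step3-matrix} is also a $\prod a_i$ times a complete homogeneous symmetric polynomial), but yours avoids the coordinate shift, avoids the binomial expansion, and extracts a slightly tighter constant as a byproduct. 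This is a nicer proof of the same bound.
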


\begin{proof}
    Let $\Delta \ta_j = \ta_j - \alpha$ and let $\vecx^* = [x_1^*, \cdots, x_k^*]^{\top}$ be the solution to linear equation
    \begin{align*}
        \begin{bmatrix}
    		1 & 1 & \cdots & 1 \\
    		\Delta \ta_1^1 & \Delta \ta_2^1 & \cdots & \Delta \ta_{k}^1 \\
    		\vdots & \vdots & \ddots & \vdots \\
    		\Delta \ta_1^{k-1} & \Delta \ta_2^{k-1} & \cdots & \Delta \ta_{k}^{k-1}
        \end{bmatrix}
    	\begin{bmatrix}
    		x_1^* \\
    		x_2^*  \\
    		\vdots  \\
    		x_k^* \\
    	\end{bmatrix} =
    	\begin{bmatrix}
    		1 \\
    		0  \\
    		\vdots  \\
    		0 \\
    	\end{bmatrix}.
    \end{align*}
    Denote $(V_{\tveca}\vecx^* - M(\alpha))_j$ as the $j$th element of vector $V_{\tveca}\vecx^* - M(\alpha)$. We will show that $|(V_{\tveca}\vecx^* - M(\alpha))_j| \leq 2^{k} \cdot 2^{2j}  \prod_{t=1}^{k} |\Delta \ta_t|$ for any integer $j \geq 0$.It is clear that we have $\sum_{t=1}^{k} x_t^*  \Delta \ta_t^j = 0$ for any $1 \leq j < k$ according to the construction of $\vecx^*$. For larger $j$, we can expand the element by 
    \begin{align*}
        (V_{\tveca}\vecx^* - M(\alpha))_j &= \sum_{t=1}^{k} x^*_t \ta_t^j - \alpha^j 
        = \sum_{t=1}^{k} x^*_t \sum_{p=0}^{j} \binom{j}{p} \Delta \ta_t^p \alpha^{j-p} - \alpha^j \\
        &= \sum_{p=1}^{j} \binom{j}{p} \alpha^{j-p} \sum_{t=1}^{k} x_t^* \Delta \ta_t^p.
    \end{align*}
    where the second equality follows from the definition of $\Delta \ta_j$.
    So it is sufficient to bound $\sum_{t=1}^{k} x_t^* \Delta \ta_t^j$ for any $j\geq k$.
    We have that 
    \begin{align*}
        \sum_{t=1}^{k} x_t^*  \Delta \ta_t^j 
        &=
        \left|\begin{matrix}
    		\Delta \ta_1^{j} & \Delta \ta_2^{j} & \cdots & \Delta \ta_{k}^{j} \\
    		\Delta \ta_1^1 & \Delta \ta_2^1 & \cdots & \Delta \ta_{k}^1 \\
    		\vdots & \vdots & \ddots & \vdots \\
    		\Delta \ta_1^{k-1} & \Delta \ta_2^{k-1} & \cdots & \Delta \ta_{k}^{k-1}
        \end{matrix}\right| \cdot \left|\begin{matrix}
    		1 & 1 & \cdots & 1 \\
    		\Delta \ta_1^1 & \Delta \ta_2^1 & \cdots & \Delta \ta_{k}^1 \\
    		\vdots & \vdots & \ddots & \vdots \\
    		\Delta \ta_1^{k-1} & \Delta \ta_2^{k-1} & \cdots & \Delta \ta_{k}^{k-1}
        \end{matrix}\right|^{-1}\\
        &= \prod_{t=1}^{k} \Delta \ta_t \cdot \sum_{s \subseteq (k)^{j-k}} \prod_{t=1}^{k} \Delta \ta_t^{s_t}, 
    \end{align*} where the first equality dues to the relation between inverse matrix and adjoint matrix, and the other equation holds from Lemma~\ref{lm:dim1-step3-matrix}.
    Since $|\Delta \ta_t| \leq |\alpha|+|\ta_t| \leq 2$, $(k)^{j-k}$ has exactly $\binom{j}{k}$ terms, we can bound the summation by  
    \begin{align*}
        \left|\sum_{t=1}^{k} x_t^*  \Delta \ta_t^j\right| \leq 2^{k} \cdot 2^{j-k} \prod_{t=1}^{k} |\Delta \ta_t|.
    \end{align*}
    Plugging in this result, we can reach that 
    \begin{align*}
        |(V_{\tveca}\vecx^* - M(\alpha))_j| &= \left|\sum_{p=1}^{j} \binom{j}{p} \alpha^{j-p} \sum_{t=1}^{k} x_t^* \Delta \ta_t^p\right|  
        \leq \sum_{p=k}^{j} \binom{j}{p} \cdot 2^{k} \cdot 2^{p-k} \prod_{t=1}^{k} |\Delta \ta_t| \\ &\leq 2^{k} \cdot 2^{2j}  \prod_{t=1}^{k} |\Delta \ta_t|.
    \end{align*}
    As a result, we conclude that 
    \begin{align*}
        \min_{\vecx\in F^k}\|V_{\tveca}\vecx - M(\alpha)\|_2 &\leq \|V_{\tveca}\vecx^* - M(\alpha)\|_2
        \leq \sqrt{\sum_{j=k}^{2k-1} \left(2^{k} \cdot 2^{2j}  \prod_{t=1}^{k} |\Delta \ta_t|\right)^2}
        \leq 2^{O(k)} \prod_{j=1}^{k} |\alpha - \ta_j|.
    \end{align*}
\end{proof}

Lemma~\ref{lm:dim1-step2} shows that every discrete distribution in ground truth $\mix$ has a close spike in recovered positions $\tveca$ and thus can be well approximated according to the above lemma. Therefore, the mixture of the discrete distributions can also be well approximated with support $\tveca$. As the following lemma shows, solving linear regression in Line~\ref{line:dim1-hw} finds a weight vector $\tw$ in which the corresponding moment vector $V_{\tveca}\hvecw$ is close to the ground truth $M(\mix)$.

\begin{lemma}\label{lm:dim1-step4-pre}
    Let $\hvecw = [\hw_1, \cdots, \hw_{k}]^{\top} \in \R^k$ be the intermediate result (Line~\ref{line:dim1-hw}) in Algorithm~\ref{alg:dim1}. Then,
    $$\|V_{\tveca}\hvecw - M(\mix)\|_{\infty} \leq 2^{O(k)} \cdot \noise^{\frac{1}{2}}.$$
\end{lemma}

\begin{proof}
    Firstly, we have
    \begin{align*}
        \|V_{\tveca}\hvecw - M(\mix)\|_2 &\leq \|V_{\tveca}\hvecw - M'\|_2 + \|M(\mix) - M'\|_2  \\
        &\leq O(1) \cdot \min_{\vecx \in \R^k} \|V_{\tveca} \vecx - M'\|_2 + \|M(\mix) - M'\|_2 \\ 
        &\leq O(1) \cdot \min_{\vecx \in \R^k} \|V_{\tveca} \vecx - M(\mix)\|_2 + O(1) \cdot \|M(\mix) - M'\|_2
    \end{align*}
    where the first and third inequalities hold due to triangle inequality, and the second inequality holds since $\|V_{\tveca} \hvecw - M'\|_2^2 \leq O(1) \cdot \min_{\vecx \in \R^k} \|V_{\tveca} \vecx - M'\|_2^2$
    (we only find an $O(1)$-approximation in this step).
    
    For the first term, we can see that
    \begin{align*}
        \min_{\vecx \in \R^k} \|V_{\tveca} \vecx - M(\mix)\|_2 &= \min_{\vecx_1, \cdots, \vecx_k \in \R^k} \left\|\sum_{i=1}^{k}w_i V_{\tveca}\vecx_i - \sum_{i=1}^{k}w_i M(\alpha_i)\right\|_2 \\
        &\leq \sum_{i=1}^{k} w_i \min_{\vecx \in \R^k} \|V_{\tveca}\vecx - M(\alpha_i)\|_2 \\
        &\leq 2^{O(k)} \sum_{i=1}^{k} w_i \prod_{j=1}^{k} |\alpha_i - \ta_j| & (\text{Lemma~\ref{lm:step3}}) \\
        &\leq 2^{O(k)} \sqrt{\sum_{i=1}^{k} w_i \prod_{j=1}^{k} |\alpha_i - \ta_j|^2} & (\text{AM-QM Inequality}, \vecw \in \Delta_{k-1}) \\
        &\leq 2^{O(k)} \cdot \sqrt{\noise}. & (\text{Lemma~\ref{lm:dim1-step2}}).
    \end{align*}
    For the second term, $\|M(\mix) - M'\|_2 \leq \sqrt{k} \|M(\mix) - M'\|_{\infty} \leq \sqrt{k} \cdot \noise$.
    As a result, $$\|V_{\tveca}\hvecw - M(\mix)\|_{\infty} \leq \|V_{\tveca}\hvecw - M(\mix)\|_2 \leq O(1) \cdot 2^{O(k)} \cdot \sqrt{\noise} + O(1) \cdot \sqrt{k} \cdot \noise \leq 2^{O(k)} \cdot \zeta^{\frac{1}{2}}.$$
\end{proof}

The moment-transportation inequality (Lemma~\ref{lm:inequality1d})
requires the input to be signed measure. 
So, we normalize the recovered spikes in Line~\ref{line:dim1-tw}. 
The following lemma shows the moment vector would not change too much after normalization.

\begin{lemma}\label{lm:dim1-step4}  
  Let $\tmix = (\tveca, \tvecw)$ be the intermediate result 
  (Line~\ref{line:dim1-tmix}) in Algorithm~\ref{alg:dim1}. 
  Then, we have 
    $$\Mdis_K(\tmix, \mix) \leq 2^{O(k)} \cdot \noise^{\frac{1}{2}}.$$
\end{lemma}

\begin{proof} 
    According to the definition of $V_{\tveca}$ and $M(\mix)$, we have $(V_{\tveca}\hvecw - M(\mix))_1 = \sum_{i=1}^{k} \hw_i - 1$.
    Therefore, $|\sum_{i=1}^{k} \hw_i - 1| \leq |(V_{\tveca}\hvecw - M(\mix))_1| \leq \|V_{\tveca}\hvecw - M(\mix)\|_{\infty} \leq 2^{O(k)} \cdot \noise^{\frac{1}{2}}.$ 
    
    Note that $\|\tvecw - \hvecw\|_{1} = \|\tvecw\|_1 - \|\hvecw\|_{1} = |(\sum_{i=1}^{k} \hw_i)^{-1} - 1|$. For $2^{O(k)} \cdot \noise^{\frac{1}{2}} \leq 1/2$, we can conclude that $\|\tvecw - \hvecw\|_{1} = |(\sum_{i=1}^{k} \hw_i)^{-1} - 1| \leq 2 |\sum_{i=1}^{k} \hw_i - 1| \leq  2^{O(k)} \cdot \noise^{\frac{1}{2}}.$
    
    Thus, we can reach that
    \begin{align*}
        \Mdis_K(\tmix, \mix)&=\|M(\tmix) - M(\mix)\|_\infty = \|V_{\tveca} \tvecw - M(\mix)\|_{\infty} \\
        &\leq  \|V_{\tveca} \hvecw - M(\mix)\|_{\infty} + \|V_{\tveca} \tvecw - V_{\tveca} \hvecw)\|_{\infty} \\
        &\leq  \|V_{\tveca} \hvecw - M(\mix)\|_{\infty} + \|V_{\tveca}\|_\infty \|\tvecw - \hvecw\|_1 \\
        &\leq 2^{O(k)} \cdot \noise^{\frac{1}{2}}.
    \end{align*}
    where the first inequality holds due to triangle inequality and the third holds since $\|V_{\tveca}\|_{\infty} \leq 1$.
\end{proof}

Moreover, the weight can still be negative after normalization. To find a reconstruction in the original space, we want to find a mixture of discrete distributions in $\Spike(\simplex_1, \simplex_{k-1})$ that is close to $\tmix$. However, this step can cast a huge impact on the moment. Thus, we directly estimate the influence in terms of the transportation distance instead of the moment distance. We note the transportation distance in the next lemma 
is defined possibly for non-probability measures (see Equation \eqref{eq:signedtrans}
in Appendix~\ref{app:transfornonprob} for details).

\begin{lemma}\label{lm:step5} 
    Let $\convex$ be a compact convex set in $\mathbb{R}^d$ and let $\tmix = (\tveca, \tvecw) \in \Spike(\convex, \Sigma_{k-1})$ be a $k$-spike distribution over support $\convex$. 
    Then, we have 
    $$\min_{\cmix \in \Spike(\tveca, \simplex_{k-1})} \tran(\tmix, \cmix) \leq 2 \min_{\bmix \in \Spike(\convex, \simplex_{k-1})} \tran(\tmix, \bmix).$$
    Note that the minimization on the left is over support $\tveca$.
\end{lemma}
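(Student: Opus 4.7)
The plan is to take the optimal non-negative reference mixture and project its mass onto $\tveca$. Let $\bmix \in \Spike(\convex, \simplex_{k-1})$ be the minimizer on the right-hand side, and set $D := \tran(\tmix, \bmix)$. Define the ``nearest-point-in-$\tveca$'' map $\pi: \convex \to \tveca$ by $\pi(x) \in \arg\min_{a \in \tveca} \|x-a\|_1$ (breaking ties arbitrarily), and let $\cmix := \pi_{\#} \bmix$ be the push-forward. Because $\bmix$ has non-negative weights summing to $1$, so does $\cmix$; moreover $\supp(\cmix) \subseteq \tveca$, so $\cmix \in \Spike(\tveca, \simplex_{k-1})$ (padding with zero-weight spikes if necessary). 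It then suffices to show $\tran(\tmix, \cmix) \leq 2D$.

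The key step is to bound $\tran(\bmix, \cmix)$ by $D$. Consider the function $f(x) := \min_{a \in \tveca} \|x - a\|_1$, which is $1$-Lipschitz on $\R^d$ and identically zero on $\tveca$. Since $\tmix$ is supported on $\tveca$, we have $\int f \, \d \tmix = 0$, so by the Kantorovich--Rubinstein dual formulation \eqref{eq:trans} (extended to signed measures of equal total mass, as in the appendix on transportation for non-probability measures),
\begin{align*}
    \int f \, \d \bmix \;=\; \int f \, \d(\bmix - \tmix) \;\leq\; \tran(\tmix, \bmix) \;=\; D.
\end{align*}
On the other hand, the coupling $(\mathrm{id}, \pi)_{\#} \bmix$ is a valid transport plan from $\bmix$ to $\cmix$, which yields
\begin{align*}
    \tran(\bmix, \cmix) \;\leq\; \int \|x - \pi(x)\|_1 \, \d \bmix(x) \;=\; \int f \, \d \bmix \;\leq\; D.
\end{align*}

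Finally, the triangle inequality for the (signed) transportation distance gives
\begin{align*}
    \tran(\tmix, \cmix) \;\leq\; \tran(\tmix, \bmix) + \tran(\bmix, \cmix) \;\leq\; 2D,
\end{align*}
completing the proof. The only subtlety, and what I would expect to be the main obstacle, is justifying that the dual formulation and the triangle inequality are valid for $\tmix$ with possibly negative weights; this is handled by the authors' extension of $\tran$ to signed measures of equal total mass (referenced as \eqref{eq:signedtrans}), where both properties follow directly because the Kantorovich--Rubinstein supremum defines a seminorm on signed measures of zero total mass.
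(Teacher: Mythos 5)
Your proof is correct, and it takes a genuinely different route from the paper. The paper's argument takes $\bmix$, lets $\mix'$ be the unconstrained $\tran$-minimizer over $\Spike(\tveca, \Sigma_{k-1})$ (i.e., over \emph{signed} weights on the fixed support), gets $\tran(\tmix, \mix') \leq 2\tran(\tmix, \bmix)$ by the same triangle inequality you use, and then proves by contradiction that this optimizer actually has non-negative weights --- by taking an optimal transport plan for $\tran(\bmix, \mix')$, splitting off the part whose source is $(\mix')^-$, and showing the remaining coupling certifies a cheaper competitor $\mix''$. You instead \emph{construct} a feasible $\cmix$ directly as the nearest-point push-forward $\pi_\# \bmix$, which is non-negative by fiat, and certify $\tran(\bmix, \cmix) \leq D$ by testing $\tran(\tmix,\bmix)$ against the single $1$-Lipschitz witness $f(x) = \mathrm{dist}_{\ell_1}(x, \tveca)$, which vanishes on $\supp(\tmix)$. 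Your approach buys a shorter and cleaner argument: it sidesteps the coupling-surgery and the contradiction, and the non-negativity is automatic rather than something to be verified. The paper's approach works with the exact optimizer rather than a particular construction, but since both yield the same factor of $2$, this gains nothing here. One small point: you should note (as you implicitly do) that the signed-measure extension of $\tran$ in \eqref{eq:trans} is symmetric (replace $f$ by $-f$) and satisfies the triangle inequality because it is the supremum of linear functionals over the symmetric set $\lip$; both are needed and both are immediate, so the reliance on the appendix's extension is fine.
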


\begin{proof}
    Consider any $\bmix = (\bveca, \bvecw) \in \Spike(\convex, \Delta_{k-1})$ such that $\bveca = [\ba_1, \cdots, \ba_k]^{\top}$ and $\bvecw = [\bw_1, \cdots, \bw_k]^{\top}$. Let $\mix' = \arg \min_{\mix' \in \Spike(\tveca, \Sigma_{k-1})} \tran(\mix', \bmix)$. Since $\tmix \in \Spike(\tveca, \Sigma_{k-1})$, we have $\tran(\mix', \bmix) \leq \tran(\tmix, \bmix)$. From triangle inequality, we have $$\tran(\tmix, \mix') \leq \tran(\tmix, \bmix) + \tran(\bmix, \mix') \leq 2\tran(\tmix, \bmix).$$ Now, we start to show $\mix' \in \Spike(\tveca, \simplex_{k-1})$, i.e., the weight of spikes in the closest distribution to $\bmix$ with support $\tveca$ should be all non-negative.
   
    Towards a contradiction, assume the optimal distribution $\mix'$ contains at least one negative spike, that is, let $(\mix')^- = (\veca, \max\{-\vecw, \mathbf{0}\})$ be the negative spikes of $\mix'$, we have $(\mix')^- \neq \mathbf{0}$. In this case, since $\bmix$ is a probability distribution, $\tran(\bmix, \mix') = \tran(\bmix + (\mix')^-, (\mix')^+)$ where $(\mix')^+ = (\veca, \max\{\vecw, \mathbf{0}\})$ is positive spikes of $\mix'$.
   
    Let $\mu'$ be the optimal matching distribution corresponding to $\tran(\bmix + (\mix')^-, (\mix')^+)$. From its definition, $\mu'$ is non-negative. Let $\mu''$ be the matching distribution in $\mu'$ that maps $\bmix$ and $(\mix')^+$. Let $\mix''$ be the marginal distribution of $\mu''$ other than $\bmix$. In this case, $\mu''$ is a valid matching distribution for $\tran(\bmix, \mix'')$. Moreover, $\mu''$ gives less transportation distance than $\mu'$ does since the cost on the eliminated terms is non-negative. Hence, $\tran(\bmix, \mix'') < \tran(\bmix, \mix')$, which is a contradiction.
    
    As a result, $\mix' \in \Spike(\tveca, \Delta_{k-1})$. Therefore, $\min_{\cmix \in \Spike(\veca, \Delta_{k-1})} \tran(\tmix, \cmix) \leq \tran(\tmix, \mix')$. By taking minimization over $\bmix$, we can conclude that
    \begin{align*}
        \min_{\cmix \in \Spike(\veca, \Delta_{k-1})} \tran(\tmix, \cmix) \leq 2 \min_{\bmix \in \Spike(\convex, \Delta_{k-1})} \tran(\tmix, \bmix).
    \end{align*}
\end{proof}

Finally, we are ready to bound the transportation distance error of the reconstructed $k$-spike distribution:

\begin{lemma}\label{lm:dim1-reconstruction} 
  Let $\cmix = (\tveca, \cvecw)$ be the final result 
  (Line~\ref{line:dim1-cmix}) in Algorithm~\ref{alg:dim1}. 
  Then, we have 
  $$\tran(\cmix, \mix) \leq O(k\noise^{\frac{1}{4k-2}}).$$
\end{lemma}

\begin{proof}
    Combining Lemma~\ref{lm:inequality1d} and Lemma~\ref{lm:dim1-step4}, we can see
    $$\tran(\tmix, \mix) \leq O(k \cdot (2^{O(k)} \cdot \noise^{\frac{1}{2}})^{\frac{1}{2k-1}}) \leq O(k \noise^{\frac{1}{4k-2}}).$$
    Moreover, from Lemma~\ref{lm:step5}, we have 
    \begin{align*}
        \tran(\tmix, \cmix) \leq 2 \min_{\bmix \in \Spike(\simplex_1, \simplex_{k-1})} \tran(\tmix, \bmix) \leq 2 \tran(\tmix, \mix) = O(k \noise^{\frac{1}{4k-2}}).
    \end{align*}
    Finally, by triangle inequality, $\tran(\cmix, \mix) \leq \tran(\cmix, \tmix) + \tran(\tmix, \mix) \leq O(k \noise^{\frac{1}{4k-2}}).$
\end{proof}

By choosing $\noise = (\eps/k)^{O(k)}$, the previous lemma directly implies Theorem~\ref{thm:1dim-kspikecoin}.





\section{An Algorithm for the Two-Dimensional Problem}
\label{sec:2d-recover}

We can generalize the 1-dimension algorithm described in Section~\ref{sec:1d-recover} to 2-dimension. 
Let $\mix:=(\veca, \vecw) \in \Spike(\simplex_2,\simplex_{k-1})$ 
be the underlying mixture 
where $\veca = \{\veca_1, \cdots, \veca_k\}$ for $\veca_{i} = (\alpha_{i,1}, \alpha_{i,2})$ and $\vecw = [w_1, \cdots, w_k]^{\top} \in \simplex_{k-1}$.
The true moments can be computed according to $M_{i,j}(\mix) = \sum_{t=1}^k w_t \alpha_{t,1}^i \alpha_{t,2}^j$. 
The input is the noisy moments $M'_{i,j}$ in $0\leq i, j, i+j \leq 2k-1$ such that $|M'_{i,j} - M_{i,j}(\mix)| \leq \noise$. We further assume that $M'_{0,0}=M_{0,0}(\mix) = 1$ and $\noise \leq 2^{-\Omega(k)}$.

The key idea is simple: 
a distribution supported in $\R^2$ can be mapped to a distribution supported in the complex plane $\C$. In particular, we define the complex set $\simplex_{\C} = \{a+b\imag \mid (a,b) \in \simplex_2\}$.
Moreover, we denote $\vecb = [\beta_{1}, \cdots, \beta_{k}]^{\top} := [\alpha_{1,1} + \alpha_{1,2} \imag, \cdots, \alpha_{k,1} + \alpha_{k,2} \imag]^{\top} \in \simplex_{\C}^k$, and define $\pmix := (\vecb, \vecw) \in \Spike(\simplex_{\C}^k, \simplex_{k-1})$ to be the complex mixture corresponding to $\mix$. 
The corresponding moments of $\pmix$ can thus be defined as $G_{i,j}(\pmix) = \sum_{t=1}^{k} w_t (\beta_t^{\dagger})^i \beta_t^j$.


For $G = [G_{i,j}]^{\top}_{0 \leq i \leq k; 0 \leq j \leq k-1}$, denote
\begin{align*}
	A_G := \begin{bmatrix}
		G_{0,0} & G_{0,1} & \cdots & G_{0,k-1} \\
		G_{1,0} & G_{1,1} & \cdots & G_{1,k-1} \\
		\vdots & \vdots & \ddots & \vdots \\
		G_{k-1,0} & G_{1,k-1} & \cdots & G_{k-1,k-1}
	\end{bmatrix},
	b_G := \begin{bmatrix}
		G_{0,k} \\
		G_{1,k} \\
		\vdots \\
		G_{k-1,k}
	\end{bmatrix},
	M_G := \begin{bmatrix}
	    G_{0,0} \\
	    G_{0,1} \\
	    \vdots \\
	    G_{0,k-1}
	\end{bmatrix}.
\end{align*}

\begin{algorithm}[!ht]
\caption{Reconstruction Algorithm in Two-Dimensional Problem}\label{alg:dim2}
\begin{algorithmic}[1]
    \REQUIRE number of spikes $k$, noisy moments $M'(\cdot)$, noise level $\noise$ \\
    \ENSURE recovered spike distribution $\cmix$
    \STATE $G' \leftarrow [G'_{i,j} := \sum_{p=0}^{i}\sum_{q=0}^{j} \binom{i}{p}\binom{j}{q} (-\imag)^{i-p} \imag^{j-q} M'_{p+q,i+j-p-q}]^{\top}_{0 \leq i \leq k; 0 \leq j \leq k-1}$
    \label{line:dim2-G'}
    \STATE $\hvecc \leftarrow \arg \min_{\vecx \in \C^k} \| A_{G'} \vecx + b_{G'} \|_2^2 + \noise^2 \|\vecx\|_2^2$ 
    \label{line:dim2-hc}
    \hfill{$\Rightarrow\hvecc = [\hc_0, \cdots, \hc_{k-1}]^{\top} \in \C^k$}
    \STATE $\hvecb \leftarrow \textrm{roots}(\sum_{i=0}^{k-1} \hc_i x^i + x^k)$ 
    \label{line:dim2-hb}
    \hfill{$\Rightarrow\hvecb = [\hb_1, \cdots, \hb_k]^{\top} \in \C^k$}
    \STATE $\bvecb \leftarrow \textrm{project}_{\Delta_{1}^\C}(\hvecb)$
    \label{line:dim2-bb}
    \hfill{$\Rightarrow\bvecb = [\bb_1, \cdots, \bb_{k}]^{\top} \in \Delta_{\C}^k$}
    \STATE $\tvecb \leftarrow \bvecb + \textrm{Noise}(\noise)$
    \label{line:dim2-tb}
    \hfill{$\Rightarrow\tvecb = [\tb_1, \cdots, \tb_{k}]^{\top} \in \Delta_{\C}^k$}
    \STATE $\hvecw \leftarrow \arg \min_{\vecx \in \C^k} \|V_{\tvecb} \vecx - M_{G'}\|_2^2$ 
    \label{line:dim2-hw}
    \hfill{$\Rightarrow\hvecw = [\hw_1, \cdots, \hw_{k}]^{\top}  \in \C^k$}
    \STATE $\tvecw \leftarrow \hvecw / (\sum_{i=1}^{k} \hw_i)$
    \label{line:dim2-tw}
    \hfill{$\Rightarrow\tvecw = [\tw_1, \cdots, \tw_{k}]^{\top}  \in \Sigma_{k-1}^{\C}$}
    \STATE $\tpmix \leftarrow (\tvecb, \tvecw)$ 
    \label{line:dim2-tpmix}
    \hfill{$\Rightarrow\tpmix \in \Spike(\Delta_{\C}, \Sigma_{k-1}^{\C})$}
    \STATE $\tmix \leftarrow ([\textrm{real}(\tvecb), \textrm{imag}(\tvecb)], \textrm{real}(\tvecw))$
    \label{line:dim2-tmix}
    \hfill{$\Rightarrow\tmix \in \Spike(\Delta_2, \Sigma_{k-1})$}
    \STATE $\cvecw \leftarrow \arg \min_{\vecx \in \Delta_{k-1}} \tran(\tmix, (\tveca, \vecx))$
    \label{line:dim2-cw}
    \STATE $\cmix \leftarrow (\tveca, \cvecw)$ 
    \label{line:dim2-cmix}
    \hfill{$\Rightarrow\cmix \in \Spike(\Delta_2, \Delta_{k-1})$}
\end{algorithmic}
\end{algorithm}

The pseudocode can be found in Algorithm~\ref{alg:dim2}. The algorithm takes the number of spikes $k$ and the error bound $\noise$ to reconstruct the original spike distribution using empirical moments $M'$.
Now, we describe the implementation details of our algorithm. 

We first calculate the empirical complete moments $G'$ of $\pmix$ in Line~\ref{line:dim2-G'}. Since the process follows relationship between $G(\pmix)$ and $M(\mix)$ (see Lemma~\ref{lm:dim2-complex-corr}), $G'$ would be an estimation of ground truth $G(\pmix)$. Since it is a two-dimensional convolution, this step can be implemented in $O(k^3)$ arithmetic operations.

Then, we perform a ridge regression to obtain $\vecc$ in Line~\ref{line:dim2-hc}. We note that $A_{G(\pmix)}\vecc + b_{G(\pmix)} = \mathbf{0}$. (see Lemma~\ref{lm:dim2-char-poly}). Hence, $\hvecc$ can be seen as an approximation of $\vecc$. The explicit solution of this ridge regression is $\hvecc = (A^{\top}_{G'} A_{G'} + \noise^2 I)^{-1} A^{\top}_{G'} b_{G'}$ which can be computed in $O(k^3)$ time.

From Line~\ref{line:dim2-hb} to Line\ref{line:dim2-tb}, we aim to estimate the positions of the spikes of complex correspondence, i.e., $\beta_i$ s. Similar to the 1-dimensional case, we solve the roots of polynomial $\sum_{i=0}^{k-1} \hc_i x^i + x^k$. Note that the roots we found may locate outside $\Delta_{\C}$, which is the support of the ground truth. Thus, we use the description of $\Delta_2$ to project the solutions back to $\Delta_{\C}$ and inject small noise to ensure that all values are distinct and $\tvecb$ are still in $\Delta_{\C}$. Any noise of size at most $\noise$ suffices here. We note that from the definition of complex correspondence, the realized spikes in the original space are $[\textrm{real}(\tvecb), \textrm{imag}(\tvecb)]$. For implementation, this step can be done using the numerical root-finding algorithm allowing $\noise$ additive noise in $O(k^{1+o(1)} \cdot \log \log (1/\noise))$ arithmetic operations.

After that, we aim to recover the weight of the spikes. Line~\ref{line:dim2-hw} is a linear regression defined by Vandemonde matrix $V_{\vecb}$. Since $\vecb$ may be complex numbers, we would calculate over complex space. Again, we would apply moment inequality to bound the recovered parameter error. Hence, we normalize $\hvecw$ and get $\tvecw$ in Line~\ref{line:dim2-tmix}. Note from our definition of transportation distance, the real components and imaginary components are considered separately. Hence, we can discard the imaginary parts of $\tvecw$ and reconstruct the $k$-spike distribution in Line~\ref{line:dim2-tw}. Using linear regression, this step can be done in $O(k^3)$ time.

The remaining thing is to deal with the negative weights of $\tvecw$. We find a close $k$-spike distribution in $\Spike(\Delta_2, \simplex_{k-1})$ in Line~\ref{line:dim2-tw}. The optimization problem is equivalent to finding a transportation from $(\tveca, \tvecw^{-})$ to $(\tveca, \tvecw^{+})$ where $\tvecw^{-}$ and $\tvecw^{+}$ are the negative components and positive components of $\tvecw$ respectively, i.e., $\tw^{-}_i = \max\{0, -\tw_i\}$ and $\tw^{+}_i = \max\{0, \tw_i\}$. This transportation can be found using the standard network flow technique, which takes $O(k^3)$ time.

Since the noise satisfies $\noise \leq 2^{-\Omega(k)}$, the whole algorithm requires $O(k^3)$ arithmetic operations.

\subsection{Error Analysis}

From now, we bound the reconstruction error of the algorithm. The following lemma presents the relationship between the moments of the original $k$-spike distribution and its complex correspondence.

\begin{lemma}\label{lm:dim2-complex-corr}
Let $\mix = (\veca, \vecw) \in \Spike(\Delta_2, \Delta_{k-1})$ and let $\pmix = (\vecb, \vecw) \in \Spike(\Delta_{\C}, \Delta_{k-1})$ be its complex correspondence. Then, the complete moment satisfies
\begin{align*}
    G_{i,j} (\pmix) &= \sum_{p=0}^{i}\sum_{q=0}^{j} \binom{i}{p}\binom{j}{q} (-\imag)^{i-p} \imag^{j-q} M_{p+q,i+j-p-q}(\mix).
\end{align*}
\end{lemma}

\begin{proof}
    According to the definition,
    \begin{align*}
        G_{i,j} (\pmix) &= \sum_{t=1}^{k} w_t (\beta_t^{\dagger})^i \beta_t^j
            = \sum_{t=1}^k w_t (\alpha_{t,1} - \alpha_{t,2} \imag)^i (\alpha_{t,1} + \alpha_{t,2} \imag)^j \\
            &= \sum_{t=1}^k w_t \sum_{p=0}^i \binom{i}{p} \alpha_{t,1}^p \alpha_{t,2}^{i-p} (-\imag)^{i-p} \sum_{q=0}^j \binom{j}{q} \alpha_{t,1}^q \alpha_{t,2}^{j-q} \imag^{j-q} \\
            &= \sum_{p=0}^{i}\sum_{q=0}^{j} \binom{i}{p}\binom{j}{q} (-\imag)^{i-p} \imag^{j-q} M_{p+q,i+j-p-q}(\mix).
    \end{align*}
\end{proof}

The following lemma is a complex extension of Lemma~\ref{lm:dim1-char-poly}.

\begin{lemma}\label{lm:dim2-char-poly}
    Let $\pmix = (\vecb, \vecw) \in \Spike(\Delta_{\C}, \simplex_{k-1})$ where $\vecb = [\beta_1, \cdots, \beta_k]^{\top}$ and $\vecw = [w_1, \cdots, w_k]^{\top}$. 
    Let $\vecc = [c_0, \cdots, c_{k-1}]^{\top} \in \C^k$ such that $\prod_{i=1}^{k} (x - \alpha_i) = \sum_{i=0}^{k-1} c_i x^i + x^k$. 
    Then,
    $$\sum_{j=0}^{k-1} G_{i,j}(\mix) c_j + G_{i,k}(\mix) = \sum_{j=0}^{k-1} G_{j,i}(\mix) c_j^{\dagger} + G_{k,i}(\mix) = 0,$$
    for all $i \geq 0$.
    In matrix form, the equation can be written as:
    $$A_{G(\mix)} \vecc + b_{G(\mix)} = \mathbf{0}.$$
\end{lemma}
\begin{proof}
    According to the definition of $G$, it holds that
    \begin{align*}
        \sum_{j=0}^{k-1} G_{i,j}(\pmix) c_j + G_{k,i}(\pmix)
	 	&= \sum_{j=0}^{k-1} \sum_{t=1}^k w_t (\beta_t^\dagger)^i\beta_t^{j} c_j + \sum_{t=1}^k w_t (\beta_t^\dagger)^i \beta_t^{k}  \\
		&= \sum_{t=1}^k w_t (\beta_t^\dagger)^i \left(\sum_{j=0}^{k-1} c_j \beta_t^j + \beta_t^k\right) \\
		&= \sum_{t=1}^k w_t (\beta_t^\dagger)^i \prod_{j=1}^{k} (\beta_t - \beta_j) = 0,
    \end{align*}
    \begin{align*}
        \sum_{j=0}^{k-1} G_{j,i}(\pmix) c_j^{\dagger} + G_{k,i}(\pmix) &= \sum_{j=0}^{k-1} \sum_{t=1}^k w_t (\beta_t^\dagger)^j \beta_t^{i} c_j^{\dagger} + \sum_{t=1}^k w_t (\beta_t^\dagger)^k \beta_t^{i}  \\
        &= \left(\sum_{j=0}^{k-1} \sum_{t=1}^k w_t (\beta_t^\dagger)^i \beta_t^{j} c_j + \sum_{t=1}^k w_t (\beta_t^\dagger)^i \beta_t^{k}\right)^{\dagger} = 0.
    \end{align*}
\end{proof}

Similar to Lemma~\ref{lm:dim1-step1}, we present some useful properties of $\hvecc$.

\begin{lemma} \label{lm:dim2-step1}
Let $\hvecc = [\hc_0, \cdots, \hc_{k-1}]^{\top} \in \R^k$ be the intermediate result (Line~\ref{line:dim2-hc}) in the algorithm. Then, $\|G' - G(\pmix)\|_{\infty} \leq 2^{2k} \cdot \noise$, $\|\vecc\|_1 \leq 2^k$, $\|\hvecc\|_1 \leq 2^{O(k)}$ and $\|A_{G(\pmix)} \hvecc + b_{G(\pmix)}\|_\infty \leq 2^{O(k)} \cdot \noise$.
\end{lemma}

\begin{proof}
    From Vieta's formulas, we have 
    $c_i = \sum_{S \in \binom{[k]}{k-i}} \prod_{j\in S} (-\alpha_j).$
    Thus,
    $$\|\vecc\|_1 = \sum_{i=0}^{k-1} |c_i| \leq \sum_{S \in 2^{[k]}} \prod_{j\in S} |\beta_j| = \prod_{i=1}^{k} (1+|\beta_i|) \leq 2^k.$$
    where the last inequality holds because $|\beta_i| = \sqrt{\alpha_{i,1}^2 + \alpha_{i,2}^2} \leq 1$ for all $i$.
    
    According to Lemma~\ref{lm:dim2-complex-corr},
    \begin{align*}
        |G'_{i,j} - G_{i,j}(\pmix)| &\leq \sum_{p=0}^{i}\sum_{q=0}^{j} \left|\binom{i}{p}\binom{j}{q} (-\imag)^{i-p} \imag^{j-q} \right| \|M' - M(\mix)\|_{\infty} \leq 2^{i+j} \cdot \noise.
    \end{align*}
    This shows that $\|G' - G(\pmix)\|_{\infty} \leq 2^{2k} \cdot \noise$.
    
    From Lemma~\ref{lm:dim1-char-poly}, we can see that $\|A_{M(\mix)} \vecc + b_{M(\mix)}\|_{\infty} = 0$.
    Therefore, 
    \begin{align*}
		\|A_{G'} \vecc + b_{G'}\|_\infty &\leq \|A_{G(\pmix)} \vecc + b_{G(\pmix)}\|_{\infty} + \|A_{G'} \vecc - A_{G(\pmix)} \vecc\|_{\infty} + \|b_{G'} - b_{G(\pmix)}\|_{\infty} \\
		&\leq \|A_{G(\pmix)} \vecc + b_{G(\pmix)}\|_{\infty} + \|(A_{G'} - A_{G(\pmix)}) \vecc\|_{\infty} + \|b_{G'} - b_{G(\pmix)}\|_{\infty} \\
		&\leq \|A_{G(\pmix)} \vecc + b_{G(\pmix)}\|_{\infty} + \|A_{G'} - A_{G(\pmix)}\|_{\infty} \|\vecc\|_1 + \|b_{G'} - b_{G(\pmix)}\|_{\infty} \\
		&\leq \|A_{G(\pmix)} \vecc + b_{G(\pmix)}\|_{\infty} +  \|G' - G(\pmix)\|_{\infty} (\|\vecc\|_1 + 1) \\
		&\leq 2^{O(k)} \cdot \noise.
	\end{align*}
	The fourth inequality holds since $(A_{G'} - A_{G(\pmix)})_{i,j}=(G'-G(\pmix))_{i,j}$ and $(b_{G'}-b_{G(\pmix)})_i = (G'-G(\pmix))_{i,k}$.
	
    From the definition of $\hvecc$, we can see that
	\begin{align*}
		\|A_{G'} \hvecc + b_{G'}\|_2^2 + \noise^2 \|\hvecc\|_2^2 
		&\leq \|A_{G'} \vecc + b_{G'}\|_2^2 + \noise^2 \|\vecc\|_2^2 \\
		&\leq k \|A_{G'} \vecc + b_{G'}\|_\infty^2 + \noise^2 \|\vecc\|_1^2 \\
		&\leq 2^{O(k)} \cdot \noise^2.
	\end{align*}
	The second inequality holds since $\|\vecx\|_2 \leq \|\vecx\|_1$ and $\|\vecx\|_2 \leq \sqrt{k} \|\vecx\|_{\infty}$ holds for any vector $\vecx \in \R^k$.
	
	Now, we get that
	\begin{align*}
		\|A_{G'} \hvecc + b_{G'}\|_\infty &\leq \|A_{G'} \hvecc + b_{G'}\|_2 \leq 2^{O(k)} \cdot \noise, \\
		\|\hvecc\|_1 &\leq \sqrt{k} \|\hvecc\|_2 \leq 2^{O(k)}.
	\end{align*}
    Finally, we can bound $\|A_{G(\pmix)} \hvecc + b_{G(\pmix)}\|_\infty$ according to
	\begin{align*}
	    \|A_{G(\pmix)} \hvecc + b_{G(\pmix)}\|_\infty &\leq \|A_{G'} \hvecc + b_{G'}\|_\infty + \|A_{G'} \hvecc - A_{G(\pmix)} \hvecc\|_{\infty} + \|b_{G'} - b_{G(\pmix)}\|_{\infty} \\
	    &\leq \|A_{G'} \hvecc + b_{G'}\|_\infty + \|A_{G'} - A_{G(\pmix)}\|_{\infty} \| \hvecc\|_1 + \|b_{G'} - b_{G(\pmix)}\|_{\infty} \\
        &\leq \|A_{G'} \hvecc + b_{G'}\|_\infty + \|G' - G(\pmix)\|_{\infty} (\|\hvecc\|_1 + 1) \\
        &\leq 2^{O(k)} \cdot \noise,
	\end{align*}
    which finishes the proof.
\end{proof}

We then show $\tvecw$ obtained in Line~\ref{line:dim2-tw} is close to
the ground truth $\vecb$.

\begin{lemma} \label{lm:dim2-step2-1}
    Let $\hvecb = [\hb_1, \cdots, \hb_{k}]^{\top}\in \C^k$  be the intermediate result (Line~\ref{line:dim2-hb}) in Algorithm~\ref{alg:dim2}. Then, 
    \begin{align*}
        \sum_{i=1}^{k} w_i \prod_{j=1}^{k} |\beta_i - \hb_j|^2 \leq  2^{O(k)} \cdot \noise.
    \end{align*}
\end{lemma}

\begin{proof}
    Consider
    \begin{align*}
        &\quad\ \sum_{i=1}^{k} w_i \prod_{j=1}^{k} |\beta_i - \widehat{\beta}_j|^2
        = \sum_{i=1}^{k} w_i \left|\prod_{j=1}^{k} (\beta_i - \widehat{\beta}_j)\right|^2
        = \sum_{i=1}^{k} w_i \left|\sum_{j=0}^{k-1} \hc_j \beta_i^t + \beta_i^k\right|^2 \\
        &= \sum_{i=1}^{k} w_i \left(\sum_{j=0}^{k-1} \hc_j \beta_i^t + \beta_i^k\right)^{\dagger}\left(\sum_{j=0}^{k-1} \hc_j \beta_i^t + \beta_i^k\right)  \\
        &= \sum_{i=1}^{k} w_i \left(\sum_{p=0}^{k-1}\sum_{q=0}^{k-1}\hc_p^{\dagger} \hc_q (\beta_i^{\dagger})^p \beta_i^q + \sum_{p=0}^{k-1} \hc_p^{\dagger} (\beta_i^{\dagger})^p \beta_i^{k} + \sum_{p=0}^{k-1} \hc_p \beta_i^{p} (\beta_i^{\dagger})^k + (\beta_i^{\dagger})^k \beta_i^{k} \right) \\
        &= \sum_{p=0}^{k-1}\sum_{q=0}^{k-1}\hc_p^{\dagger} \hc_q \sum_{i=1}^{k}w_i (\beta_i^{\dagger})^p \beta_i^q + \sum_{p=0}^{k-1}\hc_p^{\dagger} \sum_{i=1}^{k} w_i (\beta_i^{\dagger})^p \beta_i^{k} + \sum_{p=0}^{k-1}\hc_p \sum_{i=1}^{k} w_i \beta_i^{p} (\beta_i^{\dagger})^k + \sum_{i=1}^k w_i (\beta_i^{\dagger})^k \beta_i^{k} \\
        &= \sum_{p=0}^{k-1} \hc_p^{\dagger} \left(\sum_{q=0}^{k-1} G_{p,q}(\mix) \hc_q + G_{p,k}(\mix)\right) + \sum_{p=0}^{k-1} \hc_p G_{k,p}(\mix) + G_{k,k}(\mix) \\
        &= \sum_{p=0}^{k-1} \hc_p^{\dagger} (A_{G(\mix)} \hc + b_{G(\mix)})_p + \sum_{p=0}^{k-1} \hc_p G_{k,p}(\mix) + G_{k,k}(\mix),
    \end{align*}
    where the last equality holds from the definition of matrix $A_{G(\mix)}$ and vector $b_{G(\mix)}$.
    According to Lemma~\ref{lm:dim2-char-poly}, $G_{k,p}(\mix) = - \sum_{q=0}^{k-1} G_{q,p}(\mix) c_q^{\dagger}$, so
    \begin{align*}
        \sum_{p=0}^{k-1} \hc_p G_{k,p}(\mix) + G_{k,k}(\mix) &= \sum_{p=0}^{k-1} \hc_p \left(- \sum_{q=0}^{k-1} G_{q,p}(\mix) c_q^{\dagger}\right) + \left(- \sum_{q=0}^{k-1} G_{q,k}(\mix) c_q^{\dagger} \right) \\
        &= -\sum_{q=0}^{k-1} c_q^{\dagger} \left(\sum_{p=0}^{k-1} G_{q,p}(\mix) \hc_p + G_{k,q}(\mix)\right) \\
        &= -\sum_{q=0}^{k-1} c_q^{\dagger} (A_{G(\mix)} \hc + b_{G(\mix)})_q.
    \end{align*}
    Therefore,
    \begin{align*}
         \sum_{i=1}^{k} w_i \prod_{j=1}^{k} |\beta_i - \widehat{\beta}_j|^2 &= \sum_{i=0}^{k-1} (\hc_i^{\dagger} - c_i^{\dagger}) (A_{G(\mix)} \hc + b_{G(\mix)})_i \\
         &\leq (\|\hc\|_1 + \|c\|_1) \|A_{G(\mix)} \hc + b_{G(\mix)}\|_{\infty} \\
         &\leq (2^k + 2^{O(k)}) \cdot 2^{O(k)}\cdot \noise  & (\text{Lemma~\ref{lm:dim2-step1}})\\
         &\leq 2^{O(k)} \cdot \noise.
    \end{align*}
\end{proof}

Similar to Lemma~\ref{lm:dim1-step2}, the following lemma shows the error still can be bounded after projection and injecting noise.

\begin{lemma} \label{lm:dim2-step2}
    Let $\tvecb = [\tb_1, \cdots, \tb_{k}]^{\top}\in \C^k$ be the intermediate result (Line~\ref{line:dim2-tb}) in Algorithm~\ref{alg:dim2}. Then,
    \begin{align*}
        \sum_{i=1}^{k} w_i \prod_{j=1}^{k} |\beta_i - \tb_j|^2 \leq 2^{O(k)} \cdot \noise.
    \end{align*}
\end{lemma}

\begin{proof}
    Let $\bvecb = [\bb_1, \cdots, \bb_{k}]^{\top} \in \R^k$ be the set of projections (Line~\ref{line:dim2-bb}). Since $\Delta_2$ is convex, $\Delta_{\C}$ is also convex. From $\beta_i \in \Delta_{\C}$, $|\beta_i - \bb_j| \leq |\beta_i - \hb_j|$ holds. Thus, 
    \begin{align*}
        \prod_{j=1}^{k} |\beta_i - \bb_j|^2 \leq \prod_{j=1}^{k} |\beta_i - \hb_j|^2.
    \end{align*}
    Recall that $\tb_j$ is obtained from $\bb_j$ by adding of size noise no more than $\noise$. We have $|\beta_i - \tb_j| \leq |\beta_i - \bb_j| + \noise$.
    Apply Lemma~\ref{lm:injected-noise} by regarding $|\beta_i - \tb_j|$ as $a_j$ and $|\beta_i - \hb_j|$ as $a_j'$. From $|\beta_i - \tb_j|\leq |\beta_i| + |\tb_j| \leq 2$, we can conclude that
    \begin{align*}
        \prod_{j=1}^{k} |\beta_i - \tb_j|^2 \leq \prod_{j=1}^{k} |\beta_i - \bb_j|^2 + O(2^k \cdot k\noise).
    \end{align*}
    Combining two inequalities, 
    \begin{align*}
       \sum_{i=1}^{k} w_i \prod_{j=1}^{k} |\beta_i - \tb_j|^2 &\leq \sum_{i=1}^{k} w_i \prod_{j=1}^{k} |\beta_i - \hb_j| + 2k \cdot \noise \leq 2^{O(k)} \cdot \noise.
    \end{align*}
\end{proof}

Similar to Lemma~\ref{lm:dim1-step4-pre}, we can bound the error of approximating ground truth over recovered spikes.

\begin{lemma}\label{lm:dim2-step4-pre}
    Let $\hvecw = [\hw_1, \cdots, \hw_{k}]^{\top} \in \R^k$ be the intermediate result (Line~\ref{line:dim2-hw}) in Algorithm~\ref{alg:dim2}. Then,
    $$\|V_{\tvecb}\hvecw - M(\pmix)\|_{\infty} \leq 2^{O(k)} \cdot \noise^{\frac{1}{2}}.$$
\end{lemma}

\begin{proof}
    Firstly, 
    \begin{align*}
        \|V_{\tvecb}\hvecw - M(\pmix)\|_2 &\leq \|V_{\tvecb}\hvecw - M_{G'}\|_2 + \|M(\pmix) - M_{G'}\|_2 \\
        &\leq \min_{\vecx \in \C^k} \|V_{\tvecb} \vecx - M_{G'}\|_2 + \|M(\pmix) - M_{G'}\|_2 \\ 
        &\leq \min_{\vecx \in \C^k} \|V_{\tvecb} \vecx - M(\pmix)\|_2 + \|M(\pmix) - M_{G'}\|_2
    \end{align*}
    where the first and third inequalities hold due to triangle inequality, and the second inequality holds since $\|V_{\tvecb} \hvecw - M_{G'}\|_2^2 \leq \min_{\vecx \in \R^k} \|V_{\tvecb} \vecx - M_{G'}\|_2^2$.
    
    For the first term, we can see that
    \begin{align*}
        \min_{\vecx \in \C^k} \|V_{\tvecb} \vecx - M(\pmix)\|_2 &= \min_{\vecx_1, \cdots, \vecx_k \in \C^k} \left\|\sum_{i=1}^{k}w_i V_{\tvecb}\vecx_i - \sum_{i=1}^{k}w_i M(\beta_i)\right\|_2 \\
        &\leq \sum_{i=1}^{k} w_i \min_{\vecx \in \C^k} \|V_{\tvecb}\vecx - M(\beta_i)\|_2 \\
        &\leq 2^{O(k)} \sum_{i=1}^{k} w_i \prod_{j=1}^{k} |\beta_i - \tb_j| & (\text{Lemma~\ref{lm:step3}}) \\
        &\leq 2^{O(k)} \sqrt{\sum_{i=1}^{k} w_i \prod_{j=1}^{k} |\beta_i - \tb_j|^2} & (\text{AM-QM Inequality}, \vecw \in \Delta_{k-1}) \\
        &\leq 2^{O(k)} \cdot \sqrt{\noise}. & (\text{Lemma~\ref{lm:dim2-step2}}).
    \end{align*}
    For the second term, 
    \begin{align*}
    \|M(\pmix) - M_{G'}\|_2 &\leq \sqrt{k} \|M(\pmix) - M_{G'}\|_{\infty} \\
        &\leq \sqrt{k} \|G' - G(\pmix)\|_{\infty} \\
        &\leq 2^{O(k)} \cdot \noise. & (\text{Lemma~\ref{lm:dim2-step1}})
    \end{align*}
    As a result, $$\|V_{\tvecb}\hvecw - M(\pmix)\|_{\infty} \leq \|V_{\tvecb}\hvecw - M(\pmix)\|_2 \leq 2^{O(k)} \cdot \sqrt{\noise} + 2^{O(k)} \cdot \noise \leq 2^{O(k)} \cdot \noise^{\frac{1}{2}}.$$
\end{proof}

Similar to Lemma~\ref{lm:dim1-step4}, we can prove that $\tpmix$ is also a good estimation of $\pmix$.

\begin{lemma}\label{lm:dim2-step4} 
    Let $\tpmix = (\tvecb, \tvecw)$ be the intermediate result (Line~\ref{line:dim2-tpmix}) in Algorithm~\ref{alg:dim2}. Then,
    $$\Mdis_K(\tpmix, \pmix) \leq 2^{O(k)} \cdot \noise^{\frac{1}{2}}.$$
\end{lemma}

\begin{proof} 
    According to the definition of $V_{\tvecb}$ and $M(\pmix)$, we have $(V_{\tvecb}\hvecw - M(\pmix))_1 = \sum_{i=1}^{k} \hw_i - 1$.
    Therefore, $|\sum_{i=1}^{k} \hw_i - 1| \leq |(V_{\tvecb}\hvecw - M(\pmix))_1| \leq \|V_{\tvecb}\hvecw - M(\pmix)\|_{\infty} \leq 2^{O(k)} \cdot \noise^{\frac{1}{2}}.$ 
    
    Note that $\|\tvecw - \hvecw\|_{1} = \|\tvecw\|_1 - \|\hvecw\|_{1} = |(\sum_{i=1}^{k} \hw_i)^{-1} - 1|$. For $2^{O(k)} \cdot \noise^{\frac{1}{2}} \leq 1/2$, we can conclude that $\|\tvecw - \hvecw\|_{1} = |(\sum_{i=1}^{k} \hw_i)^{-1} - 1| \leq 2 |\sum_{i=1}^{k} \hw_i - 1| \leq  2^{O(k)} \cdot \noise^{\frac{1}{2}}.$
    
    Thus,
    \begin{align*}
        \Mdis_K(\tpmix, \pmix)&=\|M(\tpmix) - M(\pmix)\|_\infty = \|V_{\tvecb} \tvecw - M(\pmix)\|_{\infty} \\
        &\leq  \|V_{\tvecb} \hvecw - M(\pmix)\|_{\infty} + \|V_{\tvecb}\|_\infty \|\tvecw - \hvecw\|_1 \\
        &\leq 2^{O(k)} \cdot \noise^{\frac{1}{2}}.
    \end{align*}
    where the first inequality holds because of the triangle inequality.
\end{proof}

Now everything is in place to show the final bound.

\begin{lemma}\label{lm:dim2-reconstruction} 
Let $\cmix = (\tveca, \cvecw)$ be the final result 
(Line~\ref{line:dim2-cmix}) in Algorithm~\ref{alg:dim2}. 
Then, we have 
$$\tran(\cmix, \mix) \leq O(k\noise^{\frac{1}{4k-2}}).$$
\end{lemma}

\begin{proof}
    Apply Lemma~\ref{lm:dim2-step4} to the result of Lemma~\ref{thm:complex-moment-inequality}. We have
    $$\tran(\tpmix, \pmix) \leq O(k \cdot (2^{O(k)} \cdot \noise^{\frac{1}{2}})^{\frac{1}{2k-1}}) \leq O(k \noise^{\frac{1}{4k-2}}).$$
    
    Denote $\tpmix' = (\tvecb, \textrm{real}(\tvecw))$ as the result that discards all imaginary components from $\tpmix$. From the definition of transportation distance for complex weights, the imaginary components is independent from the real components. Moreover, $\pmix$ has no imaginary components. Thus, discarding all imaginary components from $\tpmix$ reduces the transportation distance to $\pmix$. That is, 
    \begin{align*}
        \tran(\tpmix', \pmix) \leq \tran(\tpmix, \pmix).
    \end{align*}
    Moreover, from any $\beta_i, \beta_j \in \C$, we have 
    \begin{align*}
        \quad \|[\textrm{real}(\beta_i), \textrm{imag}(\beta_i)] - [\textrm{real}(\beta_j), \textrm{imag}(\beta_j)]\|_1 
        &\leq 2 \|[\textrm{real}(\beta_i), \textrm{imag}(\beta_i)] - [\textrm{real}(\beta_j), \textrm{imag}(\beta_j)]\|_2 \\
        &= 2|\beta_i - \beta_j|.
    \end{align*}
    This shows the distance over $\C$ is smaller than $2$ times of the distance over $\R^2$. As a result, 
    \begin{align*}
        \tran(\tmix, \mix) \leq 2\tran(\tmix, \mix) \leq O(k \noise^{\frac{1}{4k-2}}).
    \end{align*}
    Moreover, from Lemma~\ref{lm:step5}, we have 
    \begin{align*}
        \tran(\tmix, \cmix) \leq 2 \min_{\bmix \in \Spike(\Delta_{2}, \Delta_{k-1})} \tran(\tmix, \bmix) \leq 2 \tran(\tmix, \mix) = O(k \noise^{\frac{1}{4k-2}}).
    \end{align*}
    Finally, from triangle inequality, $\tran(\cmix, \mix) \leq \tran(\cmix, \tmix) + \tran(\tmix, \mix) \leq O(k \noise^{\frac{1}{4k-2}}).$
\end{proof}

\section{An Algorithm for Higher-dimensional Problem}
\label{sec:high-dim-recover-full}

We assume the moment information is revealed by a noisy moment oracle. A moment oracle corresponding to $k$-spike distribution $\mix$ is defined to be a function $M'(\cdot)$ that takes a matrix (or a vector) $R$ with $\|R\|_{\infty}\leq 1$ and generates a noisy vector $M'(R)$ such that $\|M'(R) - M(\proj{R}(\mix))\|_{\infty} \leq \noise$ and $M'_{\mathbf{0}}(R) = M_{\mathbf{0}}(\proj{R}(\mix)) = 1$ where $\noise \leq 2^{-\Omega(k)}$ is the noise level and $M$ is the moment vector defined in \eqref{eq:highdimmomvector}.

Let $e_t\in \R^d$ be the unit vector that takes value $1$ in only the $t$-th coordinate and $0$ in the remaining coordinates. Denote $\mathbf{1_d} = (1, \cdots, 1) \in \R^d$. Let $\mathbb{S}_{d-1}$ be the unit sphere in $\mathbb{R}^d$.

\begin{algorithm}[!ht]
\caption{Reconstruction Algorithm in High Dimension} \label{alg:dim3}
\begin{algorithmic}[1]
    \STATE \textbf{input:} number of spikes $k$,  noisy moments $M'(\cdot)$, noise level $\noise$ \\
    \textbf{output:} recovered spike distribution $\cmix$
    \STATE $\vecr \leftarrow \mathbb{S}_{d-1}$
    \label{line:dim3-r}
    \STATE $\tpmix'\leftarrow \textrm{OneDimension}(k, M'(\frac{\vecr+\mathbf{1}}{4}), \noise)$ 
    \label{line:dim3-tpmix'}
    \hfill{$\Rightarrow\tpmix' = (\tvecy', \tvecw') \in \Spike(\simplex_1, \simplex_{k-1})$}
    \STATE $\tpmix \leftarrow([4\tvecy'-\mathbf{1}_d], \tvecw')$
    \label{line:dim3-tpmix}
    \hfill{$\Rightarrow\tpmix = (\tvecy, \tvecw)$}
    \FOR {$t\in[1, d]$}
    \STATE $\hpmix_t' \leftarrow \textrm{TwoDimension}(k, M'([\frac{\vecr+\mathbf{1}}{4}, \frac{e_t}{2}]), \noise)$ 
    \label{line:dim3-hpmix'}
    \hfill{$\Rightarrow\hpmix_t' = ([\hvecy_{:,t}', \hveca_{:,t}'], \hvecw_{:,t}')\in \Spike(\simplex_2, \simplex_{k-1})$}
    \STATE $\hpmix_t \leftarrow ([4\vecy_{:,t}'-\mathbf{1}_d, 2\hveca_{:,t}'], \hvecw_{:,t}')$
    \label{line:dim3-hpmix}
    \hfill{$\Rightarrow\hpmix_t = ([\hvecy_{:,t}, \hveca_{:,t}], \hvecw_{:,t})$}
    \label{line:dim3-ta-for-j}
    \FOR {$j \in [1,k]$}
    \STATE $s^*_{j,t} \leftarrow \arg \min_{s : \hw_{s,t} > \sqrt{\epsilon}} |\ty_j - \hy_{s,t}|$
    \label{line:dim3-s}
    \STATE $\ta_{j,t} \leftarrow \ha_{s_{j,t}^*,t}$
    \label{line:dim3-ta-jt}
    \ENDFOR
    \STATE $\tveca_{:,t} \leftarrow [\ta_{1,t}, \cdots, \ta_{k,t}]^{\top}$ 
    \label{line:dim3-ta-.t}
    \hfill {$\Rightarrow\tveca \in \R^{k}$}
    \ENDFOR
    \label{line:dim3-ta-endfor-j}
    \STATE $\tveca \leftarrow [\tveca_{:,1}, \cdots, \tveca_{:,d}]^{\top}$
    \label{line:dim3-ta-..}
    \hfill{$\Rightarrow\tveca = [\tveca_{1}, \cdots, \tveca_{k}]\in \mathbb{R}^{d\times k}$}
    \STATE $\tmix \leftarrow (\tveca, \tvecw)$ 
    \label{line:dim3-tmix}
    \hfill{$\Rightarrow\tmix \in \Spike(\R^d, \simplex_{k-1})$}
    \STATE $\cveca \leftarrow \textrm{project}_{\simplex_{d-1}}(\tveca)$
    \label{line:dim3-ca}
    \hfill{$\Rightarrow\cveca = [\cveca_{1}, \cdots, \cveca_{k}]\in \Delta_{d-1}^{k}$}
    \STATE $\cmix \leftarrow (\cveca, \tvecw)$
    \label{line:dim3-cmix}
    \hfill $\Rightarrow\cmix \in \Spike(\simplex_{d-1}, \simplex_{k-1})$
\end{algorithmic}
\end{algorithm}

The pseudocode can be found in Algorithm~\ref{alg:dim3}. The algorithm's input consists of the number of spikes $k$, the moment error bound $\noise$, and the noisy moment oracle $M'(\cdot)$ mentioned above. Now, we describe the implementation details of our algorithm.

We first generate a random vector $\vecr$ in Line~\ref{line:dim3-r} by sampling from unit sphere $\mathbb{S}_{d-1} = \{\vecr \in \R^d \mid \|\vecr\|_2 = 1\}$. Note that by a probabilistic argument, the distance between spikes is roughly kept after the projection along $\vecr$ (see Lemma~\ref{lm:prob-proj}). Then, we aim to use the one-dimension algorithm to recover $\proj{\vecr}(\mix) $. 
However, the support of $\proj{\vecr}(\mix)$ is contained in $[-1,1]$ but not $[0, 1]$. 
In Line~\ref{line:dim3-tpmix'}, we apply the 1-dimension algorithm (Algorithm~\ref{alg:dim1}) 
to the noisy moments of a shifted and scaled map $\proj{(\vecr+\mathbf{1}_d)/4}(\mix)$ whose support is in $[0,1]$. Then we scale and shift back to obtain the result $\tpmix$ in Line~\ref{line:dim3-tpmix}. Intuitively, $\tpmix$ is close to $\proj{\vecr}(\mix)$.


Now, we try to recover the coordinates of the spikes.
In particular, we would like to find, for each spike of $\tpmix$, the coordinates of the corresponding location in $\simplex_{d-1}$. This is done by considering each dimension separately. For each coordinate $t\in [d]$, we run the two-dimension algorithm (i.e., Algorithm~\ref{alg:dim2}) to recovery the projection
$\proj{[\vecr, e_t]}(\mix)$, that is a linear map of $\mix$ onto the subspace spanned by vector $\vecr$ and axis $e_t$. 
However, $[\vecr, e_t]\veca_i$ is in $[-1,1]\times[0,1]$. Again, in Line~\ref{line:dim3-hpmix'}, we apply the algorithm to the noisy moments of a shifted and scaled projection $\proj{B_t}(\mix)$ where $B_t=[\frac{\vecr+\mathbf{1}_d}{4}, \frac{e_t}{2}]$, and scale the result back in Line~\ref{line:dim3-hpmix}. 
The result is denoted as $\hpmix_{t}$. 

Next, we assemble the 1d projection $\tpmix$ and 2-d projections $\hpmix_{t}, t\in [d]$.
Due to the noise, $\tpmix_{t}$ and $\tpmix$ may have different supports along $\vecr$. 
So we assign the coordinates of $\tpmix$ according to the closest spike of $\hpmix_{t}$ with relatively large weights in Line~\ref{line:dim3-ta-for-j} to Line~\ref{line:dim3-ta-.t}.
Due to the error, the support of the reconstructed distribution $\tmix$ may not locate in $\simplex_{d-1}$. 
Thus the final step in Line~\ref{line:dim3-ca} is to project each individual spike back to $\simplex_{d-1}$.

We note the bottleneck of this algorithm is Line~\ref{line:dim3-hpmix'}, where we have to solve the two-dimensional problem $d$ times. Hence, the total running time of the algorithm is $O(d k^3)$.

\subsection{Error Analysis}

First, we show the random vector $\vecr$ has a critical property that two distant  spikes are still distant after the projection with high probability.
This result is standard and similar statements are known in the literature (e.g., Lemma 37 in \citet{wu2020optimal}).

\begin{lemma}\label{lm:prob-proj}
Let $\supp=(\veca_1, \cdots, \veca_k)\subset\simplex_{d-1}$ such that $d > 3$.
For every constant $0 < \eta < 1$ and a random vector $\vecr \leftarrow \mathbb{S}_{d-1}$, with probability at least $1-\eta$, for any $\veca_i,\veca_j\in \supp$, we have that 
$$
|\vecr^{\top} (\veca_i - \veca_j)| \geq \frac{\eta}{k^2d}\|\veca_i-\veca_j\|_1.
$$
\end{lemma}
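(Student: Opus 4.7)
The plan is to apply a standard anti-concentration bound for the inner product of a uniformly random spherical direction with a fixed unit vector, and then union-bound over the $\binom{k}{2}$ pairs of spikes. The only real work is converting from an $\ell_2$ statement (which is what a spherical random vector naturally interacts with) to the $\ell_1$ statement required by the lemma.

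First, I would fix a pair $i\neq j$ and let $u_{ij}:=(\veca_i-\veca_j)/\|\veca_i-\veca_j\|_2\in\mathbb{S}_{d-1}$. By rotational invariance of the uniform measure on $\mathbb{S}_{d-1}$, the scalar $\vecr^{\top}u_{ij}$ is distributed like the first coordinate of a uniform point on $\mathbb{S}_{d-1}$, whose density on $[-1,1]$ is
$$
f(x)=C_d(1-x^2)^{(d-3)/2},\qquad C_d=\frac{\Gamma(d/2)}{\sqrt{\pi}\,\Gamma((d-1)/2)}.
$$
For $d\geq 4$ one has $C_d=O(\sqrt{d})$ (Stirling), so for any $\delta>0$,
$$
\Pr\bigl[|\vecr^{\top}u_{ij}|\leq\delta\bigr]\ \leq\ 2\delta\cdot C_d\ =\ O(\delta\sqrt{d}).
$$

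Second, I would pass from $\ell_2$ to $\ell_1$ via Cauchy--Schwarz: since $\|\veca_i-\veca_j\|_1\leq\sqrt{d}\,\|\veca_i-\veca_j\|_2$,
$$
|\vecr^{\top}(\veca_i-\veca_j)|\ =\ |\vecr^{\top}u_{ij}|\cdot\|\veca_i-\veca_j\|_2\ \geq\ \frac{|\vecr^{\top}u_{ij}|}{\sqrt{d}}\,\|\veca_i-\veca_j\|_1.
$$
Consequently $|\vecr^{\top}(\veca_i-\veca_j)|<\frac{\eta}{k^2 d}\|\veca_i-\veca_j\|_1$ forces $|\vecr^{\top}u_{ij}|<\frac{\eta}{k^2\sqrt{d}}$, an event which by the anti-concentration bound above has probability $O(\eta/k^2)$.

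Finally, a union bound over the $\binom{k}{2}\leq k^2/2$ unordered pairs gives a total failure probability of $O(\eta)$. Rescaling $\eta$ by the absolute constant hidden in the $O(\cdot)$ (or, equivalently, inserting that constant into the denominator $k^2 d$) yields the stated probability $1-\eta$. The argument has no real obstacle; the only subtlety worth highlighting is the use of Cauchy--Schwarz to absorb the mismatch between $\ell_1$ (appearing in the lemma, motivated by the $L_1$-transportation metric) and the $\ell_2$ geometry intrinsic to the sphere, which is precisely what forces the extra factor of $\sqrt{d}$ in the denominator $k^2 d$.
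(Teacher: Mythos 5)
Your argument is correct and follows essentially the same route as the paper: reduce to anti-concentration of $\vecr^{\top}u_{ij}$ by rotational invariance, bound the first-coordinate marginal on $\mathbb{S}_{d-1}$ by $O(\delta\sqrt d)$, convert $\ell_1$ to $\ell_2$ at the cost of a $\sqrt d$, and union-bound over pairs. The only cosmetic difference is that you invoke the explicit Beta density $C_d(1-x^2)^{(d-3)/2}$ and a Stirling bound for $C_d$, whereas the paper passes through the unit ball and the volume ratio $V_d/V_{d-1}\geq 2/\sqrt d$ to get the same $O(\eps\sqrt d)$ estimate.
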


\begin{proof}
Let $0 < \eps < 1$ be some constant depending on $k$ and $d$ which we fix later. Assume $\vecr = [r_1, \cdots, r_d]^{\top}$. For a fixed pair $(\veca_i, \veca_j)$, we have $$\Pr_{\vecr \sim \mathbb{S}_{d-1}} [|\vecr^{\top}(\veca_i - \veca_j)| < \eps\|\veca_i - \veca_j\|_2] = \Pr_{\vecr \sim \mathbb{S}_{d-1}}[|r_1| < \epsilon].$$

Suppose $\vecr$ is obtained by first sampling a vector from
unit ball $\mathbb{B}_d$ and then 
normalizing it to the unit sphere $\mathbb{S}_{d-1}$. 
We can see that
\begin{align*}
\Pr_{\vecr \sim \mathbb{S}_{d-1}}[|r_1| < \epsilon] \leq \Pr_{\vecr \sim \mathbb{B}_{d}}[|r_1| < \epsilon]
\end{align*}
where the RHS is equal to the probability that $\vecr$ lies in 
the slice $[-\epsilon, \epsilon]$.

Let $V_d$ be the volume of $d$-dimensional ball.
We have 
\begin{align*} 
\Pr_{\vecr \sim \mathbb{B}_{d}}[|r_1| < \epsilon] &= \frac{\int_{-\epsilon}^{\epsilon} V_{d-1} (1-x^2)^{\frac{d-1}{2}}\d x}{V_d} 
    \leq \epsilon\sqrt{d}
\end{align*}
where the last inequality holds since $V_d / V_{d-1} \geq 2/\sqrt{d}$.

By union bound over all pairs $(\veca_i, \veca_j)$, we can see that
the failure probability can be bounded by 
$$
\Pr_{\vecr \sim \mathbb{S}_{d-1}} [\exists \veca_i, \veca_j \in \supp : |\vecr^{\top}(A_i - A_j)| < \eps\|A_i - A_j\|_2] < k^2 \eps\sqrt{d}.
$$
Notice that $\|\veca\|_1 \leq \sqrt{d}\|\veca\|_2$ for any $\veca \in \R^d$, and
let $\eps = \frac{\eta}{k^2\sqrt{d}}$. We conclude that 
$$\Pr_{\vecr \sim \mathbb{S}_{d-1}} [\forall \veca_i, \veca_j \in \supp : |\vecr^{\top}(A_i - A_j)| \geq \frac{\eta}{k^2d}\|A_i - A_j\|_1] > 1-\eta.$$
\end{proof}

Next, we show that we can recover and reconstruct the higher dimension mixture $\mix$, using its 1d projection $\tpmix \sim \proj{\vecr}(\mix)$, by assigning each spike $(\ty_i, \tw_i)$ in the one dimension distribution a location in $\mathbb{R}^d$. 
In a high level, 
consider a clustering of the spikes in $\proj{\vecr}(\mix)$. The spikes in one cluster still form a cluster in the original $\mix$ by Lemma~\ref{lm:prob-proj}
since the projection approximately keeps the distance. 
Thus assigning the clusters a location in $\mathbb{R}^d$ can produce a good estimation for $\mix$.
The following lemma formally proves that we can reconstruct the original mixture using two-dimensional projections.



\begin{lemma}\label{lm:dimn-step3}
Let $\tmix \in \Spike(\R^d, \simplex_{k-1})$ be the intermediate result (Line~\ref{line:dim3-tmix}) in Algorithm~\ref{alg:dim3}. 
Let $\eps$ be the smallest number such that $\tran(\tpmix, \proj{\vecr}(\mix)) \leq \eps$ and $\tran(\hpmix_t, \proj{[\vecr, e_t]}(\mix)) \leq \eps$ for all $t$.
If Lemma~\ref{lm:prob-proj} holds for $\mix$, 
we have that 
$$\tran(\tmix, \mix) \leq O(k^3d^2\sqrt{\eps}).$$
\end{lemma}

\begin{proof}
Let $C_1, \cdots, C_m$ be a partition of $[k]$ for $\mix$ such that:
\begin{itemize}
    \item $|\vecr^{\top} \veca_i - \vecr^{\top} \veca_j| \leq k\sqrt{\eps}$ if $i \in C_l, j \in C_l$ for some $l \in [m]$.
    \item $|\vecr^{\top} \veca_i - \vecr^{\top} \veca_j| > \sqrt{\eps}$ if $i \in C_l, j \in C_{l'}$ for $l \neq l'$
\end{itemize}

For every cluster $C_p$, we construct another set $\widetilde{C}_p$ for $\tmix$ containing every index $j \in [k]$ such that $\min_{i\in C_{p}} |\ty_j - \vecr^{\top}\veca_i| < \frac{\sqrt{\eps}}{2}$. From the construction of the partition, for each spike $\ty_j$, there exists at most one $C_{p'}$ such that $\min_{i\in C_{p'}} |\ty_j - \vecr^{\top}\veca_i| < \frac{\sqrt{\eps}}{2}$. Thus, we have all $\widetilde{C}_1, \cdots, \widetilde{C}_m$ disjoint.

In this case, for every pair of $i \notin C_p$ and $j \in \widetilde{C}_p$, we have $|\ty_j - \vecr^{\top}\veca_i| > \frac{\sqrt{\epsilon}}{2}$. Since the total transportation distance between $\tpmix$ and $\proj{\vecr}(\mix)$ is no more than $\epsilon$. Hence, a portion of at most $\frac{\eps}{\sqrt{\eps}/2}$ of the weight in $C_p$ can be matched out of $\widetilde{C}_p$. This gives
$$\big|\sum_{i \in C_p} w_i - \sum_{j \in \widetilde{C}_p} \tw_j\big| < 2\sqrt{\epsilon}.$$

Recall the transportation distance $\tran(\mix, \tmix)$ is the solution of the following program over coupling distribution $\mu$.
\begin{align*}
    \textrm{ min } &  \sum_{i=1}^{k}\sum_{j=1}^{k} \mu_{i,j} \|\tveca_j - \veca_i\|_1 \\
    \textrm{ s.t. } & w_i = \sum_{j=1}^{k} \mu_{i,j}, \tw_j = \sum_{i=1}^{k} \mu_{i,j}, \mu \geq \mathbf{0}.
\end{align*}
Consider a specific distribution $\mu$ generated by the following two-stage procedure: 
Initially, set $\mu_{i,j} = 0$ for every pair of $(i,j)$
In the first stage, increase $\mu_{i,j}$ for every $i \in C_p$ and $j \in \widetilde{C}_p$ whenever $w_i \geq \sum_{j=1}^{k} \mu_{i,j}, \tw_j \geq \sum_{i=1}^{k} \mu_{i,j}$.
In the second stage, increase $\mu_{i,j}$ arbitrarily for every pair of $(i, j)$ to satisfy every constraint.
Since the restriction set is convex, $C_p$ are disjoint, and $\widetilde{C}_p$ are disjoint, for some specific $p$, this two-stage process ensures 
\begin{align*}
    \sum_{i \in C_p} \sum_{j \in \widetilde{C}_p} \mu_{i,j} = \min \{ \sum_{i \in C_p} w_i, \sum_{j \in \widetilde{C}_p} \tw_j\}.
\end{align*}
As a result, the coupling distribution satisfies
\begin{align*}
    \sum_{i \in C_p} \sum_{j \notin \widetilde{C}_p} \mu_{i,j} \leq |\sum_{i \in C_p} w_i - \sum_{j \in \widetilde{C}_p} \tw_j| \leq 2\sqrt{\epsilon}.
\end{align*}

Now we start to show $\mu$ is a good coupling distribution for $\tran(\mix, \tmix)$.
Since $\{C_p\}$ is a partition of $[k]$, we can decompose the transportation distance into: 
\begin{align*}
    \tran(\mix, \tmix) &\leq \sum_{i=1}^{k}\sum_{j=1}^{k} \mu_{i,j} \|\tveca_j - \veca_i\|_1 = \sum_{p} \sum_{i \in C_p} \sum_{j=1}^{k} \mu_{i,j} \|\tveca_j - \veca_i\|_1.
\end{align*}

Next, we bound the RHS by considering each cluster $C_p$. 
If it holds $\sum_{i\in C_p} \sum_{j=1}^{k} \mu_{i,j} \leq 2k\sqrt{\epsilon}$, from $\|\tveca_j - \veca_i\|_1 \leq 1$, we can directly get $$\sum_{i \in C_p} \sum_{j=1}^{k} \mu_{i,j} \|\tveca_j - \veca_i\|_1 \leq 2k\sqrt{\epsilon}.$$ 
Otherwise, we have $\sum_{i\in C_p} \sum_{j=1}^{k} \mu_{i,j} > 2k\sqrt{\epsilon}$. We can decompose the last summation into:
\begin{align*}
    \sum_{i \in C_p} \sum_{j=1}^{k} \mu_{i,j} \|\tveca_j - \veca_i\|_1 &= \sum_{i \in C_p} \sum_{j \notin \widetilde{C}_p} \mu_{i,j} \|\tveca_j - \veca_i\|_1  + \sum_{i \in C_p} \sum_{j \in \widetilde{C}_p} \mu_{i,j} \|\tveca_j - \veca_i\|_1.
\end{align*}

For the first term, we have $\sum_{i \in C_p} \sum_{j \notin \widetilde{C}_p} \mu_{i,j} \leq 2\sqrt{\epsilon}$ as we had discussed above. Together with $\|\tveca_j - \veca_i\|_1 \leq 1$, we can bound the first term by $2\sqrt{\epsilon}$. For the second term, we have $\sum_{i \in C_p} \sum_{j \in \widetilde{C}_p} \mu_{i,j} \leq 1$ according to the property of coupling distribution and the following lemma shows $\|\tveca_j - \veca_i\|_1$ is small. We delay the proof for readability.

\begin{lemma}\label{lm:dimn-step3.5}
If $\sum_{i\in C_p} w_i > 2k\sqrt{\epsilon}$, 
then $\|\tveca_j - \veca_i\|_1 \leq O(k^3d^2\sqrt{\epsilon})$ for all $i \in C_p$ and $j \in \widetilde{C}_p$.
\end{lemma}

\noindent
As a result, we can see that
\begin{align*}
    \sum_{i \in C_p} \sum_{j=1}^{k} \mu_{i,j} \|\tveca_j - \veca_i\|_1 \leq O(k^3d^2 \sqrt{\epsilon}).
\end{align*}
Hence, we can conclude that 
$
\tran(\mix, \tmix) \leq O(k^3d^2 \sqrt{\epsilon}).
$
\end{proof}

\begin{proofof} {Lemma~\ref{lm:dimn-step3.5}}
For some fixed cluster $C_p$ with $i \in C_p$ and $j \in \widetilde{C}_p$, we will prove the statement by showing $\veca_i$ and $\tveca_j$ are close on every dimension $t$.

Firstly, we claim that for the cluster $C_p$, there exist $s \in [k]$ and $i_1 \in C_p$ in which $\hw_{s,t} \geq \sqrt{\epsilon}$ and $|\hy_{s,t} - \vecr^{\top} \veca_{i_1}| + |\ha_{s,t} - \alpha_{i_1,t}| \leq \sqrt{\epsilon}$ in which $\hpmix_{t}$ contains a spike at $(\hy_{s,t}, \ha_{s,t})$ of weight $\hw_{s,t}$ and $\alpha_{i_1,t}$ is the dimension $t$ of $\alpha_{i_1}$. Towards a contradiction, each spike in $\hpmix_{t}$ is either of weight less than $\sqrt{\epsilon}$ or is of distance at least $\sqrt{\epsilon}$ from the projected spikes in $C_p$ along $[\vecr, e_t]$. Then, each unit of the weights in $C_p$ suffers a transportation cost of at least $\sqrt{\epsilon}$ after the first $k \times \sqrt{\epsilon}$ weights since it is not sufficient to cover spikes in $C_p$ using nearby spikes. Note the total weight in $C_p$ is at least $2k\sqrt{\epsilon}$. So in this case, the transportation distance between $\hpmix$ and $\proj{\vecr}(\mix)$ is at least $\tran(\hpmix_t, \proj{[\vecr, e_t]}(\mix)) \geq (2k\sqrt{\epsilon} - \sqrt{\epsilon} \times k) \times \sqrt{\epsilon} > k\epsilon$ which contradicts to the definition of $\epsilon$.

Next, we take some $i_2 \in C_p$ for $j$ such that $|\ty_j - \vecr^{\top} \veca_{i_2}| \leq \frac{\sqrt{\epsilon}}{2}$. This always exists according to the definition of $\widetilde{C}_p$. From the clustering, it holds that $|\vecr^{\top} \veca_{i} - \vecr^{\top} \veca_{i_2}| \leq k\sqrt{\epsilon}$ and $|\vecr^{\top} \veca_{i_1} - \vecr^{\top} \veca_{i_2}| \leq k\sqrt{\epsilon}$. With the triangle inequality, we have 
$$|\ty_j - \hy_{s,t}| \leq |\hy_{s,t} - \vecr^{\top} \veca_{i_1}| + |\vecr^{\top} \veca_{i_1} - \vecr^{\top} \veca_{i_2}| + |\ty_j - \vecr^{\top} \veca_{i_2}| \leq (1.5 + k)\sqrt{\epsilon}.$$ 
Since $\hw_{s,t} \geq \sqrt{\epsilon}$ which fits the condition in Line~\ref{line:dim3-s} of Algorithm~\ref{alg:dim3}, the minimization thus ensures $|\ty_{j} - \hy_{s^*_{j,t},t}| \leq (1.5 + k)\sqrt{\epsilon}.$

In addition, since the weight of spike $(\hy_{s^*_{j,t},t}, \ha_{s^*_{j,t}, t})$ satisfies $\hw_{s^*_{j,t},t} > \sqrt{\eps}$ and the total transportation distance satisfies $\tran(\hpmix_t, \proj{[\vecr, e_t]}(\mix)) \leq \eps$. There must exist some $i_3 \in [k]$ such that $|\vecr^{\top} \veca_{i_3} - \hy_{s^*_{j,t},t}| + |\alpha_{i_3, t} - \ha_{s^*_{j,t}, t}| < \frac{\eps}{\sqrt{\eps}} = \sqrt{\eps}$, that is, $(\hy_{s^*_{j,t},t}, \ha_{s^*_{j,t}, t})$ is close to spike $(\vecr^{\top} \veca_{i_3}, \alpha_{i_3, t})$.

From the triangle inequality, we have 
\begin{align*}
    &\quad\ |\vecr^{\top} \veca_{i} - \vecr^{\top} \veca_{i_3}| + |\alpha_{i_3, t} - \ha_{s^*_{j,t}, t}| \\
    &\leq |\vecr^{\top} \veca_i - \vecr^{\top} \veca_{i_2}| + |\ty_j - \vecr^{\top} \veca_{i_2}| + |\ty_{j} - \hy_{s^*_{j,t},t}| + |\vecr^{\top} \veca_{i_3} - \hy_{s^*_{j,t},t}| + |\alpha_{i_3, t} - \ha_{s^*_{j,t}, t}| \\
    &\leq (3 + 2k)\sqrt{\epsilon}.
\end{align*}

Applying Lemma~\ref{lm:prob-proj} for $\veca_i$s, we have that $|\vecr^{\top} \veca_{i} - \vecr^{\top} \veca_{i_3}| \geq \frac{\eta}{k^2 d} \|\veca_{i} - \veca_{i_3} \|_1 \geq \frac{\eta}{k^2 d} |\alpha_{i,t} - \alpha_{i_3,t}|$ under the good event. Thus, 
\begin{align*}
    |\ta_{j,t} - \alpha_{i,t}| &= |\ha_{s^*_{j,t}, t} - \alpha_{i,t}| \\
    &\leq |\alpha_{i,t} - \alpha_{i_3, t}| + |\alpha_{i_3} - \ha_{s^*_{j,t},t}| \\
    &\leq \frac{k^2d}{\eta} (|\vecr^{\top} \veca_{i} - \vecr^{\top} \veca_{i_3}| + |\alpha_{i_3, t} - \ha_{s^*_{j,t}, t}|) \\
    &\leq \frac{k^2d}{\eta} \cdot (3 + 2k)\sqrt{\epsilon} \leq O(k^3d \sqrt{\epsilon})
\end{align*}
where the equality holds due to the assignment of $\ta_{j,t} \leftarrow \ha_{s_{j,t}^*,t}$ in Line~\ref{line:dim3-ta-jt} of Algorithm~\ref{alg:dim3} and the second inequality holds since $\eta < 1$.
Taking summation over $t$, we conclude that 
\begin{align*}
    \|\tveca_j - \veca_i\|_1 \leq \sum_{t=1}^{d} |\ta_{j,t} - \alpha_{i,t}| \leq O(k^3d^2 \sqrt{\epsilon}).
\end{align*}
\end{proofof}

Note that we can bound $\eps$ using Lemma~\ref{lm:dim1-reconstruction} and Lemma~\ref{lm:dim2-reconstruction}. 
As a result, we can provide a performance guarantee of our algorithm for 
the high-dimensional case.

\begin{theorem}\label{thm:dimn-reconstruction} 
Let $\cmix = (\cveca, \tvecw)$ be the final result 
(Line~\ref{line:dim3-cmix}) in Algorithm~\ref{alg:dim3}. 
Then with probability at least $1-\eta$ for any fixed constant $0 < \eta < 1$,
$$\tran(\cmix, \mix) \leq (kd \noise^{\frac{1}{k}})^{O(1)}.$$
\end{theorem}

\begin{proof}
    Since $\tpmix'$ is generated by one dimension algorithm, according to Lemma~\ref{lm:dim1-reconstruction}, we have $$\tran(\tpmix', \proj{\frac{\vecr+\mathbf{1}}{4}}(\mix)) \leq O(k \noise^{\frac{1}{4k-2}}).$$ 
    Note that
    $\proj{\frac{\vecr+\mathbf{1}}{4}}(\mix)$ can be transformed to $\proj{\vecr}(\mix)$ by transforming the coordinates by $\alpha \rightarrow 4\alpha - 1$. We have 
    $$
    \tran(\tmix, \proj{\vecr}(\mix)) \leq 4\tran(\tpmix', \proj{\frac{\vecr+\mathbf{1}}{4}}(\mix)) \leq O(k \noise^{\frac{1}{4k-2}}).
    $$
    Similarly, according to Lemma~\ref{lm:dim2-reconstruction}, we can see 
    $
    \tran(\hpmix_t, \proj{[\vecr, e_t]}(\mix)) \leq O(k \noise^{\frac{1}{4k-2}}).
    $
    Thus, from the definition of $\eps$, we have $\eps \leq O(k \noise^{\frac{1}{4k-2}})$. With Lemma~\ref{lm:dimn-step3}, we have $$\tran(\tmix, \mix) \leq O(k^3d^2\sqrt{\eps}) \leq (kd \noise^{\frac{1}{k}})^{O(1)}.$$
    Note that $\simplex_{d-1}$ is a convex set and $\cveca_j$ is the projection of $\tveca_j$ to the same set. Thus, $\|\veca_i - \cveca_j\|_1 \leq \|\veca_i - \tveca_j\|_1$ holds for all $A_i \in \simplex_{d-1}$ and $j$. Hence, the projection to $\simplex_{d-1}$ does not increase the transportation distance. As a result, 
    $
        \tran(\tmix, \cmix) \leq  \tran(\tmix, \mix).
    $
    Finally, by triangle inequality, we have
    $$\tran(\cmix, \mix) \leq \tran(\tmix, \mix) + \tran(\cmix, \tmix) \leq (kd \noise^{\frac{1}{k}})^{O(1)}.$$
\end{proof}

By choosing $\noise = (\eps/(dk))^{O(k)}$, the previous theorem directly implies Theorem~\ref{thm:highdim-kspike}.

\section{Applications to Topic Models}
\label{sec:topic-full}

In this section, we discuss the application of our results to topic models.
Here, the underlying vocabulary consists of $d$ words.
We are given a corpus of documents. We adopt the popular 
“bag of words” model and take each document as an unordered multiset of words. 
The assumption is that there is a small number $k$
of “pure” topics, where each topic is a distribution over $[d]$. 
A $K$-word document (i.e., a string in $[d]^K$) is generated by first selecting a
topic $i\in\simplex_{d-1}$ from the mixture $\mix$, and then sampling
$K$ words i.i.d. according to $\veca_{i}$ from this topic.
Here, $K$ is referred to as {\em the snapshot number} and we call such a sample a {\em $K$-snapshot} of $p$.
Our goal here is to recover $\mix\in \Spike(\simplex_{d-1},\simplex_{k-1})$,
which is a discrete distribution over $k$ pure topics.
Again, we measure the accuracy in terms of $L_1$-transportation distance.

\subsection{An Algorithm for $d=O(k)$}
\label{subsec:lowdimtopic}

In this section, we first directly handle the high-dimensional case using Algorithm~\ref{alg:dim3}.
We will perform dimension reduction when $d\gg k$, in Section~\ref{subsec:larged}.

\begin{lemma}
\label{lm:lowdimtopic}
There is an algorithm that can learn an arbitrary $k$-spike mixture 
of discrete distributions supported in $\simplex_{d-1}$
for any $d$ within $L_1$ transportation distance $\epsilon$ with probability at least $0.99$ using $(kd/\epsilon)^{O(k)}$ many $(2k-1)$-snapshots.
\end{lemma}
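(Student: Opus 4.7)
The plan is to reduce the lemma to Theorem~\ref{thm:highdim-kspike} by constructing an unbiased estimator of each required noisy moment from the $(2k-1)$-snapshots. The algorithm from Section~\ref{subsec:algoforhigherdim} is driven by a noisy moment oracle $M'(R)$ that returns the $\bt$-moments of the linear projection $\proj{R}(\mix)$ for a 1-d vector $\vecr$ and for $d$ different 2-d matrices $[\vecr, e_t]$, each with $\|R\|_\infty \leq 1$. So the task reduces to showing that each of these moments can be estimated to accuracy $\noise = (\epsilon/(dk))^{O(k)}$ from $(kd/\epsilon)^{O(k)}$ many $(2k-1)$-snapshots, which then lets us invoke Theorem~\ref{thm:highdim-kspike}.

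For the estimator, I would fix a projection matrix $R \in \R^{d \times h}$ with $h\in\{1,2\}$ and $\|R\|_\infty \leq 1$, and a multi-index $\bt=(t_1,\ldots,t_h)$ with $\|\bt\|_1 \leq 2k-1$. Given a $(2k-1)$-snapshot $(w_1,\ldots,w_{2k-1}) \in [d]^{2k-1}$ generated from topic $\veca_i$ (which is itself drawn according to the mixture), I assign the first $t_1$ coordinates to dimension 1 of the projection, the next $t_2$ coordinates to dimension 2, etc., pad the remaining positions with a dummy factor of $1$, and take the product $\prod_{j} R_{w_j, \pi(j)}$ where $\pi$ encodes this assignment. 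Because the $w_j$ are i.i.d.\ draws from $\veca_i$, the expectation of this product equals $\prod_{\ell=1}^h (\vecr_\ell^\top \veca_i)^{t_\ell} = \veca_i^{R,\bt}$; averaging over a fresh topic sample yields an unbiased estimator of $M_\bt(\proj{R}(\mix))$. The estimator is bounded in $[-1,1]$ since $\|R\|_\infty \leq 1$.

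Now I would apply Hoeffding's inequality: averaging $N$ independent $(2k-1)$-snapshots estimates one such moment to accuracy $\noise$ with failure probability $\exp(-\Omega(N\noise^2))$. The algorithm queries the oracle at $d+1$ projections (one 1-d and $d$ 2-d), each carrying at most $O(k^2)$ distinct moment indices, so there are $O(dk^2)$ moments in total. A union bound with failure probability $0.01/(dk^2)$ per moment shows that $N = O(\log(dk)/\noise^2)$ snapshots suffice to realize a valid noisy moment oracle at precision $\noise$ with probability at least $0.99$.

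Finally, choosing $\noise = (\epsilon/(dk))^{O(k)}$ as demanded by Theorem~\ref{thm:highdim-kspike}, the sample count becomes $N = (dk/\epsilon)^{O(k)}$, and running Algorithm~\ref{alg:dim3} on these estimated moments (with an independent uniformly random $\vecr \in \mathbb{S}_{d-1}$, whose Lemma~\ref{lm:prob-proj} guarantee succeeds with constant probability, amplified by constant-factor repetition) produces $\cmix$ with $\tran(\cmix,\mix) \leq O(\epsilon)$. No step is especially hard, but the one to be careful about is keeping the estimator's range bounded by $1$ when building it from the raw snapshot; with $\|R\|_\infty \leq 1$ and the deterministic assignment above this holds, which is exactly why the $\noise^{-2}$ scaling in Hoeffding does not blow up the final exponent beyond $(kd/\epsilon)^{O(k)}$.
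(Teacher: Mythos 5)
Your reduction to Theorem~\ref{thm:highdim-kspike} is exactly the paper's, but your construction of the noisy moment oracle replaces the paper's Lemma~\ref{lm:dim-sampling} with a simpler, more direct one. The paper randomly rounds each word $x_j$ to a Boolean vector with mean $r_{c,x_j}$, collects the normalized-histogram indicator $h_{\bt}$, and recovers moments by applying a tensor power of a Pascal matrix to the empirical histogram, losing a $2^{O(kp)}$ factor in precision that has to be absorbed into the sample bound. You instead deterministically assign the first $\|\bt\|_1$ positions of the snapshot to the $h$ projection dimensions according to $\bt$ and take $\prod_j R_{w_j,\pi(j)}$; since the words are i.i.d.\ given the topic, this is an unbiased, $[-1,1]$-bounded estimate of $M_{\bt}(\proj{R}(\mix))$, so Hoeffding applies immediately without any combinatorial translation. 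Your route also dispenses with the randomized rounding, which implicitly needs $r_{c,x_j}\in[0,1]$ (satisfied for the specific shifted projections $\frac{\vecr+\mathbf{1}}{4}$ and $\frac{e_t}{2}$ in Algorithm~\ref{alg:dim3}, but not for a general $R$ with $\|R\|_\infty\le 1$), so your estimator is if anything slightly more general. Both give the same $(kd/\epsilon)^{O(k)}$ bound. One small point: there is no need to (nor an obvious way to) amplify the success of Lemma~\ref{lm:prob-proj} by repetition, since the algorithm cannot tell a good $\vecr$ from a bad one; instead, simply pick the constant $\eta$ small and explicitly split the $0.01$ failure budget between the bad-$\vecr$ event and your Hoeffding union bound.
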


The following lemma is a high-dimensional version for empirical error (see e.g. \citet{rabani2014learning,li2015learning,gordon2020sparse}).

\begin{lemma}
\label{lm:dim-sampling}
For matrix $R = [\vecr_{1}, \cdots, \vecr_{p}]\in \mathbb{R}^{d \times p}$ with $\|R\|_{\infty}\leq 1$,
using $2^{O(kp)}\cdot\noise^{-2}\cdot\log(1/\eta)$ samples, we can obtain transformed empirical moments $M'(R)$ satisfying $$\max_{|\bt|\leq K}|M'_{\bt}(R) - M_{\bt}(\proj{R}(\mix))| \leq \noise$$
with probability at least $1 - \eta$.
\end{lemma}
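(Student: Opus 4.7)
The plan is to build, for each $K$-snapshot with $K=2k-1$, an unbiased estimator of every transformed moment $M_{\bt}(\proj{R}(\mix))$ with $|\bt|_1\le K$, then average over many independent snapshots and apply Hoeffding together with a union bound over $\bt$.

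First I would set up the estimator. A single $K$-snapshot is generated by first drawing a topic index $i\sim \vecw$ and then sampling $K$ words $x_1,\dots,x_K\in[d]$ i.i.d.\ from $\veca_i$. For a multi-index $\bt=(t_1,\dots,t_p)$ with $|\bt|_1\le K$, I would partition the first $|\bt|_1$ of these words into consecutive blocks $B_1,\dots,B_p$ of sizes $t_1,\dots,t_p$ and define
\[
\hat M_{\bt}(R) \;:=\; \prod_{l=1}^{p}\prod_{s\in B_l} R_{x_s,l},
\]
where $R_{u,l}$ denotes the $(u,l)$ entry of $R$. Conditioning on the sampled topic $i$, the factors are mutually independent with $\Exp[R_{x_s,l}\mid i]=\sum_u R_{u,l}\,\alpha_{i,u}=(R^\top\veca_i)_l$, so $\Exp[\hat M_{\bt}(R)\mid i]=(R^\top\veca_i)^{\bt}$, and averaging over $i\sim \vecw$ yields $\Exp[\hat M_{\bt}(R)]=M_{\bt}(\proj{R}(\mix))$. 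Since $\|R\|_\infty\le 1$, every factor satisfies $|R_{x_s,l}|\le 1$, hence $|\hat M_{\bt}(R)|\le 1$ almost surely.

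Next, given $n$ independent snapshots I would take $M'_{\bt}(R)$ to be the empirical mean of the corresponding per-snapshot estimators. Hoeffding's inequality then gives
\[
\Prob\!\left[\bigl|M'_{\bt}(R)-M_{\bt}(\proj{R}(\mix))\bigr|>\noise\right]\le 2\exp(-n\noise^2/2).
\]
The number of multi-indices $\bt$ with $|\bt|_1\le K$ in dimension $p$ is $\binom{K+p}{p}\le (K+p)^{\min(K,p)}=2^{O(kp)}$, so a union bound over all such $\bt$ shows that $n=2^{O(kp)}\cdot \noise^{-2}\cdot \log(1/\eta)$ snapshots drive the overall failure probability below $\eta$. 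Note that reusing the same snapshots for different $\bt$ only affects the joint distribution of the errors, not the marginal concentration that enters the union bound.

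The step that requires the most care is the estimator construction: the blocks $B_1,\dots,B_p$ must consist of pairwise disjoint sample indices, because reusing the same word $x_s$ twice in the product for coordinate $l$ would produce $\Exp[R_{x_s,l}^{2}\mid i]\ne (R^\top\veca_i)_l^{2}$ and bias the estimator. This is exactly why we need a snapshot of size $K$ to estimate a degree-$K$ moment, and why snapshots shorter than $K$ are insufficient. Once the block partition is fixed so that each word is consumed at most once, the conditional-independence argument makes unbiasedness immediate and the remainder is routine Hoeffding plus union bound, with the combinatorial count $\binom{K+p}{p}$ absorbed into the $2^{O(kp)}$ factor.
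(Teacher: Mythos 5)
Your proof is correct, but it takes a genuinely different route from the paper's. The paper first randomizes each word $x_j$ into an auxiliary bit vector $\vecb_j\in\{0,1\}^p$ with $\Exp[\beta_{c,j}]=R_{x_j,c}$, forms the histogram observable $h_{\bt}=\mathbf{1}[\forall c:\sum_j\beta_{c,j}=t_c]$, estimates $\Exp[h_{\bt}]$ by empirical averaging and Hoeffding, and then converts histogram estimates into moment estimates via the tensored Pascal matrix $\bigotimes_{t=1}^p\pas$, paying a norm bound $\|\bigotimes\pas\|\le 2^{O(kp)}$ in the error propagation. You instead construct a \emph{direct} per-snapshot unbiased estimator of each moment: partition $|\bt|_1\le K$ of the $K$ words into disjoint blocks of sizes $t_1,\dots,t_p$ and take the product $\prod_{l}\prod_{s\in B_l}R_{x_s,l}$, which by conditional independence given the latent topic has mean exactly $(R^\top\veca_i)^{\bt}$ and hence $M_{\bt}(\proj{R}(\mix))$ after averaging over the topic. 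Your observation that $\|R\|_\infty\le 1$ keeps this estimator in $[-1,1]$ gives boundedness for Hoeffding directly, and the $2^{O(kp)}$ factor arises only from the union bound over $\binom{K+p}{p}$ multi-indices rather than from a matrix-norm bound. This is cleaner: it avoids the auxiliary randomization, the histogram-to-moment conversion, and the tensor norm estimate, and it makes the role of disjoint blocks (and hence of needing snapshot length $\ge|\bt|_1$) transparent. Both arguments yield the same sample complexity $2^{O(kp)}\noise^{-2}\log(1/\eta)$.
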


\begin{proof}
Assume the words of a document are $x_1, \cdots, x_K$. 
If the document is from some topic $i$,
we have $\mathbb{E}[x_j = c] = \alpha_{p,t}$ for all $i, j, c$. 
For each word $x_j$, we generate a i.i.d. random vector $\vecb_j = [\beta_{1,j}, \cdots, \beta_{p,j}]^{\top}\in \{0,1\}^p$ such that $\mathbb{E}[\beta_{c,j}] = r_{c, x_j}$. For $\bt = (t_1, \cdots, t_p)$ with $|\bt|\leq K$, we define observable $h_{\bt} = \mathbf{1}[\forall c : \sum_{j=1}^{K} \beta_{c,j} = t_c]$.

Note that $h_{\bt}$ is the normalized histogram for projected distribution $\proj{R}(\mix)$. Hence, from the relation between normalized histogram and standard moments, we can conclude that 
\begin{align*}
    M(\proj{R}(\mix)) = \left(\bigotimes_{t=1}^{p} \textrm{Pas}\right) h
\end{align*}
where $\textrm{Pas} \in \mathbb{R}^{(K+1)\times (K+1)}$ such that $\textrm{Pas}_{i,j} = [j\geq i] \binom{j}{i} \binom{K+1}{i}$ and $\otimes$ is the tensor product.
Since $\|\textrm{Pas}\| \leq 2^{O(k)}$, we have $\left(\bigotimes_{t=1}^{p} \textrm{Pas}\right) \leq 2^{O{(kp)}}$. 

Let $\widehat{h}_{\bt}$ be the empirical average of $h_{\bt}$.
Since $h_{\bt}$ is a Bernoulli variable, from Hoeffding's inequality, we have 
$\Pr[|\widehat{h}_{\bt} - \mathbb{E}[h_{\bt}]| < \eps] \leq 2 \exp(-2\eps^2 s)$. 
By applying union bound on $\bt$, we can see that using $s>2^{O(kp)}\cdot\noise^{-2}\cdot\log(1/\eta)$ samples, we have $|\widehat{h}_{\bt} - \mathbb{E}[h_{\bt}]| < \noise \cdot 2^{-\Omega(kp)}$ with probability at least $1-\eta$.
Moreover, if we calculate $M'$ from $\widehat{h}$ using the relationship between $M$ and $h$, this leads to 
$\max_{|\bt|\leq K}|M'_{\bt}(R) - M_{\bt}(\proj{R}(\mix))| \leq \noise$.
\end{proof}

\begin{proofof}{Lemma~\ref{lm:lowdimtopic}}
Let $\noise = (\eps/kd)^{O(k)}$. From Theorem~\ref{thm:dimn-reconstruction}, the recovered
$k$-spike distribution $\cmix$ satisfies $\tran(\cmix, \mix) \leq (kd\noise^{\frac{1}{k}})^{O(1)} = O(\eps)$.
According to Lemma~\ref{lm:dim-sampling}, we can construct such a noisy moment oracle $M'(\cdot)$ using $(kd/\eps)^{O(k)}$ samples with high probability. 
\end{proofof}

\subsection{Dimension Reduction When $d\gg k^{\Omega(1)}$}
\label{subsec:larged}

In topic modeling, the number of words is typically much larger than the number of pure topics
(i.e., $d\gg k$).
In this case, we face another challenge in recovering the mixture.
First, there are $\binom{d}{1}+\binom{d}{2}+\cdots+\binom{d}{k}=O(d^k)$ many different moments.
Obtaining all empirical moments accurately enough would require a huge number of $2k-1$-snapshot samples.
So if $d\gg k$, we reduce the dimension from $d$ to $O(k)$.
Now, we prove the following theorem. It
improves on the result in \citet{rabani2014learning,li2015learning}, which uses
more than $(k/\epsilon)^{O(k^2)}$ $(2k-1)$-snapshots,
and \citet{gordon2020sparse} which use $(k/\eps)^{O(k)} \cdot (w_{\min}\minsep^k)^{-O(1)}$
$2k$-snapshots and requires the minimum separation assumption.


Let $\mix$ be the target $k$-spike mixture in $\simplex_{d-1}$ ($d\gg k$).
Suppose there is a learning algorithm $\calA$ satisfying the assumption in the theorem.
We show how to apply $\calA$ to the projection of $\mix$ to a subspace of dimension at most $k$ in $\R^d$.
However, an arbitrary subspace of dimension at most $k$ is not enough, even the one spanning the $k$ spikes of $\mix$. 
This is because $L_1$ distance is not rotationally invariant and not preserved under projection.
For example, the $L_1$ distance between $(1/d,\ldots,1/d)$ and $(2/d,\ldots,2/d, 0, \ldots, 0)$ is 1 in $\R^d$ but only $\sqrt{1/d}$ in the line spanned by the two points.
Hence, an accurate estimation of the projected measure $\mix_B=\proj{B}(\mix)$ may not translate to
an accurate estimation in the ambient space $\R^d$.
Here, we can use the dimension reduction method developed by \citet{li2015learning}, which shows that
if suffices to project
the mixture to a special subspace $B$, 
such that a unit $L_1$-ball in $B$ ($L_1$ measured in $\R^d$) is close to being an 
$\widetilde{O}(d^{-1/2})$ $L_2$-ball
($\widetilde{O}$ hides factors depending only on $k$ and $\epsilon$, but not $d$).

\begin{lemma}
\label{lm:dimensionreduction}
\citep{li2015learning} There is an algorithm that requires $\poly(d,k,\frac{1}{\epsilon})$
many 1-,2- snapshots to construct a subspace $B$ ($\dim(B)\leq k$)
with the following useful properties:
\begin{enumerate}
    \item Suppose
    $\{b_1,b_2,\cdots,b_m\}$ is an orthonormal basis of $B$ where $m=\dim(B)\leq k$
    (such a basis can be produced by the above algorithm as well).
    For any $i\in [m]$, $\|b_i\|_{\infty}\leq O(k^{3/2}\epsilon^{-2} d^{-1/2})$.
    \item
Suppose we can learn an approximation $\tmix_B$, supported on $\sspan(B)$, of
the projected measure $\mix_B=\Pi_{\sspan(B)}(\mix)$ such that $\tran(\mix_B, \tmix_B)\leq \epsilon_1=\poly(1/k,\epsilon)$ (here $L_1$ is measured in $\R^d$, not in the subspace)
using $N_1(d)$, $N_2(d)$ and $N_{K}(d)$ 1-, 2-, and $K$-snapshot samples.
Then there is an algorithm for learning a mixture $\tmix$ such that $\tran(\mix, \tmix)\leq \epsilon$ using
$O(N_1(d/\epsilon)+d\log d/\epsilon^3)$,
$O(N_2(d/\epsilon)+O(k^4 d^{3}\log n/\epsilon^6))$ and $O(N_{K}(d/\epsilon))$  1-, 2-, and
$K$-snapshot samples respectively.
\end{enumerate}
\end{lemma}

\begin{proofof}{Theorem~\ref{thm:highdimtopic}}
In light of Lemma~\ref{lm:dimensionreduction}, we only need to show how to learn the projection $\mix_B=\Pi_{\sspan(B)}(\mix)$
using the algorithm developed in Section~\ref{subsec:lowdimtopic}.

First, we show how to translate a $K$-snapshots from $\mix$ to a $K$-snapshot
in some new $k$-dimensional mixture $\mixnew$ (then we apply the algorithm in Section~\ref{subsec:lowdimtopic} for dimension $k$).
Let $L=\sum_{i=1}^m \|b_i\|_{\infty}$.
So $L\leq O(k^{5/2}\epsilon^{-2} d^{-1/2})$.
Let $f:[-L,L]\rightarrow [0,1]$ defined as $f(x)=\frac{x}{2mL}+\frac{1}{2m}$.
For each word in the original $K$-snapshot, say $i\in[d]$, we translate it to $j$ with probability
$q_{i,j}=f(b_{j,i})$, for each $j\in [m]$, where $b_{j,i}$ denotes the $i$th coordinate of $b_j$,
and translate it into $m+1$ with probability $q_{i,m+1}=1-\sum_{j=1}^m q_{i,j}$. Since we have $\sum_{j=1}^m f(b_{j,i})= \frac{1}{2mL}\sum_{j=1}^mb_{j,i}+\frac{1}{2}$, we know that $q_{i,1},q_{i,2},\cdots,q_{i,m+1}\in [0,1]$ and $\sum_{j=1}^{k+1} q_{i,j}=1$. So this is a well defined $k$-spike mixture $\mixnew$ and its $K$ snapshots.
Then we apply our learning algorithm $\calA$ for dimension $k$ to obtain some $\mixnew'$ such that
$\tran(\mixnew,\mixnew')\leq \eps_1=\poly(1/k,\epsilon)$
with $\poly(k,1/\eps_1)=\poly(k,1/\eps)$ many $2k-1$-snapshots.

Next we can see $\mixnew$ (in $\R^k$) and $\mix_{B}$ (in $\Span(B)$) 
are related by an affine transform.
For a spike $\alpha_i\in\supp(\mix)$, one can see that $\Pi_B(\alpha_i)=\sum_{j=1}^m a_{i,j}b_j$ ,
where $a_{i,j}=\langle\alpha_i,b_j\rangle$.
$\alpha_{i}$ produces a new spike $\beta_i$ in $\mixnew$ and the mapping is as follows:
for each $j\in [k]$,
we have
$$
\beta_{i,j}=\sum_{t=1}^d \alpha_{i,t}f(b_{j,t})=
\sum_{t=1}^d \alpha_{i,t}(\frac{b_{j,t}}{2mL}+\frac{1}{2m})=\frac{a_{i,j}}{2mL}+\frac{1}{2m}
$$
($\beta_{i,j}$ denotes the $j$th coordinate of $\beta_i$).
So $a_{i,j}=2mL(\beta_{i,j}-\frac{1}{2m})$ and $g(x)=2mLx-L$ is the affine transformation.
Now we can translate $\mixnew'$ into a mixture in $Span(B)$ by applying $g(.)$ in each coordinate of each $\beta_i\in \supp(\mixnew')$, and obtain an estimation of $\mix_B$, say $\mix_B'$. 

It remains to show $\tran(\mix_B',\mix_B)\leq\epsilon_1$.
If we let $\tran_B$ denote the transportation distance in $\Span(B)$ (where if $a=\sum_{j=1}^k a_{j}b_j, c=\sum_{j=1}^k c_{j}b_j$ then $\tran_B(a,c)=\sum_{j=1}^k |a_{j}-c_{j}|$). Then $\tran_B(\mix_B',\mix_B)\leq 2mL\tran(\mixnew',\mixnew)=\poly(\epsilon,\frac{1}{k})/\sqrt{d}$.
Hence, we have 
$$
\tran(\mix_B',\mix_B)\leq d \max_i \|b_i\|_\infty\tran_B(\mix_B',\mix_B)\leq \epsilon_1.
$$
where the first inequality holds since 
$\tran(a,c)=\|\sum_{j=1}^k (a_{j}-c_{j})b_j\|_1 \leq d\max_i \|b_i\|_\infty  \sum_{j=1}^k (a_{j}-c_{j})$ for any two point $a,c\in \Span(B)$.
\end{proofof}

\section{Applications to Gaussian Mixture Learning}
\label{sec:Gaussian-full}

In this section, we show how to leverage our results
for sparse moment problem to obtain improved algorithms for learning Gaussian mixtures. 
We consider the following setting
studied in \citet{wu2020optimal, doss2020optimal}.
A $k$-Gaussian mixture in $\mathbb{R}^d$ can be parameterised as $\mix_N = (\veca, \vecw, \Sigma)$. Here, $\veca = \{\veca_1, \veca_2, \cdots, \veca_k\}$ and $\vecw = \{w_1, w_2, \cdots, w_k\} \in \Delta_{k-1}$ where $\veca_i \in \mathbb{R}^d$ specifies the mean of $i$th component and $w_i \in [0, 1]$ is the corresponding weight. $\Sigma \in \mathbb{R}^{d\times d}$ is the common covariance matrix for all $k$ mixture components and we assume that $\Sigma$ is known in advance.
We further assume $\|\veca_i\|_2 \leq 1$ and the maximal eigenvalue $\|\Sigma\|_2$ is bounded by a constant. 
For $k$-Gaussian mixture $\mix_N = (\veca, \vecw, \Sigma)$, each observation is distributed as 
\begin{align*}
    \mix_N \sim \sum_{i=1}^{k} w_i N(\veca_i, \Sigma).
\end{align*}
We consider the parameter learning problem, that is, to learn the parameter $\veca$ and $\vecw$ given known covariance matrix $\Sigma$ and a set of i.i.d. samples from $\mix_N$. The model is also called {\em Gaussian location mixture model} \citep{wu2020optimal, doss2020optimal}.
\footnote{
\citet{wu2020optimal} also studied the problem with unknown $\Sigma$. We leave it as a future direction.
}

\subsection{Efficient Algorithm for $d=1$}

In the 1-dimensional case, we denote the known variance by $\sigma$
which is upper bounded by some constant.
As shown by \citet{wu2020optimal}, the moments of Gaussian mixture have a close connection with the moments of corresponding discrete distributions $\mix = (\veca, \vecw)$. For $x \sim N(\mu, 1)$, we have $\Exp[H_t(x)] = \mu^t$ for Hermite polynomial $H_t(x)$ which is defined as
\begin{align}
\label{eq:hermite}
    H_t(x)  = \sum_{i=0}^{t} h_{t,i} x^i = t!\sum_{j=0}^{\lfloor \frac{t}{2} \rfloor} \frac{(-1/2)^j}{j!(t - 2j)!} x^{t-2j}.
\end{align}
For 1-dimensional Gaussian mixture $\mix_N = (\veca, \vecw, \sigma)$ with variance $\sigma$, the $t$th moment of the discrete distribution $\mix = (\veca, \vecw)$ satisfies 
\begin{align*}
    M_t(\mix) = \Exp_{x \sim \mix_N}\left[\sum_{i=0}^{t} h_{t,i} \sigma^{t-i} x^i\right].
\end{align*}
The following lemma guarantees the performance of estimating the moment by sampling.
\begin{lemma}{(Lemma 5 of \citet{wu2020optimal}, restated)}
\label{lm:var-Gauss}
    Let $x_1, \cdots, x_n \sim \mix_N$ be $n$ independent samples. 
    $$\tM_t(\mix) = \frac{1}{n} \sum_{j=1}^{n} \left[  \sum_{i=0}^{t} h_{t,i} \sigma^{t-i} x^i_j \right]$$
    is an unbiased estimator for $M_{t}(\mix)$, and the variance of this estimator can be bounded by 
    $$\Var[{\tM_t(\mix)}] \leq \frac{1}{n} (\sigma t)^{O(t)}.$$
\end{lemma}

Thus we can compute the moment of the original distribution and use our Algorithm~\ref{alg:dim1} for recovering the parameter of $\mix$. More concretely, we can replace the last two lines (an SDP) in Algorithm 2 of \citet{wu2020optimal} by our Algorithm~\ref{alg:dim1}, and obtain the following theorem.
The post-sampling time is improved from from $O(k^{2\omega})$ 
in \citet{wu2020optimal} to $O(k^2)$.

\begin{theorem}
\label{thm:1dim-Gaussian}
Let $\mix_N$ be an arbitrary $k$-Gaussian mixture over $\mathbb{R}$ with means $\alpha_1, \cdots, \alpha_n$ and known variance $\sigma$ bounded by some constant.
There is an algorithm that can learn the parameter $\mix=(\veca, \vecw)$ within transportation distance $O(\eps)$, with probability at least $0.99$, using $(k/\eps)^{O(k)}$ samples from the mixture $\mix_N$. Moreover, once we obtain the estimation of the moments of $\mix_N$, 
our algorithm only uses $O(k^2)$ arithmetic operations.
\end{theorem}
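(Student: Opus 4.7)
\begin{proofof}{Theorem~\ref{thm:1dim-Gaussian} (Plan)}
The plan is to reduce the Gaussian location mixture problem to the $k$-coin problem (via Hermite polynomials) and then invoke Algorithm~\ref{alg:dim1} as a black box. Concretely, the pipeline is: (i) build empirical estimates $\tM_t$ of the moments $M_t(\mix)$ of the underlying discrete distribution $\mix=(\veca,\vecw)$ from i.i.d.\ Gaussian samples using the Hermite-polynomial unbiased estimator; (ii) control the estimation error so that $\max_{0\le t\le 2k-1}|\tM_t-M_t(\mix)|\le \noise$ for a small enough $\noise$; (iii) feed $\tM=(\tM_0,\ldots,\tM_{2k-1})$ into Algorithm~\ref{alg:dim1} and appeal to Theorem~\ref{thm:1dim-kspikecoin} to recover a mixture $\tmix$ with $\tran(\tmix,\mix)\le O(\epsilon)$.

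For step (i)--(ii), Lemma~\ref{lm:var-Gauss} tells us that the estimator $\tM_t(\mix)$ obtained by averaging $\sum_i h_{t,i}\sigma^{t-i}x^i$ over $n$ samples is unbiased and has variance at most $(\sigma t)^{O(t)}/n$. By Chebyshev's inequality (combined with a standard median-of-means boosting, if one wants a high-probability bound rather than just constant probability per moment), taking $n = (\sigma t)^{O(t)}/\noise^2$ samples suffices to guarantee $|\tM_t-M_t(\mix)|\le \noise$ for a fixed $t$ with probability, say, $1-1/(100k)$. Union-bounding over the $2k$ moments $t=0,1,\ldots,2k-1$ yields the whole vector bound with overall probability at least $0.99$. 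Since the means satisfy $\|\veca_i\|_2\le 1$, after an affine change of variable $\alpha\mapsto(\alpha+1)/2$ (which only multiplies the required precision by a constant depending on $k$) the support lies in $[0,1]$, matching the hypothesis of Theorem~\ref{thm:1dim-kspikecoin}.

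For step (iii), Theorem~\ref{thm:1dim-kspikecoin} guarantees that with moment precision $\noise=(\epsilon/k)^{O(k)}$, Algorithm~\ref{alg:dim1} returns $\tmix$ with $\tran(\tmix,\mix)\le O(\epsilon)$ using $O(k^2)$ arithmetic operations. Plugging $\noise=(\epsilon/k)^{O(k)}$ into the sample count from step (ii) gives a total sample complexity of
\[
n \;=\; (\sigma\cdot k)^{O(k)}\cdot\noise^{-2} \;=\; (k/\epsilon)^{O(k)},
\]
using that $\sigma$ is a bounded constant. The post-sampling running time is dominated by Algorithm~\ref{alg:dim1}, i.e.\ $O(k^2)$, replacing the $O(k^{6.5})$ SDP of \cite{wu2020optimal}.

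There is no real obstacle in this argument; the only mild subtlety is making sure the $\noise$ demanded by Theorem~\ref{thm:1dim-kspikecoin} is compatible with the variance bound of Lemma~\ref{lm:var-Gauss}, and that the affine rescaling into $[0,1]$ only changes constants in the exponent. Both are straightforward since $k$ already appears in the exponent on both sides. The argument literally replaces the last two lines of Algorithm~2 of \cite{wu2020optimal} (their SDP recovery step) with our Algorithm~\ref{alg:dim1}, inheriting the same sample complexity from their moment-estimation analysis while improving the post-sampling running time.
\end{proofof}
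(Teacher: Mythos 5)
Your proposal matches the paper's proof: both estimate the moments of the underlying discrete mixture $\mix$ via the Hermite-polynomial estimator of Lemma~\ref{lm:var-Gauss}, control the error by Chebyshev's inequality plus a union bound over the $2k$ moments (choosing $n=(k/\epsilon)^{ck}$ so the variance is $(\epsilon/k)^{\Omega(k)}$), and then invoke Theorem~\ref{thm:1dim-kspikecoin} on Algorithm~\ref{alg:dim1}. The only addition is your explicit affine rescaling of the support $[-1,1]$ into $[0,1]$ (and the optional median-of-means aside), details the paper leaves implicit but which you handle correctly.
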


\begin{proof}
    Let $c$ be the constant to be determined. 
    With $n = (k/\epsilon)^{ck}$ samples, the variance of the empirical moment can be bounded by
    \begin{align*}
        \Var[M_t'(\mix)] \leq \frac{1}{(k/\epsilon)^{ck}} \cdot k^{O(k)} \leq (\epsilon/k)^{\Omega(k)}
    \end{align*}
    where the first inequality holds due to Lemma~\ref{lm:var-Gauss} and $\sigma$ is bounded by a constant, and the second inequality holds by selecting a large enough constant $c>0$.
    According to Chebyshev's inequality, for each $t$, with probability at least $1 - 0.01 k^{-2}$, 
    \begin{align*}
        |M'_{t}(\mix) - M_{t}(\mix)| \leq 0.1k \cdot (\epsilon/k)^{\Omega(k)} \leq (\epsilon/k)^{\Omega(k)}.
    \end{align*}
    By taking union bound, the probability that the above inequality holds for all $0 \leq t \leq 2k - 1$ is greater than $0.99$. We can conclude the result by applying Theorem~\ref{thm:1dim-kspikecoin} directly.
\end{proof}

\begin{remark} (Connection between heat equations and Gaussian mixtures)
\label{remark:heatequation}
Consider the heat equation
$$
\frac{\partial u(x,t)}{\partial t} =\Delta_x u(x,t),
\quad\text{where}\quad
\Delta_x u =\sum_i \frac{\partial^2 u}{\partial x_i^2}.
$$
Suppose at $t=0$, $u(x,0)$ is a $k$-spike distribution
$u(x,0)=\sum_{i=1}^k w_i \delta_{\alpha_i}(x)$.
It is well known that for $t>0$,
$u(x,t)=\sum_{i=1}^k w_i N(\alpha_i, 2t)$
(see e.g.,\citet{gorenflo2002moment,MV10}).
Hence, if we know $u(.,t)$ at time $t>0$, we can compute 
the moments and recovering the heat source 
$u(.,0)$ at $t=0$ is equivalent to recovering the means of the Gaussian mixture components (i.e., the spikes) from the moment information,
hence can be solved by the moment problem.
\end{remark}

\subsection{Efficient Algorithm for $d>1$}
\label{sec:highdim-Gaussian}

For higher dimensional Gaussian mixture, 
we can reduce the problem to learning the discrete mixture $\mix = (\veca, \vecw)$,
and leverage Algorithm~\ref{alg:dim3} to solve the problem. 
Assume the locations of all Gaussians are in the unit ball.
The error can be bounded easily according to Theorem~\ref{thm:highdim-kspike}. 

Firstly, as shown in \citet{doss2020optimal}, one can use SVD to reduce a $d$-dimensions problem to a $k$-dimension problem using $\poly(1/\eps, d, k)$ samples. Thus, we only need to consider settings with $d \leq k$.
Similar to the 1d case, we can transform the problem of learning Gaussian mixture $\mix_N = (\veca, \vecw, \Sigma)$
to the problem of learning the discrete mixture $\mix = (\veca, \vecw)$.
In particular, as we show below, we can estimate the moments of the projection of $\mix$ from the samples. After obtaining the noisy moment information, we can 
apply Algorithm~\ref{alg:dim3} to recover the parameter $(\veca, \vecw)$.
We mention that we need to modify Algorithm~\ref{alg:dim3} slightly:
we change to domain from $\Delta_{d-1}$ to the unit ball (it suffices to change the projected domain in Line~\ref{line:dim3-ca} in Algorithm~\ref{alg:dim3} from the simplex to the unit ball). We note this does not affect the proof to Theorem~\ref{thm:highdim-kspike} 
since it only requires the projected space to be a convex domain.

The remaining task is to show how to estimate the projected moments
for 1d and 2d projections.
To estimate the moments of 1-dimensional projection, we can use the estimator in the last section.
In particular, for the projected measure $\proj{\vecr}(\mix)$ where $\vecr$ is an arbitrary unit vector, the $t$th moment can be computed by
\begin{align*}
    M_{t}(\proj{\vecr}(\mix)) = \Exp_{\vecx \sim \mix_N} \left [\sum_{i=0}^{t} h_{t,i} (\vecr^{\top} \Sigma \vecr)^{t-i} (\vecr^{\top} \vecx)^i \right].
\end{align*}
We can use sample average to estimate $M_{t}(\proj{\vecr}(\mix))$,
and we denote the estimation as $\tM_{t}(\proj{\vecr}(\mix))$.

Now, we consider the problem of estimating the moments
for 2-dimensional projection along $\vecr_1$ and $\vecr_2$.
First, we compute the vector $\vecr_2' = \vecr_2 - \frac{\vecr_1^{\top} \Sigma \vecr_2}{\vecr_1^{\top} \Sigma \vecr_1} \vecr_1$.
According to Gram–Schmidt process, 
$\vecr_2'$ is $\Sigma$-orthogonal to $\vecr_1$, i.e.,
$\vecr_1^{\top} \Sigma \vecr_2' = 0$.
Moreover, $\vecr_1^{\top} \vecx$ and $\vecr_2'^{\top} \vecx$ are independent Gaussian distributions for variable $\vecx \sim N(0, \Sigma)$. 
Note that for independent Gaussian variables $x_1 \sim N(\mu_1, 1)$ and $x_2 \sim N(\mu_2, 1)$, we have $\Exp[H_{t_1}(x_1) H_{t_2}(x_2)] = \Exp[H_{t_1}(x_1)] \Exp[H_{t_2}(x_2)] = \mu_1^{t_1} \mu_2^{t_2}$,
where $H$ is the Hermite polynomial defined in \eqref{eq:hermite}.
As a result, the projected moments along $\vecr_1$ and $\vecr_2'$ can be computed by 
\begin{align*}
    M_{(t_1, t_2)}(\proj{[\vecr_1, \vecr_2']}(\mix)) = \Exp_{\vecx \sim \mix_N} \left [ \sum_{i_1=0}^{t_1}  \sum_{i_2=0}^{t_2} h_{t_1,i_1}  h_{t_2,i_2} (\vecr_1^{\top} \Sigma \vecr_1)^{t_1-i_1} (\vecr_2'^{\top} \Sigma \vecr_2')^{t_2-i_2} (\vecr_1^{\top} \vecx)^{i_1} (\vecr_2'^{\top} \vecx)^{i_2} \right].
\end{align*}
Moreover, the projected moments along $\vecr_1$ and $\vecr_2$ can be computed by
\begin{align*}
    M_{(t_1, t_2)}(\proj{[\vecr_1, \vecr_2]}(\mix)) = \sum_{i=0}^{t_2} \binom{t_2}{i} \left(\frac{\vecr_1^{\top} \Sigma \vecr_2}{\vecr_1^{\top} \Sigma \vecr_1}\right)^{i} M_{(t_1+i,t_2-i)}(\proj{[\vecr_1, \vecr_2']}(\mix)) 
\end{align*}
since $\vecr_2^{\top} \vecx = \vecr_2'^{\top} \vecx + \frac{\vecr_1^{\top} \Sigma \vecr_2}{\vecr_1^{\top} \Sigma \vecr_1} \vecr_1^{\top} \vecx$ which implies $(r_2^{\top} \vecx)^{t_2} = \sum_{i=0}^{t_2} \binom{t_2}{i} \left(\frac{\vecr_1^{\top} \Sigma \vecr_2}{\vecr_1^{\top} \Sigma \vecr_1}\right)^{i} (r_1^{\top} \vecx)^{i} (r_2'^{\top} \vecx)^{t_2-i}$.
We use $\tM_{(t_1, t_2)}(\proj{[\vecr_1, \vecr_2]}(\mix))$
to denote the sample average estimation of 
$M_{(t_1, t_2)}(\proj{[\vecr_1, \vecr_2]}(\mix))$.
We are ready to state the performance guarantee:

\begin{proofof}{Theorem~\ref{thm:ndim-Gaussian}}
    We first consider the problems of dimension $d \leq k$. 
    The only thing we need to show is that the above estimators 
    $\tM_{t}(\proj{\vecr}(\mix))$ and $\tM_{t}(\proj{\vecr_1, \vecr_2}(\mix))$
    have sufficient accuracy with 
    high probability using $n = (k /\eps)^{\Theta(k)}$ samples.
    Clearly, both estimators are unbiased. 
    So it is sufficient to bound the variance of the estimators.
    
    For 1-dimensional projected moments, we can use the same argument as Theorem~\ref{thm:1dim-Gaussian}. In particular, we can show that with probability at least $0.999$, for all $0 \leq t \leq 2k - 1$,
    \begin{align*}
        |\tM_{t}(\proj{\vecr}(\mix)) - M_{t}(\proj{\vecr}(\mix))| \leq (k/\eps)^{O(k)}.
    \end{align*}
    
    For 2-dimensional projected moments, we first bound the variance of the estimator along $r_1$ and $r_2'$. With the same argument as Lemma~\ref{lm:var-Gauss}, we have 
    \begin{align*}
        \Var[\tM_{(t_1, t_2)}(\proj{[\vecr_1, \vecr_2']}(\mix))] \leq (\eps / k)^{\Omega(k)}.
    \end{align*}
    for all $t_1 + t_2 \leq 2k$ with probability at lest $0.999$.
    Moreover, since we randomly chooses the same $r_1$ over a sphere of radius $\Theta(1)$ in Line 6 of Algorithm~\ref{alg:dim3}, we have $\vecr_1^{\top} \Sigma \vecr_1 \geq \Omega(\frac{\mu}{k^2} \|\vecr_1\|_2^2 \|\Sigma\|_2)$ with probability at least $1 - \mu$. In this case, we have $$\frac{\vecr_1^{\top} \Sigma \vecr_2}{\vecr_1^{\top} \Sigma \vecr_1} \leq \frac{\|\vecr_1\|_2 \|\Sigma\|_2 \|\vecr_2\|}{\Omega(\frac{\mu}{k^2} \|\vecr_1\|_2^2 \|\Sigma\|_2)} \leq O(k^2/ \mu)$$
    where the last inequality holds because $\vecr_1$ and $\vecr_2$ are of norm $\Theta(1)$. Since $\sqrt{\Var[x + y]} \leq \sqrt{\Var[x]} + \sqrt{\Var[y]}$ for any random variables $x$ and $y$,
    \begin{align*}
        \sqrt{\Var[\tM_{(t_1, t_2)}(\proj{[\vecr_1, \vecr_2]}(\mix))]} \leq \sum_{i=0}^{t_2} \binom{t_2}{i} \left(\frac{\vecr_1^{\top} \Sigma \vecr_2}{\vecr_1^{\top} \Sigma \vecr_1}\right)^{i} \sqrt{\Var[\tM_{(t_1+i, t_2-i)}(\proj{[\vecr_1, \vecr_2']}(\mix))]}.
    \end{align*}
    As a result, by choosing $\mu = 0.001$, we have 
    \begin{align*}
        \Var[\tM_{(t_1, t_2)}(\proj{[\vecr_1, \vecr_2]}(\mix))] \leq (\eps /k)^{\Omega(k)}
    \end{align*} 
    holds for all $\vecr_2$ with probability at least $0.999$. 
    Conditioning on that this event holds, according Chebyshev's inequality and a union bound, with probability at least $1 - 0.001d$, for all $0 \leq t_1, t_2 \leq 2k - 1$, 
    \begin{align*}
        |\tM_{(t_1, t_2)}(\proj{[\vecr_1, \vecr_2]}(\mix))] - M_{(t_1, t_2)}(\proj{[\vecr_1, \vecr_2]}(\mix))]| \leq (\eps / k)^{\Omega(k)}.
    \end{align*}
    Recall that the algorithm uses 1d projection for one $\vecr$, 2d projection for one $\vecr_1$ and $d$ different $\vecr_2$ and the algorithm succeeds with high probability statement for random $\vecr$. Hence, by union bound over all these events, the projected moment estimations have sufficient accuracy with a probability at least $0.995$. 
    
    If the dimension $d>k$,
    we can apply the dimension reduction in \citet{doss2020optimal},
    which requires an extra sample complexity of $\poly(1/\eps, d, k)$. The total sample complexity is $(k/\eps)^{O(k)} + \poly(1/\eps, d, k)$.
\end{proofof}

Finally, we discuss the running time during the sampling phase and post-sampling
phase.
The dimension reduction requires $\poly(1/\eps, d, k)$ samples and 
$O(d^3)$ time \citep{doss2020optimal}.
For the recovery problem in dimension $k$, each $1$-d or $2$-d moment oracle can be computed in $O(nk^3)$ time where $n=(k/\eps)^{O(k)}$ is the number of samples. Since we only require $O(k)$ moment oracles, the sampling time can be bounded by $O(n k^4)$.
This improves the $O(n^{5/4}\poly(k))$ sampling time in \citet{doss2020optimal}
(their algorithm requires 1d projections to $O(n^{1/4})$ many directions.
For the post-sampling running time, our Algorithm~\ref{alg:dim3} 
runs in time $O(k^4)$ (since $d\leq k$ by dimension reduction).
This improves the $O(n^{1/2}\poly(k))$ post-sampling time in \citet{doss2020optimal}.


\section{Other Related Work}
The problem we study can be seen as a sparse version of the classic {\em moment problem}
in which our goal is to invert the mapping that takes a measure to the sequences of moments
 \citep{schmudgen2017moment,lasserre2009moments}. When the measure is supported on a finite interval, the problem is known as the Hausdorff moment problem.

Learning statistical mixture models has been studied extensively for the past two decades. 
A central problem
in this area was the problem of learning a mixture of high-dimensional
Gaussians, even robustly \citep{Das99,DS00,AK01,VW02,KSV05,AM05,FOS06,BV08,KMV10,BS10,MV10,liu2021settling,bakshi2020robustly,wu2020optimal,doss2020optimal}. 
Many other structured mixture models have also been studied (see e.g., \citet{KMRRSS94,CryanGG02,BGK04,MR05,DHKS05,
FeldmanOS05,KSV05,CR08a,DaskalakisDS12,liu2018efficiently}).
Our problem is closely related to topic
models which have also been studied extensively recently \citep{AGM12,AFHKL12,AHK12,arora2018learning}.
Most work make
certain assumptions on the structure of the mixture, such as the pure topics being separated, Dirichlet prior, the existence of anchor words, or certain rank conditions.
Some assumptions (such as \citet{AGM12,AFHKL12}) allow one to use documents of constant length 
(independent of the number of pure topics $k$ and the desired accuracy $\eps$) for recovering all topics.

Our problem is also related to the super-resolution problem in which each measurement takes the form
$
v_\ell=\int e^{2\pi \ell t} \d \mix(t)+\noise_\ell
$
where $\noise_\ell$ is the noise of the measurement. We can observe the first few $v_\ell$ for $|\ell|\leq K$ where $K$ is called {\em cutoff frequency}, and the goal is to recover the original signal $\mix$. 
There is a long history of the estimation problem. The noiseless version
can be solved by 
Prony's method \citep{prony1795}, ESPRIT algorithm \citep{roy1989esprit} or matrix pencil method \citep{hua1990matrix}, if $K\geq k$ (i.e., we have $K=2k+1$ measurements). The noisy case has 
also been studied extensively and a central goal is to understand the relations between the cutoff frequency, the size of measure noises and minimum separation  \citep{donoho1992superresolution,candes2014towards,candes2013super,Moitra15,huang2015super,chen2016fourier}. Various properties, such as the
condition number of the Vandermonde matrix, also play essential roles in this line of study \citep{Moitra15}. The relation between  the Vandermonde matrix and Schur polynomial is also exploited in \citet{chen2016fourier}.

\bibliographystyle{plainnat}
\bibliography{main}

\end{document}